\newsavebox{\imagebox}
\theoremstyle{plain}
\newtheorem{theorem}{Theorem}[section]
\newtheorem{lemma}{Lemma}[section]
\newtheorem{corollary}{Corollary}[section]
\theoremstyle{definition}
\newtheorem{definition}{Definition}[section]
\newtheorem{assumption}{Assumption}[section]
\newtheorem{result}{Result}[section]
\theoremstyle{remark}
\newcommand{\HV}{\textsc{HV}}
\newcommand{\HVI}{\textsc{HVI}}
\newcommand{\EHVI}{\textsc{EHVI}}
\newcommand{\qNParego}{$q$\textsc{NParEGO}}
\newcommand{\NEHVI}{\textsc{NEHVI}}
\newcommand{\qNEHVI}{$q$\textsc{NEHVI}}
\newcommand{\TSHVI}{$q\textsc{NEHVI-1}$}
\newcommand{\mvar}{\textsc{MVaR}}
\newcommand{\var}{\textsc{VaR}}
\newcommand{\cvar}{\textsc{CVaR}}
\newcommand{\iep}{\textsc{IEP}}
\newcommand{\mars}{\textsc{MARS}}
\newcommand{\marsnei}{\textsc{MARS-NEI}}
\newcommand{\marsts}{\textsc{MARS-TS}}
\newcommand{\marsucb}{\textsc{MARS-UCB}}
\newcommand{\nei}{\textsc{NEI}}
\newcommand{\weakmvar}{\textsc{Weak-}\mvar{}}
\newcommand{\gpertop}{\diamond}
\newcommand{\gpert}[2]{{#1} \gpertop {#2}}
\newcommand{\xxi}{\gpert{\bm x}{\bm \xi}} 
\newcommand{\xpxi}{\gpert{\bm x'}{\bm \xi}}
\newcommand{\mvarnehvi}{\mvar\textsc{-NEHVI}}
\newcommand{\mvarnehvirff}{\mvar\textsc{-NEHVI}-RFF}
\newcommand{\expnehvirff}{\textsc{Exp-NEHVI-RFF}}
\DeclareMathOperator*{\argmax}{arg\,max}
\newcommand\reallywidehat[1]{%
\savestack{\tmpbox}{\stretchto{%
  \scaleto{%
    \scalerel*[\widthof{\ensuremath{#1}}]{\kern-.6pt\bigwedge\kern-.6pt}%
    {\rule[-\textheight/2]{1ex}{\textheight}}
  }{\textheight}%
}{0.5ex}}%
\stackon[1pt]{#1}{\tmpbox}%
}
\newenvironment{enumerate_compact}{
\begin{enumerate}
  \setlength{\itemsep}{4pt}
  \setlength{\parskip}{0pt}
  \setlength{\parsep}{0pt}
}{\end{enumerate}}
\newcommand{\papertitle}{Robust Multi-Objective Bayesian Optimization Under Input Noise}
\icmltitlerunning{\papertitle}
\begin{document}

\twocolumn[
\icmltitle{\papertitle}



\icmlsetsymbol{equal}{*}

\begin{icmlauthorlist}
\icmlauthor{Samuel Daulton}{equal,meta,ox}
\icmlauthor{Sait Cakmak}{equal,meta,ga}
\icmlauthor{Maximilian Balandat}{meta}
\icmlauthor{Michael A. Osborne}{ox}
\icmlauthor{Enlu Zhou}{ga}
\icmlauthor{Eytan Bakshy}{meta}
\end{icmlauthorlist}

\icmlaffiliation{meta}{Meta}
\icmlaffiliation{ox}{University of Oxford}
\icmlaffiliation{ga}{Georgia Institute of Technology}

\icmlcorrespondingauthor{Samuel Daulton}{sdaulton@fb.com}
\icmlcorrespondingauthor{Sait Cakmak}{saitcakmak@fb.com}

\icmlkeywords{Machine Learning, ICML}

\vskip 0.3in
]



\printAffiliationsAndNotice{\icmlEqualContribution} 

\begin{abstract}
Bayesian optimization (BO) is a sample-efficient approach for tuning design parameters to optimize expensive-to-evaluate, black-box performance metrics.
In many manufacturing processes, the design parameters are subject to 
random input noise, resulting in 
a product
that is often less performant than expected.
Although BO methods have been proposed for optimizing a single objective under input noise, no existing method addresses the practical scenario where there are multiple
objectives that are sensitive to input perturbations.
In this work, we propose the first multi-objective BO method that is robust to input noise. 
We formalize 
our goal as optimizing the multivariate value-at-risk (\mvar{}), a risk measure of the uncertain objectives. 
Since directly optimizing \mvar{} is computationally infeasible in many settings, we propose a scalable, theoretically-grounded approach for optimizing \mvar{} using random scalarizations. 
Empirically, we find that our approach significantly outperforms alternative methods and efficiently identifies optimal robust designs that will satisfy specifications across multiple metrics with high probability.
\end{abstract}
\vspace{-5ex}
\section{Introduction}
Scientists and engineers frequently face optimization problems where the goal is to tune a set of parameters to optimize multiple competing black-box objective functions. Typically, no single design is best with respect to every objective. Hence, the goal in multi-objective (MO) optimization is to identify the Pareto frontier (PF) of optimal trade-offs between the objectives and the corresponding optimal designs. These problems are ubiquitous in a variety of domains including manufacturing~\citep{liao2008}, materials design~\citep{ASHBY2000359}, robotics~\citep{Calandra2014ParetoFM}, and machine learning~\citep{izquierdo2021bag,eriksson2021latencyaware}. Obtaining measurements of the objectives often requires resource-intensive simulation or experimentation. Therefore, any practical routine for optimizing such functions must be highly sample-efficient; that is, the method must identify optimal design parameters while only querying the objective functions at a small number of designs. Bayesian optimization (BO)~\citep{shahriari16review} is a popular technique for addressing this class of problems. 
\begin{figure*}[t]
    \centering
    \includegraphics[width=\textwidth]{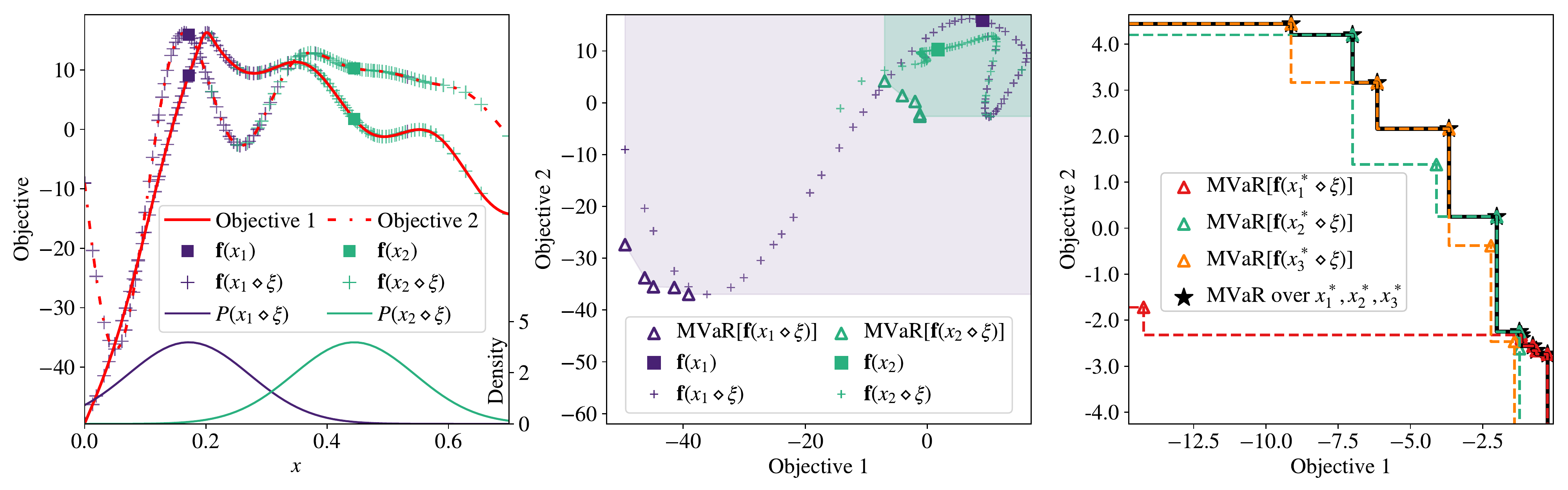}
    \vspace{-4ex}
    \caption{\label{fig:1d_toy}A toy problem where the goal is to tune a single parameter $x$ to maximize two objectives. Left: The nominal values for a non-robust (purple, $x_1$) and a robust design design (green, $x_2$) are indicated using squares. The plus markers illustrate objective values of each design under zero-mean Gaussian input noise with a standard deviation of 0.1. The non-robust design can lead to low objective values under input perturbations. Center: An illustration of the \mvar{} sets of the non-robust and robust designs. The triangles represent a discrete approximation of the \mvar{} set of each design under the input noise distribution. For each point $\bm z$ in the \mvar{} set of a given design, the objective values for that design subject to input perturbations are $\geq \bm z$ with probability $\alpha \geq 0.9$. In other words, the objectives under input noise for each design will fall in the respective shaded region with probability $\geq \alpha$. Under input noise, the non-robust design $x_1$ may result in poor objective values despite yielding better values (relative to the robust design $x_2$) without perturbations. After accounting for input noise, $x_2$ is a more robust solution than $x_1$. Right: The \mvar{} set (black stars) across three optimal designs $x_1^*, x_2^*, x_3^*$ is the set of optimal points across the union of the \mvar{} sets (colored triangles) of each design.}
    \vspace{-2ex}
\end{figure*}

In many real-world applications, the final implementation of the selected design parameters are subject to 
input noise, resulting from uncontrollable perturbations of the input parameters~\citep{BEYER2007}.
For example, many vaccine production processes involve freeze-drying procedures to stabilize active pharmaceutical ingredients to increase storage lifetime~\citep{MORTIER201671, pharma_robust}. Tuning the operating parameters, such as shelf temperature and chamber pressure, can significantly improve the efficiency of the drying step with little reduction in product quality. However, shelf temperature and chamber pressure are subject to uncontrollable random input noise around the \emph{nominal}\footnote{\label{footnote:nominal}We call the design as specified by the decision maker the \emph{nominal design}, and the corresponding value without any perturbations as the \emph{nominal values} for the \emph{nominal objectives}. } input parameters.
Robustness with respect to this input noise is critical. Higher temperatures lead to greater efficiency, but also make the process more sensitive to perturbations in the temperature because if temperature exceeds the critical collapse threshold, there is irreversible product damage. A more conservative temperature that is robust to input noise may be a better choice than a higher temperature that is worse with respect to the \emph{nominal objectives}.
In such high-stakes and high-throughput production pipelines, decision-makers seek to identify robust design parameters that ensure the manufactured products will have high objective quality with high probability. 


Optimization without consideration of input noise can lead to solutions that are catastrophic when subjected to input noise at the implementation stage \citep{DOLTSINIS2004}.
The toy problem that is illustrated in left plot of Figure~\ref{fig:1d_toy} demonstrates a scenario where a non-robust design (depicted as a purple square) is better than the robust design (marked by a green square) with respect to the nominal values. However, the performance metrics for the non-robust design can be significantly worse when the design is subject to input noise. In contrast, the objectives are far less sensitive to input perturbations around the robust design. 

To identify designs that are robust to a noisy performance metric due to input noise or observation noise, the value-at-risk (\var{}) is often used as the robust objective because it provides high-probability performance guarantees \citep{Basel2012}. The $\alpha$ \var{} is the $(1-\alpha)$-quantile (where $0 \leq \alpha \leq 1$) of the cumulative distribution function (CDF) of the performance metric. Under input noise, variation in the objective is induced via the uncertain inputs. Intuitively, the \var{} is the largest value such that the objective value of a given design subject to input perturbations will be greater than that value with probability at least  $\alpha$. Thus, in the context of manufacturing, a design with $\alpha$ \var{} exceeding the target specification produces a yield of at least $\alpha$.

Many recent works have considered BO methods that are robust to input noise in the single objective setting, e.g., by optimizing \var{}~\citep{nguyen2021valueatrisk} or other risk measures~\citep{nes,Cakmak2020borisk, nguyen2021optimizing}. However, no previous work has considered sample-efficient, generally-applicable multi-objective Bayesian optimization (MOBO) methods that are robust to input perturbations. To our knowledge, the only existing MO methods that are robust to input noise are evolutionary algorithms (EA) that often require tens of thousands of evaluations (e.g., \citet{deb2005robustEA}).

In the MO setting, high-probability performance guarantees of a single design can be assessed using the multivariate value-at-risk (\mvar{}) \citep{Prekopa2012MVaR}.
\mvar{} maps a probability value $\alpha$ to a set of vectors where each element provides a lower bound on the the objectives' possible values under input noise with probability $\alpha$.
As illustrated in the center plot of Figure~\ref{fig:1d_toy}, the \mvar{} set of a robust design ($x_2$) often provides significantly better probabilistic lower bounds than the \mvar{} set of a non-robust design ($x_1$).

Similar to the PF in the standard MO setting---where the PF is the set of optimal trade-offs between objectives with no input noise---the \emph{global} \mvar{} set is the set of optimal trade-offs that can be achieved with probability $\geq \alpha$ under input noise across all possible designs. The \mvar{} across a set of 3 designs is illustrated in the right plot of Figure~\ref{fig:1d_toy}.

While \mvar{} provides a natural measure of robust performance guarantees, it is relatively expensive to compute, 
making it computationally challenging to directly optimize \mvar{} in the context of MOBO (see Appendix \ref{appdx:mvar-nehvi} for a discussion). 
In this work, we propose a family of novel, theoretically-grounded methods for optimizing \mvar{} via random scalarizations that mitigates many challenges associated with optimizing \mvar{} directly.

\textbf{Contributions}
\vspace{-3ex}
\begin{enumerate_compact}
    \item We introduce robust MO optimization under input noise, formalize the problem in terms of optimizing global \mvar{}---a novel, probabilistic form of a robust PF---and discuss computational challenges unique to this setting.
    %
    %
    
    \item We derive a novel theoretical connection between the \mvar{} and the \var{} of a particular scalarization of the objectives, which motivates a family of computationally efficient BO methods for identifying \mvar{}-optimal designs using an \mvar{} Approximation based on Random Scalarizations (\mars{}).
    
    \item We demonstrate that \mars{} vastly outperforms non-robust alternatives on a variety of synthetic and real-world robust MO optimization problems, including a pharmaceutical manufacturing application.
    \item We derive and evaluate extensions of our methods to handle 
    expensive-to-evaluate black-box constraints and parallel candidate generation.
\end{enumerate_compact}


\vspace{-2.5ex}
\section{Background} 
\textbf{Multi-Objective Optimization}
In MO optimization, the goal is to, without loss of generality, maximize a vector-valued black-box function $\max_{\bm x \in \mathcal X} \bm f(\bm x)$ where $\bm f(\bm x):= [f_1(\bm x),\ldots, f_M(\bm x)], M\geq 2$, and $\mathcal X \subset \mathbb R^d$ is a compact search space. Often, there may be additional black-box constraints $\bm c(\bm x) \geq \bm 0$, where $\bm c(\bm x)\in \mathbb R^V, ~V > 0$, that must be satisfied. We consider the setting where $\bm f$ and $\bm c$ have no known analytic expressions and no known or observed gradients. The goal is to identify the Pareto frontier (PF) of optimal trade-offs and corresponding Pareto set of optimal designs $\mathcal X^*$.

\textbf{Notation} For vectors $\bm y, \bm y' \in R^M$. Let $\geq$ and $>$ denote the component-wise extensions of the order notations $\geq$ and $>$, i.e., $\bm y \geq \bm y' \iff y_i \geq y_i'~\forall~ i$ and $\bm y> \bm y' \iff y_i >  y_i' ~\forall~ i$, where $\cdot_i$ denotes the $i^\text{th}$ element.

\begin{definition} \label{def:pareto-dominance}
A vector $\bm f(\bm x)$ \emph{Pareto dominates} $\bm f(\bm x')$, denoted by $\bm f(\bm x) \succ \bm f(\bm x')$, if $\bm f(\bm x) \geq \bm f(\bm x')$ and $\exists~ j \in \{1, \dotsc, M\}$ such that $ f^{(j)}(\bm x) > f^{(j)}(\bm x')$. 
\end{definition}

\begin{definition}
The \emph{Pareto frontier} over a set of objective vectors $\mathcal F = \{\bm f(\bm x) ~|~ \bm x \in X \subseteq \mathcal X\}$ is $\textsc{Pareto}(\mathcal F ) = \{\bm f(\bm x) \in \mathcal F~ : ~ \nexists ~\bm x' \in X ~s.t.~  \bm f(\bm x') \succ \bm f(\bm x)\}.$
If there are constraints $\bm c(\bm x)$, elements of \textsc{Pareto} are subject to the additional membership assumption that $\bm c(\bm x) \geq \bm 0$. We call the corresponding set of optimal designs the \emph{Pareto set}.
\end{definition}

Although the true PF $\mathcal P^* = \textsc{Pareto}(\{\bm f(\bm x)\}_{\bm x \in \mathcal X})$ is typically unknown, an MO optimization algorithm can be employed to identify a finite approximation. Hypervolume is a commonly used metric to measure the quality of a PF.

\begin{definition}
The \emph{hypervolume} (\HV{}) \emph{indicator} of a set of points $\mathcal Y \subset \mathbb R^M$ is the $M$-dimensional Lebesgue measure $\lambda_M$ of the region dominated by $\mathcal P := \textsc{Pareto}(\mathcal Y)$ and bounded from below by a reference point $\bm r \in \mathbb R^M$, which we write as $\HV{}(\mathcal Y, \bm r)$.
\end{definition}

\begin{definition}
\label{def:hvi}
The \emph{hypervolume improvement} (\HVI{}) of a set of points $\mathcal Y'$ with respect to an existing Pareto frontier $\mathcal P$ and reference point $\bm r$ is defined as $\HVI{}(\mathcal Y' | \mathcal P, \bm r) = \HV{}(\mathcal P \cup \mathcal Y'| \bm r) -   \HV{}(\mathcal P | \bm r)$.\footnote{Henceforth, we omit $\bm r$ for brevity. As in previous work, we assume that the reference point $\bm r$ is known and specified by the decision maker \citep{yang2019}.}
\vspace{-1.5ex}
\end{definition}

\textbf{Bayesian Optimization (BO)} is a sample-efficient technique for optimizing black-box functions \citep{frazier2018tutorial}. BO relies on a probabilistic surrogate model---typically a Gaussian process (GP), which provides well-calibrated uncertainty estimates---and an acquisition function that leverages the surrogate model to quantify the value of evaluating the objective functions for a design $\bm x$. The acquisition function balances exploring areas with high uncertainty and exploiting regions believed to be optimal.
BO selects the next point $\bm x$ to evaluate by maximizing the acquisition function (which is cheap to evaluate relative to the objective functions), observes a (potentially noisy) measurement of the metrics $\bm y$, adds the new observation to the dataset $\mathcal D = \{(\bm x, \bm y)\} \cup \{(\bm x_i, 
\bm y_i) \}_{i=1}^n $, updates the surrogate model to incorporate the new observation, and repeats this process for a predetermined budget of evaluations. In the MO setting, a common approach is to optimize a random scalarization of the objectives using a single objective acquisition function, such as expected improvement \citep{parego} or Thompson sampling \citep{paria2018flexible}. Alternatively, a multi-objective acquisition function can be used to directly optimize the PF. For example, expected hypervolume improvement (\EHVI{}) \citep{emmerich2006} aims to maximize the \HV{} of the PF under the surrogate model's posterior distribution.

\vspace{-2ex}
\section{Related Work}
Many recent works on BO have considered settings where at the time of implementation either (i) the parameters are subject to noise \citep{Bogunovic2018Robust, oliveira19uncertaininputs, nes}---as we consider in this work---or (ii) the system performance depends on auxiliary unknown environmental variables \citep{kirschner20arobust, iwazaki21ameanvariance}. While previous work has focused on optimizing the expected \citep{toscano-palmerin2018expectation} or the worst-case performance \citep{URREHMAN2014worstcase, sessa2020mixed}, a recent body of work has focused on optimizing more sophisticated risk measures \citep{torossian2020bayesian, Cakmak2020borisk, nguyen2021optimizing,nguyen2021valueatrisk}. However, despite recent significant interest in non-robust MOBO \citep{diversity, suzuki2020multiobjective}, to our knowledge, no prior work has studied robust MOBO.



Motivated by practical limitations due to manufacturing tolerances, \citet{Malkomes2021cas} proposed constraint active search (CAS), which aims to identify diverse solutions in the region of the search space that exceeds a minimum threshold on the objectives. However, CAS does not model or account for input noise, and CAS alone cannot produce any guarantees on robustness to input noise. Methods such as CAS would require a post-hoc analysis using the data collected during optimization to analyze the sensitivity of the solutions to input noise \citep{Calandra2014ParetoFM}. 

Approaching robust design by decoupling data collection and sensitivity analysis is central to the Taguchi method \citep{taguchi1989}. Data acquisition often revolves around finding designs that balance the mean and variance of the sensitive objective under input noise \citep{BEYER2007}. \citet{DO2021robustBO} propose an approach in this vein for the two-objective setting where only one objective is subject to input noise. However, the algorithm does not seek to identify trade-offs with high probability robustness guarantees, and the method does not handle multiple sensitive objectives. In contrast with the Taguchi method, we aim to unify data collection and sensitivity analysis by selecting designs that are believed to yield high-probability performance guarantees.


Outside the BO literature, robust MO optimization has been studied using either EAs \citep{Gupta2005robustconstraintmoo, Zhenan2019robustEA} or assuming access to the explicit mathematical programming formulation of the problem \citep{MAJEWSKI2017robustmoo, ROBERTS2018robustmoo}. Those works have focused on finding the Pareto frontier of the expectation or the worst-case objectives or on finding the Pareto frontier of the nominal objectives with additional constraints on the deviation from the nominal values \citep{deb2005robustEA, avigad2008worstcaseEA}. Some works have considered conceptual properties of different scalarization methods \citep{ide2014}, but not in relation to \mvar{}. EAs that are robust to input noise are not applicable to the small evaluation budget regime that we consider because they typically require a large number of function evaluations \citep{deb2005robustEA}. Even methods that combine EAs with GPs require thousands of evaluations per design \citep{zhou18}.

As a final differentiator from prior work, we consider the practical setting where there are additional black-box constraints that are sensitive to input noise \citep{marzat2013, li2015optimal},
which is a subject addressed by only a few BO methods \citep{Beland2017BayesianOU} even in the single objective case.

\vspace{-2ex}
\section{Multi-Objective Optimization with Noisy Inputs}

In many practical scenarios, the nominal performance of a design can be evaluated by means of a simulation (e.g., by simulating the pharmaceutical process under nominal operating conditions). We consider the setting where we can simulate $\bm f(\bm x)$ for any given design $\bm x \in \mathcal{X}$, but that the design is subject to noise $\bm \xi(\bm x)$ from a known noise process $\bm \xi(\bm x) \sim P(\bm \xi; \bm x)$ at implementation time.\footnote{For brevity, we omit the dependency on $\bm x$ in our notation and write $\bm \xi$ and $P(\bm \xi)$ going forward.}
%
The realized system performance is given by the random variable $\bm f(\xxi)$, where $\xxi$ denotes any known function $g(\bm x,\bm \xi)$ (e.g. for additive noise $\gpertop$ is simply $+$). 
For an extended problem formulation including black-box constraints, see Appendix~\ref{appdx:const}.

In robust optimization, the goal is often to optimize a \emph{risk measure} $\rho[\bm f(\xxi)]$ that maps a random variable to a statistic of its distribution. A common risk measure is the expectation over the input noise distribution \citep{deb2005robustEA,toscano-palmerin2018expectation, nes}, $\mathbb E_{\bm \xi \sim P(\bm\xi)}[\bm f(\xxi)]$, which can be used instead of the random variable $\bm f(\xxi)$ and optimized via standard multi-objective optimization methods. We propose the first MOBO methods for optimizing expectation objectives in Appendix~\ref{appdx:expectation-optimization}.
Despite its widespread use, the expectation risk measure may not always align with the practitioner's true robustness goals. Often, one would prefer solutions with objectives that are better than some performance specification $\bm z \in \mathbb{R}^M$ with high probability (e.g. to maximize production yield) \citep{sarykalin}. Hence, in the single-objective setting, probabilistic risk measures such as value-at-risk (\var{}) are frequently used.

\begin{definition}
\label{def:var}
Given input noise $\bm \xi \sim P(\bm \xi)$ where $\bm \xi \in \mathbb R^d$ and a confidence level $\alpha \in [0,1]$, the value-at-risk for a given point $\bm x$ is:
\begin{equation*}
    \vspace{-0.5ex}
    \var{}_{\alpha}\big[f(\xxi) \big] = \sup \{z \in \mathbb R: P\big[f(\xxi) \geq z\big] \geq \alpha \}.
    \vspace{-1.5ex}
\end{equation*}
\end{definition}

Although several BO methods exist for optimizing \var{} \citep{Cakmak2020borisk, nguyen2021valueatrisk}, they cannot directly be used  in the MO setting because \var{} is not defined for multivariate random variables. A n\"{a}ive way to extend \var{} to the MO setting would be to consider the \var{} of each objective independently. However, this ignores the fact that all $M$ objectives are evaluated at the same realization of $\xxi$.
Considering the \var{} of each objective independently typically leads to overly optimistic risk estimates because objectives under input noise are not typically \emph{simultaneously} greater than or equal to their respective independent \var{}s (i.e. the $(1~-~\alpha)$~-~quantiles) with probability $\geq \alpha$.
Thus it is important to use risk measures such as multivariate value-at-risk (\mvar{}) that account for the joint distribution of the objectives \citep{Prekopa2012MVaR}.

This is illustrated in the center plot in Figure \ref{fig:1d_toy}. Under input noise, the objective values are correlated, which underscores the importance of accounting for the joint distribution of the objectives in measures of robustness. In addition, the center plot in Figure~\ref{fig:1d_toy} shows that $\bm f(\xxi)$ has an asymmetric distribution, even for this very simple and well-behaved toy problem, which highlights how applying \var{} to each objective independently or using the expectation risk measure may conceal underlying variation and risk. Indeed, the results in  Appendix~\ref{appdx:additional_experiments} show that optimizing an expectation risk measure on this problem results in poor performance.


In contrast with \var{} and the expectation risk measure which map a random variable to \emph{single} scalar or vector, \mvar{} maps a random variable to a non-dominated \emph{set} of vectors in the outcome space that are dominated by $\alpha$-fraction of all possible realizations, where $\alpha \in [0,1]$ is a hyperparameter set by the practitioner.  That is, each vector in the \mvar{} set corresponds to an objective specification that a design will meet with probability $\geq\alpha$. Therefore, $\alpha$ is an interpretable risk level that can be valuable in manufacturing applications where one wishes to find the PF of all objective specifications with a guaranteed yield fraction ($\alpha$).
\vspace{-2ex}
\begin{definition}
\label{def:mvar}
The $\mvar$ of $\bm f$ for a given point $\bm x$ and confidence level $\alpha \in [0,1]$ is:
\begin{equation*}
\begin{aligned}
    \mvar{}_\alpha\big[\bm f(\xxi&) \big] =\\ \textsc{Pareto}&\big(\big\{ \bm z \in \mathbb R^M : P\big[\bm f(\xxi) \geq \bm z\big] \geq \alpha\big\}\big).
\end{aligned}
\end{equation*}
\end{definition}
\vspace{-1ex}
The \mvar{} set over $X$ specifies objective vectors~$\bm z$ such that there exists a known design $\bm x \in X$ with corresponding random objectives $\bm f(\xxi)$ under $P(\bm \xi)$ that dominate $\bm z$ with probability $\geq \alpha$.
\begin{definition}
\label{def:global_mvar}
The $\mvar$ for a set of points $X$ is:
\begin{equation*}
\begin{aligned}
    \mvar{}_\alpha\big[\{\bm f(\xxi&)\}_{\bm x \in X} \big] =\\
    &\textsc{Pareto}\bigg( \bigcup_{\bm x \in X} \mvar{}_\alpha\big[\bm f(\xxi) \big]\bigg).
\end{aligned}
\end{equation*}
\end{definition}
\vspace{-2ex}
The global \mvar{} across the design space, $\mvar{}_\alpha\big[\{\bm f (\xxi)\}_{\bm x \in \mathcal X}\big]$, is a robust analogue of the PF in the standard MO setting. The concept of the \mvar{} of a set of design points $X$ is a novel contribution of this work. 

\textbf{Optimization Goal} In this work, our goal is to identify the \mvar{} set across the design space: $\mvar{}_\alpha\big[\{\bm f (\xxi)\}_{\bm x \in \mathcal X}\big]$. Given an approximate \mvar{} set across the design space, a decision-maker can pick a design according to their preferences. Similar to the standard MO setting, the \HV{} of the \mvar{} set across the design space can be used to evaluate optimization performance. 

\begin{figure*}[ht!]
    \centering
    \includegraphics[width=\linewidth]{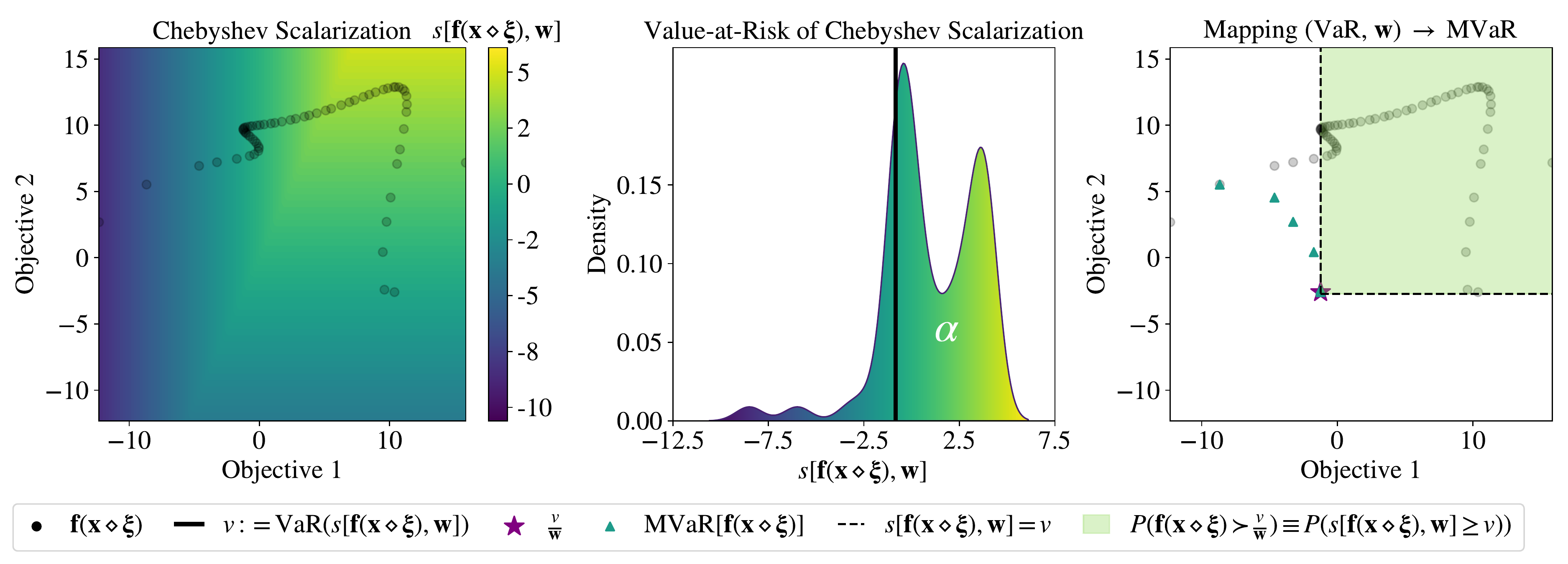}
    \vspace{-3.5ex}
    \caption{\label{fig:bijection} Construction of \mvar{} sets via random scalarizations for the 1-d example in Figure~\ref{fig:1d_toy}. Left: The function values for a single design with zero-mean Gaussian input perturbations with a standard deviation of 0.1 are marked by black points. The background is a contour showing the value of a Chebyshev scalarization across the objective space. Center: The probability density of a Chebyshev scalarization over the function values under input noise and the value-at-risk $v$ of a Chebyshev scalarization for $\alpha=0.9$. The probability mass to the right of the black line is equal to $\alpha$. Right: Leveraging Theoream~\ref{thm:bijection}, $\var{}, \bm w$ can be mapped to point in the \mvar{} set that is dominated by the objectives under input perturbations 90\% of the time. The green triangles represent a discrete approximation of the \mvar{} set with 64 samples from the input noise distribution.
    The green area indicates the region that dominates the identified \mvar{} point, or, equivalently, the area for which the Chebyshev scalarization defined by $\bm w$ is greater than $v$.}
\vspace{-2ex}
\end{figure*}
\vspace{-2ex}
\section{Optimizing \mvar{}}
\label{sec:opt_mvar}

A natural approach for optimizing \mvar{} is to directly maximize the \HV{} dominated by the \mvar{} set. 
Although \mvar{} of a given point typically cannot be evaluated directly,
it can be approximated using $n_{\bm{\xi}}$ MC samples of $\bm \xi$, provided that independent samples can be draw from the noise process.
Thus, evaluating the \mvar{} set across the previously evaluated designs using the surrogate requires sampling from the posterior of $P(\bm f| \mathcal D)$ evaluated jointly at $\gpert{\bm x_1}{\bm\xi_i}, \dotsc, \gpert{\bm x_n}{\bm\xi_i}$ for $i=1, ..., n_{\bm{\xi}}$, where $X_{1:n} := \{\bm x_1, ..., \bm x_n\}$ are the previously evaluated designs.
Since $\{\bm f(\xpxi)\}_{\bm x' \in X_{1:n}}$ is typically not observed,
the corresponding posterior predictions may have large uncertainties. 
In order to get a reliable estimate of \mvar{},
we would need to integrate over the posterior distribution of $\{\bm f(\xpxi)\}_{\bm x' \in X_{1:n}}$. \qNEHVI{} \citep{daulton2021parallel} is a variant of \EHVI{} that integrates over the uncertainty in function values at previously evaluated designs. This makes \qNEHVI{} suitable for optimizing \mvar{}. 

However, several computational issues---including time complexity that is exponential in the number of objectives and exponential in the size of $\mvar{}_\alpha\big[\bm f(\xxi) \big]$---make it infeasible to directly optimize \mvar{} with \qNEHVI{} in many settings. We defer a detailed discussion to Appendix~\ref{appdx:direct-mvar-opt} and present an empirical evaluation in Appendix~\ref{appdx:additional_experiments}.
\vspace{-1ex}
\subsection{Relationship between \mvar{} and 
Scalarizations}
\label{subsec:theory_cheby}
An alternative to direct optimization of the \mvar{} set is to apply a scalarization to the objectives and use a standard risk measure on the scalarized objective. Unlike the use of independent risk measures on each objective, this approach accounts for the correlation between outcomes induced by the input perturbation. 
In this section, we present our main theoretical result: under limited assumptions, there exists a bijection, based on \var{}, that maps a particular family of scalarizations---Chebyshev scalarizations \citep{Kais99}---to points in the \mvar{} set. In other words, each point in the \mvar{} set corresponds to a particular set of scalarization weights. This means that we can recover the entire \mvar{} set using these scalarizations, without any loss. Proofs and additional theoretical results including extensions to the constrained setting are provided in Appendix~\ref{appdx:proofs}.
\begin{definition}
\label{def:chebyshev}
Let $\bm w \in \Delta_+^{M-1}$, where $\Delta_+^{M-1}$ denotes the positive $(M-1)$-simplex, and let $\bm r \in \mathbb R^M$. The Chebyshev scalarization $s[\bm y, \bm w, \bm r]$ is given by $s[\bm y, \bm w, \bm r] = \min_i w_i(y_i-r_i)$, where $\cdot_i$ denotes the $i^{th}$ dimension.\footnote{Typically, $\bm f$ is scaled to the unit cube using the $\bm r$ as the lower bound before applying the scalarization. Since the scaled reference point is~$\bm 0$, hence forth, we omit $\bm r$ for brevity. See Appendix~\ref{appdx:method_details} for details.} 
\vspace{-0.5ex}
\end{definition}

The contour in the left plot in Figure~\ref{fig:bijection} shows the Chebyshev scalarization for a fixed $\bm w$ for the two-objective toy problem from Figure~\ref{fig:1d_toy} and illustrates a connection between Pareto dominance and the Chebyshev scalarization, which we formalize below. The black points are function values under sampled perturbations for a single design $\bm x$. The center plot in Figure~\ref{fig:bijection} shows the distribution of Chebyshev scalarization values for a given $\bm w$ and the black line indicates the $\alpha$-level \var{}. The right plot in Figure~\ref{fig:bijection} illustrates that using the \var{} of a Chebyshev scalarization, we can deduce a point $\bm z$
such that the function values under the input perturbations will dominate 
$\bm z$ 
with probability $\geq\alpha$.
\begin{restatable}[\var{} of Chebyshev scalarization $\Rightarrow$ Pareto Dominance]{lemma}{dominancecdf}
\label{lemma:dominance_cdf}
Let 
$v = \var{}_{\alpha}\big(s[\bm f(\xxi), \bm w]\big)$.
Then, $P\big[\bm f(\xxi) \geq \frac{v}{\bm w}\big] \geq \alpha$, where $\frac{v}{\bm w}$ denotes element-wise division.
\end{restatable}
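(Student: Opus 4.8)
The plan is to reduce the multivariate dominance statement to a one-dimensional statement about the scalarized random variable, and then to exploit the definition of \var{} as a supremum. First I would set $s := s[\bm f(\xxi), \bm w] = \min_i w_i f_i(\xxi)$ and record the crucial set identity: since every $w_i > 0$ (because $\bm w \in \Delta_+^{M-1}$),
\begin{align*}
\{s \geq v\} &= \Big\{\min_i w_i f_i(\xxi) \geq v\Big\} = \Big\{w_i f_i(\xxi) \geq v \ \forall i\Big\} \\
&= \Big\{\bm f(\xxi) \geq \tfrac{v}{\bm w}\Big\},
\end{align*}
where the final event is component-wise dominance. Consequently $P\big[\bm f(\xxi) \geq \frac{v}{\bm w}\big] = P[s \geq v]$, so it suffices to prove the scalar inequality $P[s \geq v] \geq \alpha$.

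Next I would analyze the survival function $G(z) := P[s \geq z]$, which is non-increasing in $z$. By Definition~\ref{def:var}, $v = \sup\{z : G(z) \geq \alpha\}$, and since $G$ is non-increasing the feasible set $\{z : G(z) \geq \alpha\}$ is an interval that is unbounded below. The entire content of the lemma is that the supremum $v$ \emph{itself} still satisfies $G(v) \geq \alpha$, which is where the limiting behaviour of $G$ matters. I would establish that $G$ is left-continuous: for any sequence $z_n \uparrow v$ the events $\{s \geq z_n\}$ form a decreasing family with $\bigcap_n \{s \geq z_n\} = \{s \geq v\}$, so by continuity of a probability measure from above, $G(z_n) \to G(v)$. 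Choosing the $z_n < v$ to lie in the feasible set (so that $G(z_n) \geq \alpha$) then gives $G(v) = \lim_n G(z_n) \geq \alpha$, which combined with the set identity completes the argument.

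The main obstacle is precisely this attainment-at-the-supremum issue: \var{} is defined as a supremum, and in general the supremum of a set need not belong to the set, so one cannot immediately conclude $G(v) \geq \alpha$ from the fact that $G(z) \geq \alpha$ holds for all $z < v$. The left-continuity of the survival function $G$ (equivalently, the standard right-continuity of the CDF of $-s$) is exactly what rescues the boundary case and must be invoked with care; under the opposite quantile convention or with strict inequalities this step could fail. Everything else---the set identity and the monotonicity of $G$---is routine and relies only on $w_i > 0$ together with the definition of the Chebyshev scalarization.
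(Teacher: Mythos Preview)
Your proposal is correct and follows essentially the same route as the paper: establish the set identity $\{s[\bm f(\xxi),\bm w]\geq v\}=\{\bm f(\xxi)\geq v/\bm w\}$ (the paper does this via Lemma~\ref{lemma:bool_equivalence}) and then conclude $P[s\geq v]\geq\alpha$ from the definition of \var{}. The one difference is that you actually justify the attainment-at-the-supremum step via left-continuity of the survival function, whereas the paper simply asserts $P(s[\bm f(\xxi),\bm w]\geq v)\geq\alpha$ directly from Definition~\ref{def:var}; your treatment of that boundary issue is more careful than the paper's own argument.
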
 
\vspace{-1ex}
The condition in Lemma~\ref{lemma:dominance_cdf} is one criterion for membership in the \mvar{} set (the other being Pareto efficiency). The shaded region in right plot in Figure~\ref{fig:bijection} illustrates the region that dominates $\bm z$. With probability $\geq \alpha$, $\bm f(\xxi)$ will fall within the shaded region. \emph{This result enables translating the \var{} of a Chebyshev scalarization into an interpretable, high-probability guarantee on robust performance in terms of Pareto dominance.}



\begin{assumption}
\label{con:cont_cdf} $\bm f(\xxi)$ has a continuous, strictly increasing CDF $F$. I.e., if $\bm f(\bm x) \succ \bm f(\bm x')$, then  $F\big[\bm f(\bm x )\big] > F\big[\bm f(\bm x')\big]$.\footnote{Assumption~\ref{con:cont_cdf} holds when $\bm f$ is a function sampled from a GP prior with many commonly used covariance functions. See Lemma~\ref{lemma:cont_rv} for a formal statement and Appendix~\ref{appdx:gp_cdfs} for proof.}
\end{assumption}

If Assumption~\ref{con:cont_cdf} is met, then for any $\bm w$, $\frac{v}{\bm w}$ is not dominated by any other point in the \mvar{} set, and hence, $\frac{v}{\bm w}$ is an element of the \mvar{} set. 
%
Furthermore, we 
have the following: 
\vspace{-0.1ex}
\begin{restatable}[$\mvar{} \iff \var{}$ of Chebyshev  scalarization] {theorem}{bijection}
\label{thm:bijection}
Given $\bm x, \bm f,$ and $P(\bm \xi)$, let $h: \mvar{}_\alpha\big[\bm f(\xxi)\big] \rightarrow \Delta_+^{M-1}$ be the function $h(\bm z) = \bm w = \frac{1}{\bm z||\frac{1}{\bm z}||}$. Under Assumption~\ref{con:cont_cdf}, $h(\cdot)$ is bijective and $h^{-1}(\bm w) = \bm z = \frac{1}{\bm w}\var{}_\alpha\big(s[\bm f(\xxi), \bm w]\big)$. 
\end{restatable}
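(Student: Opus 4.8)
The plan is to exhibit an explicit two-sided inverse for $h$. I define the candidate inverse $g:\Delta_+^{M-1}\to\mathbb R^M$ by $g(\bm w)=\frac{1}{\bm w}\var_\alpha\big(s[\bm f(\xxi),\bm w]\big)$ and show that (i) $g$ maps into $\mvar_\alpha[\bm f(\xxi)]$, (ii) $h\circ g=\mathrm{id}_{\Delta_+^{M-1}}$, and (iii) $g\circ h=\mathrm{id}_{\mvar_\alpha[\bm f(\xxi)]}$. Exhibiting a two-sided inverse immediately yields that $h$ is a bijection with $h^{-1}=g$, which is exactly the claim. Throughout I take $\|\cdot\|$ to be the $\ell_1$ norm and restrict to \mvar{} points $\bm z$ that strictly dominate the reference point $\bm 0$ (the regime relevant for hypervolume, per the scaling convention of Definition~\ref{def:chebyshev}), so that $1/\bm z$ and hence $h(\bm z)=\frac{1/\bm z}{\|1/\bm z\|}$ are well defined and $h(\bm z)\in\Delta_+^{M-1}$.

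The first thing I would record is the algebraic identity underlying $h$: writing $\bm w=h(\bm z)$ gives $w_iz_i=\big(\textstyle\sum_j 1/z_j\big)^{-1}$ independent of $i$, i.e. $h$ selects the weights that \emph{equalize} the scalarized coordinates, so $\bm z=c/\bm w$ with common value $c=\min_i w_iz_i=s[\bm z,\bm w]$. Claim (ii) is then a direct computation: for $g(\bm w)=v/\bm w$ with $v=\var_\alpha(s[\bm f(\xxi),\bm w])$ we have $1/g(\bm w)=\bm w/v$, whose $\ell_1$ norm is $\|\bm w\|_1/v=1/v$ since $\bm w$ lies on the simplex, so $h(g(\bm w))=\bm w$. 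Claim (i) is exactly the statement asserted immediately before the theorem, which follows from Lemma~\ref{lemma:dominance_cdf} together with Assumption~\ref{con:cont_cdf}: $v/\bm w$ satisfies $P[\bm f(\xxi)\ge v/\bm w]\ge\alpha$ and, under the continuous strictly increasing CDF, is Pareto-efficient among such points, hence lies in the \mvar{} set.

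The crux, and the step I expect to be the main obstacle, is claim (iii). I would first rewrite the scalarized survival probability as $P\big[s[\bm f(\xxi),\bm w]\ge t\big]=P\big[\bm f(\xxi)\ge t/\bm w\big]$, using that $\min_i w_if_i\ge t\iff \bm f\ge t/\bm w$; consequently $v(\bm w)=\sup\{t:P[\bm f(\xxi)\ge t/\bm w]\ge\alpha\}$. Now fix $\bm z\in\mvar_\alpha[\bm f(\xxi)]$, set $\bm w=h(\bm z)$, and let $c=w_iz_i$ be the common value above, so $\bm z=c/\bm w$. Since $\bm z$ is feasible, $P[\bm f(\xxi)\ge c/\bm w]\ge\alpha$, giving $v(\bm w)\ge c$. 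For the reverse inequality I argue by contradiction: if $v(\bm w)>c$, then by definition of the supremum there is $t'>c$ with $P[\bm f(\xxi)\ge t'/\bm w]\ge\alpha$; the point $\bm z'=t'/\bm w$ is feasible and satisfies $\bm z'>\bm z$ componentwise, so $\bm z'\succ\bm z$, contradicting the Pareto-efficiency of $\bm z$. Hence $v(\bm w)=c$ and $g(h(\bm z))=v(\bm w)/\bm w=c/\bm w=\bm z$, proving (iii).

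Finally I would combine (i)--(iii): $g$ is a well-defined left and right inverse of $h$, so $h$ is bijective with $h^{-1}=g=\frac{1}{\bm w}\var_\alpha(s[\bm f(\xxi),\bm w])$. The delicate points to handle carefully are the role of Assumption~\ref{con:cont_cdf} in claim (i)---needed to guarantee that $v/\bm w$ is not merely feasible but actually Pareto-efficient, via continuity forcing $P[\bm f(\xxi)\ge v/\bm w]=\alpha$ and strict monotonicity of the survival function ruling out feasible dominators---and the positivity conventions ensuring $h$ lands in the open simplex. The supremum manipulation in (iii) is elementary but must be handled with care, since \var{} is defined as a supremum that need not, without the assumption, be attained.
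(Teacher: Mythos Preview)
Your proposal is correct and takes essentially the same approach as the paper: your claim (i) is precisely Lemma~\ref{lemma:var_in_mvar}, your claim (iii) is precisely Lemma~\ref{lemma:mvar_to_var} (your direct contradiction argument is a slight streamlining of the paper's route through \weakmvar{}, but the idea is identical), and your claim (ii) makes explicit the simplex computation that the paper leaves implicit when invoking these two lemmas. The two-sided-inverse framing is a clean way to package the paper's argument.
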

\vspace{-0.1ex}
Theorem~\ref{thm:bijection} \emph{provides a technique for generating points in the \mvar{} set using the \var{} of Chebyshev scalarizations with different weights.}
Importantly, any design that is globally optimal with respect to \mvar{} is a maximizer of the \var{} of a Chebyshev scalarization.
This naturally motivates a methodology for identifying the global \mvar{} set by optimizing the \var{} of random Chebyshev scalarizations.
\vspace{-3ex}
\begin{corollary}[Consistent Optimizers]
\label{cor:consistent_optimizers} Suppose $\bm z$ is a point in the global \mvar{} set, i.e., $\bm z \in \mvar{}_\alpha\big[\{\bm f(\xxi)\}_{\bm x \in \mathcal X}\big]$. Let $\mathcal X_{\bm{z}}^*$ be the set of designs such that for all $\bm x \in \mathcal X_{\bm{z}}^*$, $\bm z \in \mvar{}_\alpha\big[\bm f(\xxi)\big]$.
Then every $\bm x \in \mathcal X_{\bm{z}}^*$ is a maximizer of $\var{}_\alpha\big(s[\bm f(\xxi), \bm w]\big)$ for $\bm w = \big(\bm z||\frac{1}{\bm z}||\big)^{-1}$.
\end{corollary}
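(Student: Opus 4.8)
The plan is to reduce the claim to two facts: (i) every design $\bm x \in \mathcal X_{\bm z}^*$ attains one and the same value of the map $g(\bm x) := \var{}_\alpha\big(s[\bm f(\xxi), \bm w]\big)$, namely the scalar $c := 1/\|\tfrac{1}{\bm z}\|$; and (ii) no design in $\mathcal X$ can exceed $c$. Together these give $c = \max_{\bm x \in \mathcal X} g(\bm x)$ with the maximum attained on $\mathcal X_{\bm z}^*$, so every element of $\mathcal X_{\bm z}^*$ is a maximizer. Note first that $\bm w$ as defined in the corollary is exactly $h(\bm z)$ from Theorem~\ref{thm:bijection}, since $w_i = 1/\big(z_i\|\tfrac{1}{\bm z}\|\big)$.

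First I would establish (i) straight from the bijection. Fix any $\bm x \in \mathcal X_{\bm z}^*$, so by definition $\bm z \in \mvar{}_\alpha\big[\bm f(\xxi)\big]$. Applying Theorem~\ref{thm:bijection} to this design, $\bm z = h^{-1}(\bm w) = \frac{1}{\bm w}\var{}_\alpha\big(s[\bm f(\xxi), \bm w]\big)$; reading off any coordinate gives $g(\bm x) = w_i z_i$, and since $w_i z_i = 1/\|\tfrac{1}{\bm z}\| = c$ is independent of both $i$ and the chosen $\bm x$, all designs in $\mathcal X_{\bm z}^*$ share the value $c$. A small bookkeeping point is that $\mathcal X_{\bm z}^*$ is nonempty: because $\bm z$ lies in the global \mvar{} set it belongs to at least one individual \mvar{} set, which guarantees the maximum is genuinely attained rather than just bounded.

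The bulk of the work is (ii), which I would argue by contradiction. Suppose some $\bm x' \in \mathcal X$ has $v' := g(\bm x') > c$. By Lemma~\ref{lemma:dominance_cdf}, $P\big[\bm f(\xpxi) \geq \tfrac{v'}{\bm w}\big] \geq \alpha$, and under Assumption~\ref{con:cont_cdf} this makes $\tfrac{v'}{\bm w}$ a genuine element of $\mvar{}_\alpha\big[\bm f(\xpxi)\big]$ (exactly the remark recorded in the text immediately after Lemma~\ref{lemma:dominance_cdf}). Since $\bm w \in \Delta_+^{M-1}$ is strictly positive, dividing the scalar inequality $v' > c$ coordinatewise by $\bm w$ preserves it, so $\tfrac{v'}{\bm w} > \tfrac{c}{\bm w} = \bm z$ in every coordinate, i.e. $\tfrac{v'}{\bm w} \succ \bm z$. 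But $\tfrac{v'}{\bm w} \in \mvar{}_\alpha\big[\bm f(\xpxi)\big] \subseteq \bigcup_{\bm x \in \mathcal X} \mvar{}_\alpha\big[\bm f(\xxi)\big]$, contradicting the hypothesis (via Definition~\ref{def:global_mvar}) that $\bm z$ is Pareto non-dominated within that union.

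The hard part will be the one step where I must upgrade $\tfrac{v'}{\bm w}$ from merely lying in the super-level region $\{\bm z : P[\bm f(\xpxi) \geq \bm z] \geq \alpha\}$ to lying in the individual \mvar{} set itself; this is precisely where Assumption~\ref{con:cont_cdf} (continuous, strictly increasing CDF) is indispensable, and Theorem~\ref{thm:bijection} already packages that argument for me. Everything else is the elementary observation that scaling a scalar \var{} gap by the positive weight vector $\bm w$ converts it into coordinatewise Pareto dominance of the associated \mvar{} points, which is just the geometric content of the bijection.
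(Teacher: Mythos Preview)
Your proof is correct. The paper does not provide a formal proof of this corollary; it states it as immediate from Theorem~\ref{thm:bijection}, and for the constrained analogue (Corollary~\ref{cor:const_consistent_optimizers}) offers only the one-line remark that the injective mapping from $\bm z$ to $\bm w$ forces $\var{}_\alpha\big(s[\bm f(\xxi),\bm w]\big)$ to take the same value for all $\bm x \in \mathcal X_{\bm z}^*$. That remark is precisely your step~(i).

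Your step~(ii)---the contradiction showing no $\bm x' \in \mathcal X$ can exceed $c$---is the part the paper leaves implicit. Your argument there is the natural one: push $v' > c$ through Lemma~\ref{lemma:dominance_cdf} and the strict-CDF assumption (equivalently Lemma~\ref{lemma:var_in_mvar}) to land $\tfrac{v'}{\bm w}$ in $\mvar{}_\alpha[\bm f(\xpxi)]$, observe $\tfrac{v'}{\bm w} \succ \bm z$ coordinatewise, and contradict the Pareto optimality of $\bm z$ in the union defining the global \mvar{} (Definition~\ref{def:global_mvar}). This is more explicit than anything the paper writes down, and your identification of Assumption~\ref{con:cont_cdf} as the crux of upgrading the super-level membership to \mvar{} membership is exactly right.
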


\vspace{-1.5ex}
\subsection{\mars{}: \mvar{} Approximation via Random Scalarizations}
\label{subsec:random_scalar}
The connection between the \var{} of a Chebyshev scalarization and \mvar{} can be exploited to optimize the global \mvar{} by randomly sampling a Chebyshev scalarization at each BO iteration\footnote{We sample weights uniformly from $\Delta_+^{M-1}$, which we find works well empirically.} and optimizing the \var{} of the Chebyshev scalarization using a single objective BO algorithm, such as Noisy Expected Improvement (\nei{}) \citep{letham2019noisyei}---which is required (rather than expected improvement) since we must integrate over the unknown best unknown incumbent value---or Thompson sampling (TS) \citep{thompson, paria2018flexible}. We refer to this technique as \mvar{} Approximation via Random Scalarizations (\mars{}). \mars{} is a simple, theoretically-grounded technique, and we find, in Section~\ref{sec:experiments}, that \mars{} performs well empirically. In the main text, we focus on \mars{} with \nei{} (denoted as \marsnei{}) , but we derive and empirically evaluate upper confidence bound (UCB) \citep{ucb} and TS variants in Appendices~\ref{appdx:mars_acqfs} and \ref{appdx:experiments}.
    
For \marsnei{}, we use the MC formulation of \nei{} \citep{balandat2020botorch} so that the Chebyshev scalarizations can be computed in the \nei{} as composite objectives \citep{astudillo2019composite}. We optimize the the acquisition function using sample-path gradients that leverage well-studied gradient estimators of \var{} (see Appendix~\ref{appdx:gradients-of-var} for details). 
See Appendix~\ref{appdx:method_details} for details on optimization.

\vspace{-1.5ex}
\section{Experiments}
\label{sec:experiments}
\begin{figure*}[ht]
\def\mtplotwidth{0.245\linewidth}
    \centering
    \includegraphics[width=\linewidth]{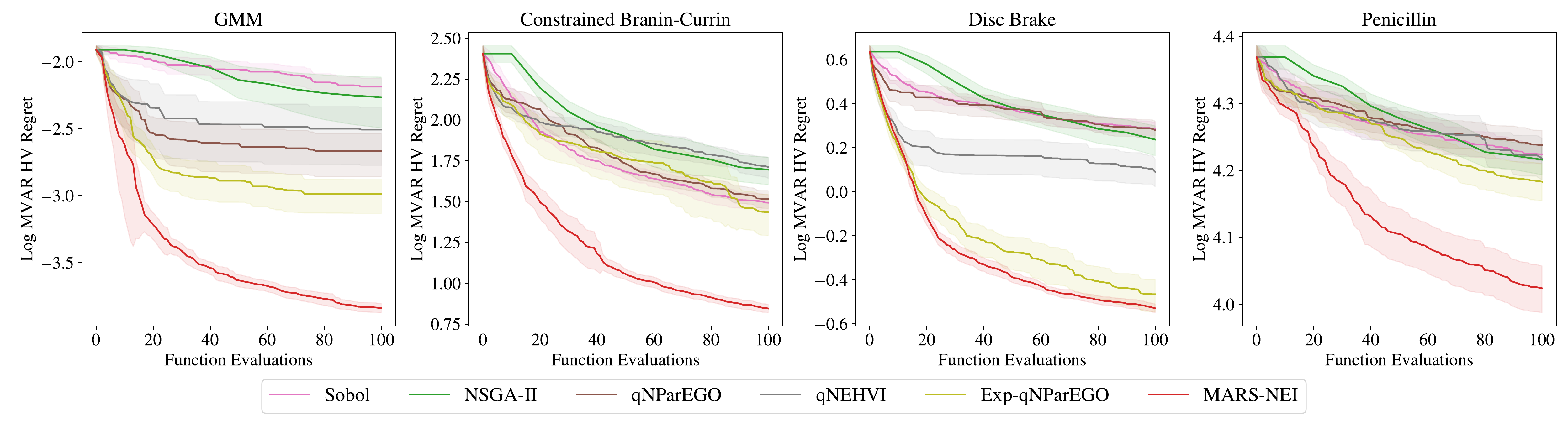}
    \vspace{-4ex}
    \caption{The log \mvar{} \HV{} regret after the initial space-filling design. For each method, we plot the mean and 2 standard errors of the mean over 20 trials.}
    \label{fig:results}
    \vspace{-4ex}
\end{figure*}


\begin{figure}[ht]
\def\yieldplotwidth{0.66\linewidth}
\centering
    \includegraphics[width=\linewidth]{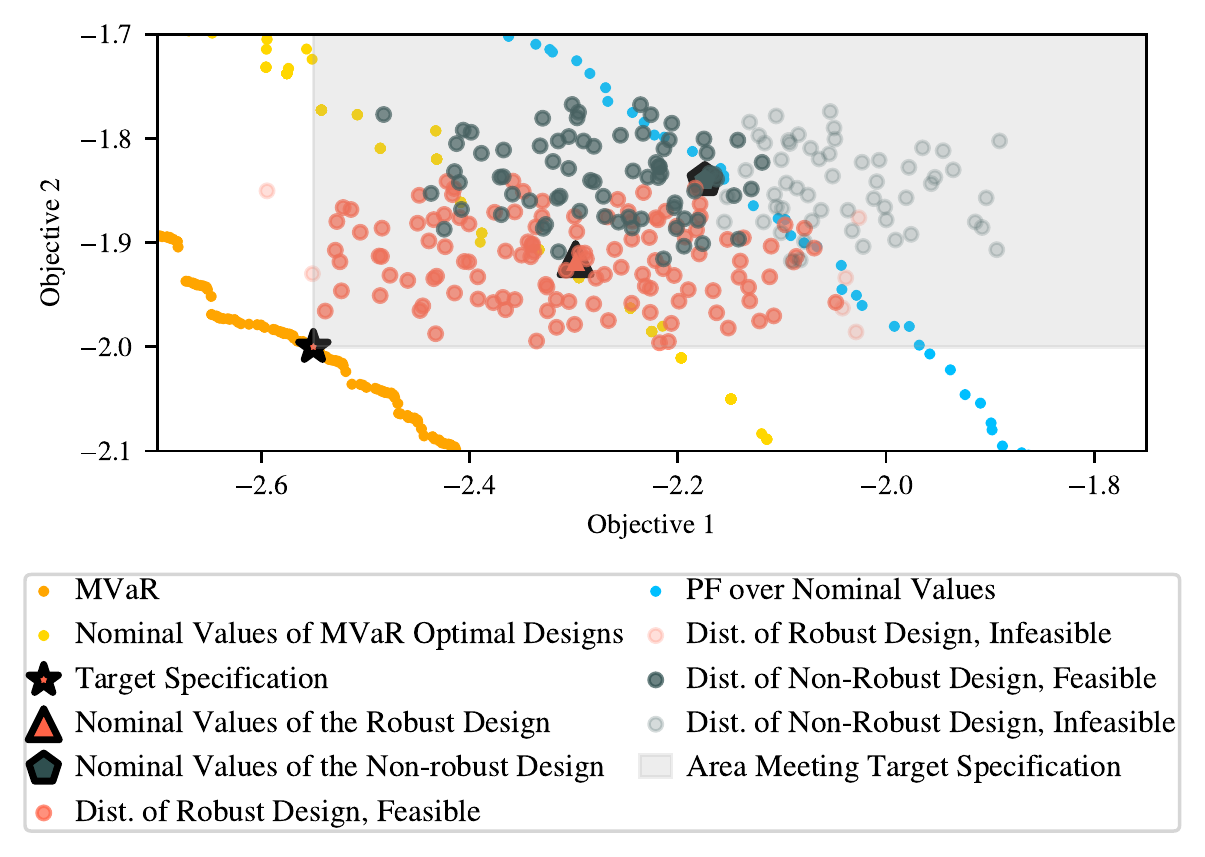}
    \vspace{-4ex}
    \caption{The yield from selecting a robust versus a non-robust design on the Disc Brake problem. Although the non-robust design is feasible under the nominal objectives, it is located near the boundary of the feasible region in design space and violates some of the black-box constraints (not shown) under a large fraction of input perturbations.
    }
    \label{fig:yield}
    \vspace{-3.5ex}
\end{figure}

In this section, we provide an empirical demonstration of robust MOBO on synthetic and real-world problems. We compare three broad classes of methods: (i) Non-robust methods including NSGA-II \citep{deb02nsgaii}, \qNParego{} \citep{daulton2020ehvi}, \qNEHVI{} \citep{daulton2021multiobjective}, (ii) methods for optimizing expectation risk measures, e.g. using \qNParego{} (denoted as \textsc{Exp-}\qNParego{}), and (iii) methods for optimizing \mvar{} via \mars{}. For readability, we only include one expectation and one \mvar{} optimization method in the main text, both based on \nei{}. In Appendix \ref{appdx:experiments}, we evaluate additional methods including \mars{} with TS and UCB and methods for direct \mvar{} optimization based on \textsc{NEHVI}. We consider the expectation risk measure because it is simple and performs well in many scenarios.
All robust (expectation and \mvar{}) methods are our novel contributions. Additionally, we compare against a quasi-random policy, which selects the designs to evaluate according to
a scrambled Sobol sequence \citep{owen2003quasi}. 

For all BO-methods, we begin by evaluating $2(d+1)$ design points from a scrambled Sobol sequence. We use $n_{\bm \xi}= 32$ samples because we find that setting $n_{\bm \xi} > 32$ yields little-to-no improvement in optimization performance (see Figure~\ref{fig:n_xi_plots}). See Appendix~\ref{appdx:experiments} for details on all methods. Our code is open-sourced at \url{github.com/facebookresearch/robust_mobo}.
\vspace{-2ex}
\subsection{Synthetic Problems}
\textbf{Gaussian Mixture Model (GMM)} ($d=2,~M=2,~ \alpha=0.9$): This is a variant of the GMM problem from \citet{nes} where each objective is an independent GMM. We use a multiplicative noise model, i.e., $\xxi := \bm x \bm \xi$, where $\bm \xi \sim \mathcal{N}(\mu=\bm 1, \Sigma = 0.07 I_2)$
with $I_n$ denoting the $n$-dimensional identity matrix.
In Appendix~\ref{appdx:gmm_noise_level}, we present multiple variations of this problem to demonstrate the consistency of our methods under different noise models.

\textbf{Constrained Branin Currin} ($d=2,~M=2,~ V=1, ~\alpha=0.7$): 
We subject this problem, which originates from \citet{daulton2020ehvi},
to a heteroskedastic input noise process given by
$P(\bm \xi; \bm x) = \mathcal{N}\left(\mu=\bm 0,\Sigma=0.05\left(1 + \sigma\left(1 - 2x_0 \right) \right) I_2 \right)$, where $\sigma(x) = \frac{1}{1 + e^{-x}}$.
The optimal designs with respect to the nominal objectives are on the boundary of the feasible region with respect to the outcome constraint. Hence, the optimal designs with respect to the nominal metrics often violate the constraint under input perturbations.
\vspace{-1ex}
\subsection{Real-World Problems}
\textbf{Disc Brake} ($ d=4,~M=2,~ V=4,~ \alpha=0.95$): In this disc brake manufacturing problem, the goal is to minimize the brake's mass and the stopping time of a vehicle by tuning the inner and outer radii of the disc, the engaging force, and the number of friction surfaces \citep{ray2002}. Following \citet{emch1994}, we use zero-mean uniform input noise with a maximum absolute perturbation value of 5\% of the range of each parameter (except for the number of friction surfaces, which is noise-free).

\textbf{Penicillin Production} ($d=7,~ M=3,~\alpha=0.8$): This problem considers optimizing the manufacturing process of penicillin \citep{liang2021scalable}. The goal is to maximize the
yield while minimizing time-to-ferment and CO$_2$ output by tuning 7 initial conditions of the chemical reaction. Each parameter is subject to independent zero-mean Gaussian noise, where the standard deviation ranges from 0.5\% to 3\% of the parameter's domain (see Appendix~\ref{appdx:experiments} for details).
\vspace{-1.5ex}
\subsection{Results} In Figure~\ref{fig:mvar-results}, we evaluate all methods in terms of log \HV{} regret, which is the difference in \HV{} between the true global \mvar{} set and the \mvar{} of the set of designs evaluated by each method (see Appendix~\ref{appdx:evaluation_details} for details on estimating true global \mvar{}).
Figure~\ref{fig:results} shows that \mars{} is consistently the best performing method. The non-robust methods consistently perform poorly on all problems. On the GMM problem,  \textsc{Exp-}\qNParego{} performs worse than \marsnei{} because the optimal designs under expectation risk measure are in a disjoint part of the search space from the \mvar{}-optimal designs. On the Constrained Branin Currin problem, the nominal methods perform no better than quasi-random Sobol search. On the penicillin problem, \mars{} vastly outperforms all other methods. 


Although the log \HV{} regret highlights the performance of \mars{}, it does not fully capture the \emph{necessity} of using a robust method in practice. In Figure~\ref{fig:yield}, we analyze the yield (i.e. the probability of the objectives exceeding a performance specification under the input noise distribution) of a robust and a non-robust design on the Disc Brake problem. Using a target performance specification chosen from the \mvar{} set, we see that if a decision-maker were to select a non-robust design (green pentagon) that is optimal with respect to the nominal objectives and nominal values that meet the target specification, the yield would only be $58.2\%$.
In contrast, an \mvar{}-optimal solution can be chosen such that it meets target specification with high probability. For example, the robust design marked by the orange triangle enjoys a yield of $95.3\%$.
As shown in Figure~\ref{fig:yield}, this is because the non-robust design often does not satisfy all of the black-box constraints under input perturbations. In this problem, the objectives are relatively robust to noise (much more so than the toy example in Figure \ref{fig:1d_toy}), but the feasibility of a design (and therefore the yield) is highly sensitive to input noise when the design is near the boundary of the feasible region in design space. 


Table~\ref{table:wall-clock-time} reports the wall times for running a single iteration of BO with each algorithm. Not only is \marsnei{} computationally tractable on all problems, but it achieves wall times that are competitive to alternative algorithms. 

Evaluation in additional problem settings and comparisons against additional methods in Appendix~\ref{appdx:additional_experiments} further validate that \marsnei{} is consistently a top performer and yields competitive wall times. We find that \marsts{} performs slightly worse, on average, in terms of log \HV{} regret than \marsnei{}, but that it has shorter wall times. Methods for direct \mvar{} optimization with \qNEHVI{} perform comparably to \marsnei{}, but the direct \mvar{} optimization is prohibitively expensive in terms of wall time and memory requirements and was infeasible to run on many problems (including nearly all problems with $>2$ objectives). In contrast, Figure~\ref{fig:gmm-results_3obj_4obj} shows that \marsnei{} can scale to problems with $> 2$ objectives and consistently performs best in those settings. Additionally, in Appendix~\ref{appdx:experiments}, 
we show that \marsnei{} works well under a wide variety of input noise processes, scales well with increasing batch sizes (in the parallel evaluation setting), and is not sensitive to the number of MC samples $n_{\bm \xi}$ used to estimate $P(\bm \xi)$, for $n_{\bm \xi}\geq 32$.

\vspace{-1.5ex}
\section{Discussion}
In this work, we formulate the goal of MO robust optimization under input noise as optimizing the \emph{global} \mvar{} set---a novel concept that is a robust analogue of the 
Pareto frontier
in the standard MO setting. We derive a 
correspondence between \mvar{} and Chebyshev scalarizations based on \var{}. This theoretical result naturally motivates a computationally efficient approach (\mars{}) for using BO to optimize the global \mvar{} set with high sample-efficiency. Empirically, we find that \mars{} consistently outperforms alternative approaches and achieves competitive wall times. 

Although our focus has been on the small evaluation budget regime and BO, our theoretical results are far more general. 
The connection between \mvar{} and Chebyshev scalarizations could be leveraged by  gradient-based  and evolutionary methods 
to scale global \mvar{} optimization to settings with large evaluation budgets. 
We hope that our contributions serve as a foundation for future advances in methods for robust MO optimization under input noise.

\section*{Acknowledgements}
We thank Ben Letham, David Eriksson, James Wilson, Martin J\o{}rgensen, and Raul Astudillo, as well as the members of the Oxford Machine Learning Research Group, for providing insightful feedback. In addition, Sait Cakmak and Enlu Zhou are grateful for support by the Air Force Office of Scientific Research under Grant FA9550-19-1-0283. 

\bibliographystyle{plainnat}
\bibliography{ref}
\FloatBarrier
\clearpage
\newpage
\onecolumn
\appendix
\section{Theory and Proofs}
\label{appdx:proofs}

\subsection{Preliminaries}
\begin{definition}
\label{def:weak_pareto}
Let $\mathcal F = \{\bm f(\bm x) : \bm x \in X \subseteq \mathcal X\}$. The weakly efficient Pareto frontier is $\textsc{WeakPareto}(\mathcal F) = \{\bm f(\bm x) \in \mathcal F~ : ~\nexists ~\bm x' \in X ~s.t.~  \bm f(\bm x') > \bm f(\bm x)\},$
and $\textsc{Pareto}(\mathcal F) \subseteq \textsc{WeakPareto}(\mathcal F)$. If there are constraints $\bm c(\bm x)$, elements of \textsc{WeakPareto} are subject to the additional membership Assumption that $\bm c(\bm x) \geq \bm 0$. We call the corresponding set of optimal designs the \emph{weak Pareto set}.
\end{definition}

\begin{definition}
\weakmvar{} is defined in the same way as \mvar{}, but only requires that its elements are weakly Pareto optimal.
\end{definition}

\begin{definition}[\citet{Prekopa2012MVaR}]
If Assumption~\ref{con:cont_cdf} holds, then we can express $\mvar{}$ with an equality with respect to $\alpha$:
\begin{equation}
\begin{aligned}
    \label{eqn:mvar_alpha}
    \mvar{}_\alpha&\big[\bm f(\xxi) \big] = \big\{ \bm z \in \mathbb R^M : P\big[\bm f(\xxi) \geq \bm z\big] = \alpha \big\}.
\end{aligned}
\end{equation}
\end{definition}
\subsection{Proofs}
\label{appdx:proofs_sub}
\begin{restatable}{lemma}{boolequivalence}
\label{lemma:bool_equivalence}
Let $\bm y \in \mathbb R^M$ and $v \in \mathbb R$. Then $s[\bm y, \bm w] \geq v \iff \bm y \geq \frac{v}{\bm w}.$
\end{restatable}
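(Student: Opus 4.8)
The plan is to unfold the definition of the Chebyshev scalarization and then chain together two elementary equivalences: one characterizing when a minimum exceeds a threshold, and one using the strict positivity of the weights to divide. Recall from Definition~\ref{def:chebyshev} (with $\bm r = \bm 0$ dropped by convention) that $s[\bm y, \bm w] = \min_i w_i y_i$. So the statement to prove reduces to
\begin{equation*}
\min_i w_i y_i \geq v \iff y_i \geq \frac{v}{w_i} \text{ for all } i,
\end{equation*}
where the right-hand side is exactly the component-wise inequality $\bm y \geq \frac{v}{\bm w}$.

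First I would establish the threshold-on-the-minimum equivalence: $\min_i a_i \geq v$ holds if and only if $a_i \geq v$ for every $i$. The forward direction is immediate because each $a_i$ is at least the minimum; the reverse direction holds because if every term exceeds $v$ then so does the smallest. Applying this with $a_i = w_i y_i$ gives $s[\bm y, \bm w] \geq v \iff w_i y_i \geq v ~\forall i$.

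Next I would use the fact that $\bm w \in \Delta_+^{M-1}$, so that $w_i > 0$ for every $i$. Because multiplication (equivalently division) by a strictly positive scalar preserves the direction of an inequality, $w_i y_i \geq v \iff y_i \geq v / w_i$, independently for each coordinate. Conjoining over all $i$ yields the component-wise inequality $\bm y \geq \frac{v}{\bm w}$ with element-wise division, completing the chain of equivalences.

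This is an entirely elementary argument with no real obstacle; the only point requiring care is the explicit invocation of the positivity of the weights (which is guaranteed by membership in the \emph{positive} simplex $\Delta_+^{M-1}$), since dividing by $w_i$ would reverse the inequality were any weight negative and would be undefined were any weight zero. I would therefore state this positivity explicitly at the step where I divide, so that the equivalence is valid in both directions. The lemma is stated as a stand-alone algebraic fact precisely because it is the workhorse used to translate between the \var{} of a scalarization and Pareto dominance in Lemma~\ref{lemma:dominance_cdf} and Theorem~\ref{thm:bijection}.
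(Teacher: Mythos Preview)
Your proposal is correct and follows essentially the same chain of equivalences as the paper's proof: unfold the definition $s[\bm y,\bm w]=\min_i w_i y_i$, use that a minimum exceeds a threshold iff every term does, and then divide by the positive weights. Your explicit mention of the positivity of $\bm w$ is a nice addition that the paper leaves implicit.
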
 

\begin{proof}
    This follows directly from Definition~\ref{def:chebyshev}.
    \begin{align*}
        s[\bm y, \bm w] \geq v &\iff \min_i w_i y_i \geq v \\
        &\iff w_i y_i \geq v ~\forall i \\
        &\iff y_i \geq \frac{v}{w_i} ~\forall i \\
        &\iff \bm y \geq \frac{v}{\bm w}.
    \end{align*}
\end{proof}
Lemma~\ref{lemma:bool_equivalence} states that a lower bound $v$ on the value of a Chebyshev scalarization of an objective vector $\bm y$ can be used to define a point, $\frac{v}{\bm w}$, that $\bm y$ is greater than or equal to. We can extend this to make a similar statement about the \var{} of a Chebyshev scalarization.
\dominancecdf*
\begin{proof}
From Definition~\ref{def:var}, we have that 
\begin{equation*}
    \var{}_{\alpha}\big[f(\xxi) \big] = \sup \{z \in \mathbb R: P\big[f(\xxi) \geq z\big] \geq \alpha \}.
\end{equation*}
Hence, $P(s[\bm f(\xxi), \bm w] \geq v) \geq \alpha$. From Definition~\ref{def:chebyshev},
 we have $P(\min_i [ w_if_i(\xxi)] \geq v) \geq \alpha.$
By Lemma~\ref{lemma:bool_equivalence}, the statement $\min_i [w_if_i(\xxi)] \geq v$ is equivalent to $\bm f(\xxi) \geq \frac{v}{\bm w}.$ Hence,
$P\big(\bm f(\xxi) \geq \frac{v}{\bm w}\big) \geq \alpha.$
\end{proof}

\begin{restatable}[\var{} of Chebyshev scalarization $\Rightarrow$ \weakmvar{}] {lemma}{varinweakmvar}
\label{lemma:var_in_weak_mvar}
Let $v = \var{}_\alpha\big(s[\bm f(\xxi), \bm w]\big)$. Then, $\frac{v}{\bm w} \in \weakmvar{}_\alpha\big[\bm f(\xxi)\big]$.
\end{restatable}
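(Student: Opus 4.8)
The plan is to verify the two defining properties of membership in $\weakmvar{}_\alpha\big[\bm f(\xxi)\big] = \textsc{WeakPareto}\big(\{\bm z \in \mathbb R^M : P[\bm f(\xxi) \geq \bm z] \geq \alpha\}\big)$: that $\frac{v}{\bm w}$ lies in the feasible set $\{\bm z : P[\bm f(\xxi) \geq \bm z] \geq \alpha\}$, and that it is weakly Pareto optimal within that set. The first property is immediate, since it is exactly the conclusion of Lemma~\ref{lemma:dominance_cdf}, which gives $P\big[\bm f(\xxi) \geq \frac{v}{\bm w}\big] \geq \alpha$. So the substantive content is the weak efficiency.

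For weak efficiency I would argue by contradiction. If $\frac{v}{\bm w}$ were not weakly efficient in the feasible set, then there exists $\bm z'$ with $\bm z' > \frac{v}{\bm w}$ (component-wise strict) and $P[\bm f(\xxi) \geq \bm z'] \geq \alpha$. The key idea is to convert this strictly-dominating feasible point into a violation of the definition of $v$ as a $\var{}$, i.e. as a supremum. To that end I set $v' := \min_i w_i z'_i$. Since each $w_i > 0$ and $z'_i > v/w_i$, every term obeys $w_i z'_i > v$, and hence $v' > v$.

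Next I would show that $v'$ also meets the $\var{}$ threshold, i.e. $P\big(s[\bm f(\xxi), \bm w] \geq v'\big) \geq \alpha$. Because $\frac{v'}{w_i} \leq z'_i$ for every $i$ (as $v' = \min_j w_j z'_j \leq w_i z'_i$), we have $\frac{v'}{\bm w} \leq \bm z'$ component-wise, so the event $\{\bm f(\xxi) \geq \bm z'\}$ is contained in $\{\bm f(\xxi) \geq \frac{v'}{\bm w}\}$. Applying Lemma~\ref{lemma:bool_equivalence} to the latter, this event equals $\{s[\bm f(\xxi), \bm w] \geq v'\}$, giving $P\big(s[\bm f(\xxi), \bm w] \geq v'\big) \geq P[\bm f(\xxi) \geq \bm z'] \geq \alpha$. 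Thus $v'$ belongs to the set whose supremum defines $v = \var{}_\alpha\big(s[\bm f(\xxi), \bm w]\big)$, forcing $v' \leq v$ and contradicting $v' > v$. Hence no such $\bm z'$ exists and $\frac{v}{\bm w}$ is weakly efficient, completing the argument.

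I expect the main obstacle to be handling the fact that an arbitrary dominating point $\bm z'$ need not lie on the ray $\{\frac{t}{\bm w} : t \in \mathbb R\}$, so one cannot directly read off its scalarization value; introducing $v' = \min_i w_i z'_i$ and combining the monotonicity of Pareto dominance with Lemma~\ref{lemma:bool_equivalence} is precisely what bridges an arbitrary feasible dominator back to the one-dimensional $\var{}$ definition. I would also note that, unlike the sharper strict-membership statement of Theorem~\ref{thm:bijection}, this lemma requires no continuity assumption on the CDF, since only weak (rather than strict) Pareto optimality is claimed.
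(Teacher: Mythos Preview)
Your proposal is correct and follows essentially the same argument as the paper: establish feasibility via Lemma~\ref{lemma:dominance_cdf}, then for weak efficiency assume a strictly dominating feasible point $\bm z'$, form $v' = s[\bm z', \bm w] = \min_i w_i z'_i > v$, and derive a contradiction with the supremum in the definition of $\var{}$. Your write-up is in fact slightly cleaner, since you take the hypothetical dominator $\bm z'$ directly from the feasible set (which is the literal negation of weak efficiency) rather than from $\weakmvar{}_\alpha$ as the paper does.
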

\begin{proof}
Let $\bm z = \frac{v}{\bm w}$. Suppose there exists $\bm z' \in \weakmvar{}_\alpha(\bm f(\xxi))$ such that $\bm z' > \bm z$. Since $\bm z' \in \weakmvar{}_\alpha(\bm f(\xxi))$, we have that $P(\bm f(\xxi) \geq \bm z') \geq \alpha$. Note that
$\bm f(\xxi) \geq \bm z'$ implies that $\min_iw_if_i(\xxi) \geq \min_i w_iz_i'$. Hence, $P(\bm f(\xxi) \geq \bm z') \geq \alpha$ implies that $P( s[\bm f(\xxi), \bm w] \geq s[\bm  z', \bm w]) \geq \alpha$. Since $\bm z' > \bm z$ and $\bm w \in \Delta_+^{M-1}$, we have that $s[\bm  z', \bm w] > s[\bm  z, \bm w] = v$. But 
this contradicts Definition~\ref{def:var}. Since there does not exist $\bm z' \in \weakmvar{}_\alpha\big[\bm f(\xxi)\big]$ such that $\bm z' > \bm z$, we have that $\bm z \in \weakmvar{}_\alpha\big[\bm f(\xxi)\big]$.
\end{proof}

If Assumption~\ref{con:cont_cdf} holds, the inequalities with respect to $\alpha$ in Lemma~\ref{lemma:dominance_cdf} become equalities, and we show that $\frac{v}{\bm w}$ is strictly Pareto optimal.
\begin{restatable}[\var{} of Chebyshev scalarization $\Rightarrow$ \mvar{}] {lemma}{varinmvar}
\label{lemma:var_in_mvar}
Let $v = \var{}_\alpha\big[s[\bm f(\xxi), \bm w]\big]$. If Assumption~\ref{con:cont_cdf} holds, then $\frac{v}{\bm w} \in \mvar{}_\alpha\big[\bm f(\xxi)\big]$.
\end{restatable}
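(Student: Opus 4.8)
The plan is to show directly that $\frac{v}{\bm w}$ lies on the level set of the joint survival function at height $\alpha$, and then to invoke Equation~\ref{eqn:mvar_alpha}, which under Assumption~\ref{con:cont_cdf} identifies $\mvar{}_\alpha\big[\bm f(\xxi)\big]$ with $\big\{\bm z : P[\bm f(\xxi) \geq \bm z] = \alpha\big\}$. Lemma~\ref{lemma:dominance_cdf} already supplies the inequality $P\big[\bm f(\xxi) \geq \frac{v}{\bm w}\big] \geq \alpha$, so the whole task reduces to ruling out strict inequality, i.e.\ to establishing $P\big[\bm f(\xxi) \geq \frac{v}{\bm w}\big] = \alpha$. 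This also upgrades the weak-optimality conclusion of Lemma~\ref{lemma:var_in_weak_mvar} to full Pareto optimality, which is exactly the gap between \weakmvar{} and \mvar{}.

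First I would translate the outcome-space event back to the scalar event. By Lemma~\ref{lemma:bool_equivalence}, the events $\big\{\bm f(\xxi) \geq \frac{v}{\bm w}\big\}$ and $\big\{s[\bm f(\xxi), \bm w] \geq v\big\}$ coincide, so $P\big[\bm f(\xxi) \geq \frac{v}{\bm w}\big] = P\big[s[\bm f(\xxi), \bm w] \geq v\big]$. It therefore suffices to show that the survival function $G(z) := P\big[s[\bm f(\xxi), \bm w] \geq z\big]$ of the Chebyshev scalarization attains the value $\alpha$ exactly at $z = v$.

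The key step is to argue that $G$ is continuous, using the continuity part of Assumption~\ref{con:cont_cdf}. Since $s[\bm f(\xxi), \bm w] = \min_i w_i f_i(\xxi)$ by Definition~\ref{def:chebyshev}, the scalarization can place an atom at a value $t$ only on the event $\bigcup_i \{w_i f_i(\xxi) = t\}$. Each marginal $w_i f_i(\xxi)$ inherits a continuous CDF from the continuous joint CDF of $\bm f(\xxi)$, so each set $\{w_i f_i(\xxi) = t\}$ has probability zero, and hence so does their finite union. Thus $s[\bm f(\xxi), \bm w]$ has a continuous distribution and $G$ is continuous.

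Finally I would combine continuity of $G$ with the supremum in Definition~\ref{def:var}. Since $v = \sup\{z : G(z) \geq \alpha\}$ and $G$ is non-increasing and continuous, the superlevel set $\{z : G(z) \geq \alpha\}$ is the closed half-line $(-\infty, v]$ with $G(v) = \alpha$: if instead $G(v) > \alpha$, continuity would yield some $v' > v$ with $G(v') \geq \alpha$, contradicting maximality of the supremum. Hence $P\big[\bm f(\xxi) \geq \frac{v}{\bm w}\big] = G(v) = \alpha$, and Equation~\ref{eqn:mvar_alpha} gives $\frac{v}{\bm w} \in \mvar{}_\alpha\big[\bm f(\xxi)\big]$. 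I expect the continuity/atom argument of the third step to be the main obstacle (and the only place where a distributional assumption is genuinely needed); the remaining steps are bookkeeping with the definitions.
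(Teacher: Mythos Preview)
Your proof is correct, and it takes a genuinely different route from the paper's. The paper argues by contradiction: it sets $\bm z = \frac{v}{\bm w}$, supposes there exists $\bm z' \in \mvar{}_\alpha[\bm f(\xxi)]$ with $\bm z' \succ \bm z$, and then uses the \emph{strictly increasing} part of Assumption~\ref{con:cont_cdf} together with Equation~\eqref{eqn:mvar_alpha} (applied to $\bm z'$) and Lemma~\ref{lemma:dominance_cdf} (applied to $\bm z$) to produce conflicting inequalities between $P[\bm f(\xxi) \geq \bm z]$ and $P[\bm f(\xxi) \geq \bm z']$. In other words, the paper invokes Equation~\eqref{eqn:mvar_alpha} only on the hypothetical dominating point and never directly computes $P[\bm f(\xxi) \geq \bm z]$.

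Your argument is more direct: you pull the problem back through Lemma~\ref{lemma:bool_equivalence} to the scalar survival function $G(z) = P[s[\bm f(\xxi), \bm w] \geq z]$, use the \emph{continuity} part of Assumption~\ref{con:cont_cdf} to show that the Chebyshev scalarization has no atoms (via the union bound over $\{w_i f_i = t\}$), and conclude $G(v) = \alpha$ from the definition of \var{} as a supremum. Then Equation~\eqref{eqn:mvar_alpha} is applied directly to $\frac{v}{\bm w}$. What this buys you is an explicit identification of the survival probability at $\frac{v}{\bm w}$, which is slightly stronger than merely ruling out domination; it also isolates exactly where the distributional assumption enters (the no-atom step), whereas in the paper's version the role of the assumption is split between the strict-monotonicity clause and the citation of Equation~\eqref{eqn:mvar_alpha}.
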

\begin{proof}
Let $\bm z := \frac{v}{\bm w}$.
Suppose there exists $\bm z' \in \mvar{}_\alpha\big[\bm f(\xxi)\big]$ such that $\bm z' \succ \bm z$. Because $F(\cdot)$ is a strictly increasing CDF, $P(\bm f(\xxi) \geq \bm z') = F(\bm z') > F(\bm z) = P(\bm f(\xxi) \geq \bm z)$. From Lemma~\ref{lemma:dominance_cdf}, we have that $P\big(\bm f(\xxi) \geq \bm z \big) \geq\alpha$. 
Because $\bm z' \in \mvar{}_\alpha\big[
\bm f(\xxi)\big]$ from Equation~\eqref{eqn:mvar_alpha}, we have that $P\big(\bm f(\xxi) \geq \bm z'\big) = \alpha$. Hence, $P\big(\bm f(\xxi) \geq \bm z \big) \geq P\big(\bm f(\xxi) \geq \bm z' \big).$ This is a contradiction.
\end{proof}
Lemma~\ref{lemma:var_in_mvar} provides a technique for generating points in the \mvar{} set using the \var{} of Chebyshev scalarizations with different $\bm w$.

Now, consider the reverse mapping. 
\begin{restatable}[\mvar{} $\Rightarrow$ \var{} of Chebyshev scalarization]{lemma}{mvartovar}
\label{lemma:mvar_to_var}
Suppose that $\bm z \in \mvar{}_\alpha\big[\bm f(\xxi)\big]$. Then $\bm z = \frac{v}{\bm w}$ for $\bm w: = \frac{1}{\bm z}||\frac{1}{\bm z}||_1^{-1} \in \Delta^M_+$ and $v=\var{}_\alpha\big(s[\bm f(\xxi), \bm w]\big).$
\end{restatable}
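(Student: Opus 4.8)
The plan is to reduce the vector claim to a single scalar identity and then verify that identity by sandwiching the \var{} value between two bounds. First I would unpack the definition of $\bm w$: writing $w_i = (1/z_i)\,\|1/\bm z\|_1^{-1}$, the product $w_i z_i = \|1/\bm z\|_1^{-1} =: c$ is \emph{independent of $i$}. This is precisely why these weights are chosen, and it is well-defined because the objectives are scaled so that the reference point is $\bm 0$ and every \mvar{} point strictly dominates it, so $\bm z$ has positive entries and $\bm w \in \Delta_+^{M-1}$. Since $c/w_i = z_i$ for every $i$, we have $\frac{c}{\bm w} = \bm z$, and hence proving $\bm z = \frac{v}{\bm w}$ is equivalent to proving the scalar identity $v = \var{}_\alpha\big(s[\bm f(\xxi), \bm w]\big) = c$.

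Next I would show that $c$ lies in the feasible set $A := \{t \in \mathbb R : P\big(s[\bm f(\xxi), \bm w] \geq t\big) \geq \alpha\}$, whose supremum defines the \var{} in Definition~\ref{def:var}. By Lemma~\ref{lemma:bool_equivalence}, the event $\{s[\bm f(\xxi), \bm w] \geq c\}$ coincides with $\{\bm f(\xxi) \geq \frac{c}{\bm w}\} = \{\bm f(\xxi) \geq \bm z\}$. Since $\bm z \in \mvar{}_\alpha\big[\bm f(\xxi)\big]$ satisfies the membership condition $P(\bm f(\xxi) \geq \bm z) \geq \alpha$ of Definition~\ref{def:mvar}, we obtain $P(s \geq c) \geq \alpha$, i.e. $c \in A$, which gives the lower bound $c \leq v$.

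Finally I would establish $v \leq c$ by contradiction, the step I expect to carry the real content. Suppose $v = \sup A > c$. Because $t \mapsto P(s \geq t)$ is non-increasing, $A$ is a down-set, so one can extract some $v' \in A$ with $v' > c$ (whether or not the supremum is attained). Applying Lemma~\ref{lemma:bool_equivalence} again, $\{s \geq v'\} = \{\bm f(\xxi) \geq \bm z'\}$ with $\bm z' := \frac{v'}{\bm w}$, giving $P(\bm f(\xxi) \geq \bm z') \geq \alpha$; but $v' > c$ and $\bm w > \bm 0$ force $z'_i = v'/w_i > c/w_i = z_i$ for every $i$, so $\bm z' \succ \bm z$ in the sense of Definition~\ref{def:pareto-dominance}. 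This exhibits a point $\bm z'$ in the defining set of Definition~\ref{def:mvar} that Pareto-dominates $\bm z$, contradicting $\bm z \in \textsc{Pareto}(\cdot) = \mvar{}_\alpha\big[\bm f(\xxi)\big]$. Hence $v = c$ and $\bm z = \frac{v}{\bm w}$.

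The main obstacle is this supremum argument: I must guarantee that a strictly dominating candidate $\bm z'$ can actually be produced, which is why I would invoke the down-set structure of $A$ to cover the case where the supremum is not attained, and why positivity of $\bm z$ (hence well-definedness of $\bm w$) must be checked up front. It is worth noting that this reverse direction relies only on the Pareto-optimality of $\bm z$ together with Lemma~\ref{lemma:bool_equivalence}, so Assumption~\ref{con:cont_cdf} is not strictly required here; that assumption is instead what upgrades the forward map (Lemma~\ref{lemma:var_in_mvar}) from landing in \weakmvar{} to landing in \mvar{}, so that $h$ is a genuine bijection.
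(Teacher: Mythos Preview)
Your proof is correct and follows essentially the same approach as the paper: reduce to the scalar identity $v = c := \|1/\bm z\|_1^{-1}$, obtain $c \leq v$ from the \mvar{} membership condition via Lemma~\ref{lemma:bool_equivalence}, and derive a contradiction from $v > c$ by exhibiting a point $\bm z' \succ \bm z$ with $P(\bm f(\xxi) \geq \bm z') \geq \alpha$. The only cosmetic difference is that the paper routes the contradiction through Lemma~\ref{lemma:var_in_weak_mvar} applied at the supremum $v$ itself (placing $v/\bm w$ in \weakmvar{}), whereas you inline that argument by picking any $v' \in A$ with $v' > c$; both are valid, and your closing remark that Assumption~\ref{con:cont_cdf} is not needed for this direction is also consistent with the paper's proof.
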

\begin{proof}
Without loss of generality, assume that $\bm z > \bm 0$.\footnote{\label{footnote:pos_z}This can easily be guaranteed by shifting the objectives to be strictly positive. If the objectives are not unbounded from below, the reference point, which is commonly supplied by the decision-maker in multi-objective optimization effectively provides a lower bound on the objectives.} 
Let $v = ||\frac{1}{\bm z}||^{-1} \in \mathbb R_+$. Then, $\bm z = \frac{v}{\bm w}$. Hence, all we need to show is that $v$ equals $\var{}_\alpha\big(s[\bm f(\xxi), \bm w]\big)$. Let us define $v' := \var{}_\alpha\big(s[\bm f(\xxi), \bm w]\big).$
By definition, 
\begin{equation*}
    v' = \sup \{v'' \in \mathbb R: P\big(s[\bm f(\xxi), \bm w] \geq v''\big] \geq \alpha \}.
\end{equation*}
Since $\bm z \in \mvar{}_\alpha\big[\bm f(\xxi)\big]$, $P\big[\bm f(\xxi) \geq \bm z\big] \geq \alpha$. Hence, $P\big[\bm f(\xxi) \geq \frac{v}{\bm w}\big] \geq \alpha$. Using Lemma~\ref{lemma:bool_equivalence}, we have that $P\big(s[\bm f(\xxi), \bm w] \geq v\bigr) \geq \alpha$. Since $v'$ is the supremum, we have that $v \leq v'$. 
Suppose now that $v < v'$. Note that 
$\frac{v}{\bm w} = \bm z \in \mvar{}_\alpha\big[\bm f(\xxi)\big].$ From Lemma~\ref{lemma:var_in_weak_mvar}, $\frac{v'}{\bm w} \in \weakmvar{}_\alpha\big(\bm f(\xxi)\big)$. Since $v < v'$, 
$\bm z = \frac{v}{\bm w} \prec \frac{v'}{\bm w}.$ Hence, $\bm z$ cannot be in $\mvar{}_\alpha\big[\bm f(\xxi)\big]$ because it is dominated by $\frac{v'}{\bm w}$ and $\frac{v'}{\bm w} \in \weakmvar_\alpha\big(\bm f(\xxi)\big)$.
This is a contradiction. Hence $v\geq v'$ and therefore it follows that $v=v' = \var{}_\alpha\big(s[\bm f(\xxi), \bm w]\big).$
\end{proof}

Let $h:\mvar{}_\alpha\big[\bm f(\xxi)\big] \rightarrow \Delta_+^{M-1}$ be given by $h(\bm z) = \frac{1}{\bm z||\frac{1}{\bm z}||}$. Being the element-wise application of a scalar injective mapping ($z \mapsto 1/z)$, it is clear that $h$ is injective. However, $h(\cdot)$ is not necessarily bijective, since without Assumption~\ref{con:cont_cdf} there may be weight vectors $\bm w$ such that $\nexists~ \bm z \in \mvar{}_\alpha\big[\bm f(\xxi)\big] ~ \text{ s.t }.~h(\bm z) = \bm w$. 

\bijection*
\begin{proof}
This follows directly from Lemma~\ref{lemma:var_in_mvar} and Lemma~\ref{lemma:mvar_to_var}.
\end{proof}

\begin{corollary}[\mvar{} via Scalarization]
\label{cor:bijection_scalarization_theory}
\weakmvar{} enjoys the following scalarization representation:
$\weakmvar{}_\alpha\big(\bm f(\xxi)\big) =  \{\frac{1}{\bm w}\var{}_\alpha\big(s[\bm f(\xxi), \bm w]\big) ~:~ \bm w \sim \Delta_+^{M-1}\}$.
If Assumption~\ref{con:cont_cdf} holds, then
$\mvar{}_\alpha\big[\bm f(\xxi)\big] =  \{\frac{1}{\bm w}\var{}_\alpha\big(s[\bm f(\xxi), \bm w]\big) ~:~ \bm w \sim \Delta_+^{M-1}\}$.
\end{corollary}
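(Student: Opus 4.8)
The plan is to prove each of the two set equalities by double inclusion, writing $S := \{\frac{1}{\bm w}\var{}_\alpha(s[\bm f(\xxi), \bm w]) : \bm w \in \Delta_+^{M-1}\}$ for the common right-hand side. The whole statement is essentially a repackaging of the lemmas preceding Theorem~\ref{thm:bijection}, so I expect the proof to be short.

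For the \weakmvar{} identity, the inclusion $S \subseteq \weakmvar{}_\alpha(\bm f(\xxi))$ is immediate: Lemma~\ref{lemma:var_in_weak_mvar} states precisely that for every weight $\bm w$ the point $\frac{v}{\bm w}$ with $v = \var{}_\alpha(s[\bm f(\xxi), \bm w])$ lies in $\weakmvar{}_\alpha(\bm f(\xxi))$, and ranging over all $\bm w \in \Delta_+^{M-1}$ gives $S \subseteq \weakmvar{}_\alpha(\bm f(\xxi))$. For the reverse inclusion I would take an arbitrary $\bm z \in \weakmvar{}_\alpha(\bm f(\xxi))$, set $\bm w := \frac{1}{\bm z}\|\frac{1}{\bm z}\|^{-1}$, and argue that $\bm z = \frac{1}{\bm w}\var{}_\alpha(s[\bm f(\xxi), \bm w]) \in S$ by the same computation as in Lemma~\ref{lemma:mvar_to_var}.

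The one place needing care is that Lemma~\ref{lemma:mvar_to_var} is literally stated for $\mvar{}$, not $\weakmvar{}$, so I would check that its proof transfers verbatim. Inspecting that argument, the only properties of $\bm z$ it uses are (i) feasibility, $P[\bm f(\xxi) \geq \bm z] \geq \alpha$, and (ii) that $\bm z$ is not strictly dominated by another feasible point: setting $v := \|\frac{1}{\bm z}\|^{-1}$ and $v' := \var{}_\alpha(s[\bm f(\xxi), \bm w])$, if $v < v'$ then Lemma~\ref{lemma:dominance_cdf} makes $\frac{v'}{\bm w}$ feasible and it strictly dominates $\bm z = \frac{v}{\bm w}$, contradicting weak Pareto optimality. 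Both (i) and (ii) hold for any $\bm z \in \weakmvar{}$, and crucially none of the results cited there (Lemmas~\ref{lemma:bool_equivalence}, \ref{lemma:dominance_cdf}, and \ref{lemma:var_in_weak_mvar}) invoke Assumption~\ref{con:cont_cdf}, so the \weakmvar{} identity holds unconditionally.

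For the \mvar{} identity under Assumption~\ref{con:cont_cdf}, the cleanest route is to read it straight off Theorem~\ref{thm:bijection}: since $h$ is there shown to be a bijection from $\mvar{}_\alpha[\bm f(\xxi)]$ onto $\Delta_+^{M-1}$ with $h^{-1}(\bm w) = \frac{1}{\bm w}\var{}_\alpha(s[\bm f(\xxi), \bm w])$, the inverse $h^{-1}$ is surjective onto $\mvar{}_\alpha[\bm f(\xxi)]$, so its image over $\Delta_+^{M-1}$ --- which is exactly $S$ --- equals $\mvar{}_\alpha[\bm f(\xxi)]$. Equivalently, one could combine Lemma~\ref{lemma:var_in_mvar} for $S \subseteq \mvar{}_\alpha[\bm f(\xxi)]$ with Lemma~\ref{lemma:mvar_to_var} for $\mvar{}_\alpha[\bm f(\xxi)] \subseteq S$. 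The main, and really only, obstacle is the bookkeeping of the previous paragraph confirming that the \mvar{}-to-\var{} lemma degrades gracefully to the weak setting; the rest is a direct consequence of the established bijection.
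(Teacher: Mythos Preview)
Your proposal is correct and mirrors the paper's intended derivation: the corollary is stated without proof immediately after Theorem~\ref{thm:bijection}, and your argument---Lemma~\ref{lemma:var_in_weak_mvar} (resp.\ Lemma~\ref{lemma:var_in_mvar}) for $S \subseteq \weakmvar{}$ (resp.\ $S \subseteq \mvar{}$), and Lemma~\ref{lemma:mvar_to_var} for the reverse inclusion---is exactly the route the surrounding lemmas are set up to provide. Your explicit check that the proof of Lemma~\ref{lemma:mvar_to_var} carries over to $\weakmvar{}$ (since $v < v'$ yields $\frac{v'}{\bm w} > \bm z$ strictly in every coordinate, contradicting even weak Pareto optimality) is a detail the paper leaves implicit, so if anything your write-up is slightly more complete.
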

Although the \mvar{} representation in Corollary~\ref{cor:bijection_scalarization_theory} depends on Assumption~\ref{con:cont_cdf}, \emph{Lemma~\ref{lemma:dominance_cdf} recovers all weakly Pareto optimal points even if this assumption is not met} because $\mvar{} \subseteq \weakmvar{}$. Hence, with or without Assumption~\ref{con:cont_cdf}, Theorem~\ref{cor:bijection_scalarization_theory} can be used to approximate the \mvar{} set.
\begin{result}
[\mvar{} Approximation]
\label{res:mvar_approximation}
\mvar{} can be approximated with a finite set of weight vectors: $\reallywidehat{\mvar{}}\big(\bm f(\xxi)\big) = \{\frac{1}{\bm w_i}\var{}_\alpha\big(s[\bm f(\xxi), \bm w_i]\big)\}_{i=1}^{N_{\mvar{}}}$.
\end{result}

\subsection{Discussion of the Assumption of Continuous, Strictly-Increasing CDFs with Gaussian Processes}
\label{appdx:gp_cdfs} 
In Bayesian Optimization with Gaussian Process surrogates, it is assumed that the objective function $\bm f$ is sample path from a Gaussian process prior. In this setting, the random variable $\bm f(\xxi)$ for a deterministic sample path $\bm f$ (where the stochasticity comes solely from $\bm \xi$) has a continuous, strictly increasing CDF $F(\cdot)$ for many commonly used covariance functions. Hence, the bijective relationship in Theorem~\ref{thm:bijection} holds given a suitable choice of covariance function.\footnote{Using a discrete MC approximation of the input noise distribution means that the CDF is not strictly increasing.}

\begin{restatable}{lemma}{contrv}
\label{lemma:cont_rv}
If $\bm f$ is a sample path from a multi-output Gaussian Process prior where the  covariance function and the covariance function of the derivative process are strictly positive definite and with sample paths that are differentiable,\footnote{\label{footnote:diff_sample_paths} Sample paths of many commonly used covariance functions are differentiable \citep{paci}.} $\mathcal X$ is a compact set, and $P(\xxi)$ is a continuous distribution with strictly positive support, then Assumption~\ref{con:cont_cdf} holds.
\end{restatable}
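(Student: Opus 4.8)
The plan is to fix a single (deterministic) sample path $\bm f$ and a design $\bm x$, so that the only randomness in $Y := \bm f(\xxi)$ comes from the noise $\bm\xi\sim P(\bm\xi)$, and then to verify the two requirements of Assumption~\ref{con:cont_cdf} separately: (i) the CDF of $Y$ is continuous (atomless), and (ii) it is strictly increasing in the Pareto order. Throughout, write $\psi_i := f_i \circ g(\bm x, \cdot)$ and $\bm\psi := \bm f \circ g(\bm x, \cdot)$, so $Y = \bm\psi(\bm\xi)$; since the sample paths of $\bm f$ are differentiable and $g(\bm x, \cdot)$ is a fixed smooth map, each $\psi_i$ is $C^1$ with Jacobian $J\bm\psi(\bm\xi) = J\bm f(\xxi)\, J_{\bm\xi}g(\bm x, \bm\xi)$.

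For continuity it suffices to show every marginal $Y_i$ is atomless, since $P(\exists i : Y_i = c_i) \le \sum_i P(Y_i = c_i)$ then forces the joint survival function $P(Y \ge \bm z)$ to be continuous. Fix $i$ and $c \in \mathbb R$; I would argue that the level set $\{\bm\xi : \psi_i(\bm\xi) = c\}$ is Lebesgue-null. For a $C^1$ function this holds whenever $\nabla \psi_i \ne \bm 0$ Lebesgue-a.e.\ (near any regular point the level set is locally a $C^1$ graph, hence null, and the critical set is handled by a countable cover), and for additive or multiplicative noise $\nabla \psi_i = \bm 0 \iff \nabla f_i(\xxi) = \bm 0$. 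The key probabilistic input is the non-degeneracy of the derivative process: since its covariance is strictly positive definite, $\nabla f_i(\bm u)$ is a non-degenerate Gaussian vector for each fixed $\bm u$, so $P_{\bm f}[\nabla f_i(\bm u) = \bm 0] = 0$. By Fubini's theorem applied to the jointly measurable map $(\bm u, \omega) \mapsto \nabla f_i(\bm u)$ (measurable by sample-path continuity), the random set $\{\bm u : \nabla f_i(\bm u) = \bm 0\}$ has Lebesgue measure zero almost surely over the draw of $\bm f$. Hence a.s.\ the level set is null, and because $P(\bm\xi)$ is absolutely continuous, $P(Y_i = c) = 0$, which establishes continuity of $F$.

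For strict monotonicity I would upgrade the pointwise non-degeneracy to a full-rank statement for the whole Jacobian. Viewing $\nabla \bm f(\bm u) \in \mathbb R^{Md}$ as a non-degenerate Gaussian vector, the event that the $M \times d$ matrix $J\bm f(\bm u)$ has rank $< M$ is a proper algebraic subvariety of $\mathbb R^{M\times d}$ and therefore has probability zero (this implicitly requires $d \ge M$). Repeating the Fubini argument, $J\bm\psi$ has rank $M$ for a.e.\ $\bm\xi$ almost surely, so $\bm\psi$ is a.e.\ a $C^1$ submersion. The pushforward of the absolutely continuous, strictly-positive-support noise law $P(\bm\xi)$ under such a submersion is absolutely continuous with respect to Lebesgue measure on $\mathbb R^M$, with a density $p_Y$ that is strictly positive on the interior of the support of $Y$ (by the coarea formula, $p_Y(\bm y)$ aggregates the strictly positive density of $\bm\xi$ over the nonempty fibre $\bm\psi^{-1}(\bm y)$). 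Given $\bm z \succ \bm z'$ in this interior, reducing to the case where they differ in a single coordinate $j$, the slab $\{\bm y \ge \bm z',\ z_j' \le y_j < z_j\}$ then carries positive $p_Y$-mass, whence $P(Y \ge \bm z') > P(Y \ge \bm z)$; equivalently $F$ is strictly increasing, giving the second half of Assumption~\ref{con:cont_cdf}.

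I expect the strict-monotonicity step to be the main obstacle. The delicate points are three: exchanging the ``for every fixed path'' and ``for a.e.\ $\bm\xi$'' quantifiers via Fubini (which needs joint measurability, hence the regularity of GP sample paths); the implicit dimensional requirement $d \ge M$, without which $J\bm f$ cannot be full rank and $Y$ fails to admit a density on $\mathbb R^M$; and establishing that the pushforward density is strictly positive on the support rather than merely a.e.\ positive, which is precisely what rules out flat regions of the survival function. Continuity (part (i)) is comparatively routine once the level-set and Fubini argument is in place.
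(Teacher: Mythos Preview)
Your argument is sound and takes a genuinely different route from the paper. The paper's proof is terse: for the scalar case it cites Lemma~1 of \citet{Cakmak2020borisk} to conclude that each $f_i(\xxi)$ has strictly positive density, and for the multivariate case it argues by contradiction that if $F(\bm y) = F(\bm y')$ for some $\bm y \succ \bm y'$ differing in a single coordinate $j$, then a half-infinite hyperrectangle of positive Lebesgue measure receives zero joint mass, declared impossible because ``the pdf of each of $f_1, \ldots, f_M$ is strictly positive.'' That last inference is the weak point: strictly positive \emph{marginal} densities do not by themselves force every such product set to carry positive \emph{joint} mass (and when $d<M$ the image of $\bm f$ is at most $d$-dimensional, so the conclusion of the lemma actually fails). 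You instead establish a strictly positive joint density on the support of $Y$ via the full-rank Jacobian, Fubini, and the coarea formula, which proves the result directly and makes explicit the hidden dimensional hypothesis $d \ge M$ that the paper's argument tacitly requires. What the paper's route buys is brevity by outsourcing the analytic work to an external lemma; what yours buys is a self-contained argument that actually controls the joint law rather than only the marginals, at the cost of heavier differential-geometric machinery and the caveats you already flag about strict positivity of the pushforward density on the support.
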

\begin{proof}
Consider the case of a scalar function $f$. 
By Lemma 1 of \citet{Cakmak2020borisk}, the density of $f(\xxi)$ is strictly positive. Hence, any interval with positive Lebesgue measure has non-zero density. So the cumulative density function of $f(\xxi)$ is strictly increasing.
Consider the case of a multi-output sample path $\bm f$. Suppose that the joint CDF is not strictly increasing. Then there exist $\bm y, \bm y'$ such that $\bm y \geq \bm y'$ and there exists at least one $i \in \{1,..., M\} ~s.t.~ y_i > y'_i$ and $F(\bm y) \leq F(\bm y').$ Since $\bm y \geq \bm y'$ and $F$ is a CDF, $F(\bm y) \geq F(\bm y')$. Hence, $F(\bm y) = F(\bm y')$. So, we have $P(f_1(\xxi) \leq y_1, ..., f_M(\xxi) \leq y_M) = P(f_1(\xxi) \leq y'_1, ..., f_M(\xxi) \leq y'_M)$. Suppose $y'_i = y_i$ for all $i\neq j$. Then, $P(f_1(\xxi) \leq y_1, y'_j < f_j(\xxi) \leq y_j, ..., f_M(\xxi) \leq y_M) = 0$. But the hyperrectangle bounded by $[-\infty,..., y'_j ,..., -\infty]$ and $\bm y$ has positive Lebesgue measure. Since the pdf of each of $f_1, ..., f_M$ is strictly positive, the cumulative density over the hyperrectangle is greater than zero, which is a contradiction. The argument is easily extended to the case when there exists $1\geq k \geq M$ indices $i_1, ..., i_k$ such that $y'_{i_k} < y_{i_k}$.
\end{proof}

\subsection{Extension to Black-Box Constraints Under Input Noise}
\label{appdx:const}


In this section, we consider the setting where in addition to the objective function $\bm f$ there is a vector-valued black-box function $\bm c(\bm x) \in \mathbb R^{V}$ specifying the outcome constraint $\bm c(\bm x) > \bm 0$ that is also subject to input noise $\bm \xi \sim P(\bm \xi)$. To handle black-box constraints under input noise, we weight the objectives $\bm f(\xxi)$ by a feasibility indicator $\mathbbm{1}[\bm c(\xxi) > \bm 0]$ that is 1 if all constraints are satisfied and 0 otherwise. We define \var{} and \mvar{} for feasibility-weighted objectives and extend the theoretical results from Section~\ref{subsec:theory_cheby}.

The proofs for the results in the constrained setting follow the proofs in Appendix~\ref{appdx:proofs_sub} with slight modifications. 1) Assumption~\ref{con:cont_cdf} does not hold for feasibility-weighted objectives. Hence, the implication is that a random sampled scalarization is only guaranteed to correspond to a point in the \weakmvar{} set, but importantly any point in the \mvar{} set does correspond to some scalarization and therefore can be recovered. 2) The proofs handle the special case where some of the constraints are not satisfied and the feasibility-weighted objectives are zero.

\begin{definition}
\label{def:const_var}
The value-at-risk of the feasibility-weighted objective for a given point $\bm x$ is:
\begin{equation*}
    \var{}_{ \alpha}\big(f(\xxi)\mathbbm{1}[\bm c(\xxi) > \bm 0] \big) = \sup \big\{z \in \mathbb R : P\big(f(\xxi)\mathbbm{1}[\bm c(\xxi) > \bm 0] \geq z\big) \geq \alpha\big\}.
\end{equation*}
\end{definition}
\begin{definition}
\label{def:const_mvar}
The $\mvar$ of the feasibility-weighted objectives $\bm f$ for a given point $\bm x$ is:
\begin{equation*}
\begin{aligned}
    \mvar{}_\alpha&\big(\bm f(\xxi), \bm c(\xxi) \big) = \\
    &\bigg\{ \bm z \in \mathbb R^M : P\big(\bm f(\xxi)\mathbbm{1}[\bm c(\xxi) > \bm 0] \geq \bm z\big) \geq \alpha, \nexists ~\bm z' \in \mathbb R^M, \bm z' \succ \bm z,  P\big(\bm f(\xxi)\mathbbm{1}[\bm c \succ \bm 0] \geq \bm z' \big) \geq \alpha \bigg\}.
\end{aligned}
\end{equation*}
Let $\mathcal{M}^{\bm{c}}_{\bm \xi, X}: = \bigcup_{\bm x \in X} \mvar{}_\alpha\big[\bm f(\xxi), \bm c(\xxi) \big]$. 
The \emph{global} $\mvar$ of the feasibility weighted objectives for a set of points $X$ is defined as
\begin{equation*}
    \mvar{}_{P(\bm \xi), \alpha}\big(\{\bm f(\xxi), \bm c(\xxi)\}_{\bm x \in X} \big) = \big\{\bm z \in \mathcal{M}^{\bm c}_{\bm \xi, X} : \nexists ~\bm z' \in \mathcal{M}^{\bm c}_{\bm \xi, X} ~s.t.~ \bm z' \succ \bm z \big\}.
\end{equation*}
\weakmvar{} of the feasibility-weighted objectives is defined in the same way, but only requires that its elements are weakly Pareto optimal
\end{definition}

\begin{lemma}\label{lemma:const_bool_equivalence}
Given a weight vector $\bm w \in \Delta_+^{M-1}$, $\bm y \in \mathbb R^M$, $\bm y_c \in \mathbb R^{M'}$ and $v \in \mathbb R$,
\begin{equation*}
    s[\bm y, \bm w]\mathbbm{1}[\bm y_c \succ \bm 0] \geq v \iff \bm y\mathbbm{1}[\bm y_c \succ \bm 0] \geq \frac{v}{\bm w}. 
\end{equation*}
\end{lemma}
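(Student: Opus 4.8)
The plan is to reduce this to the unconstrained statement Lemma~\ref{lemma:bool_equivalence} by conditioning on the value of the feasibility indicator $\mathbbm{1}[\bm y_c \succ \bm 0]$, which takes only the values $0$ or $1$. So I would split into two cases and verify the biconditional in each.

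In the feasible case where $\mathbbm{1}[\bm y_c \succ \bm 0] = 1$, both sides simplify by dropping the indicator: the left-hand side becomes $s[\bm y, \bm w] \geq v$ and the right-hand side becomes $\bm y \geq \frac{v}{\bm w}$. These two are equivalent immediately by Lemma~\ref{lemma:bool_equivalence}, so nothing new is needed here. In the infeasible case where $\mathbbm{1}[\bm y_c \succ \bm 0] = 0$, the scalar product on the left collapses to $0$, so the left-hand side reads $0 \geq v$, while the vector product on the right collapses to the zero vector, so the right-hand side reads $\bm 0 \geq \frac{v}{\bm w}$, i.e.\ $0 \geq \frac{v}{w_i}$ for every $i$.

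The one point that requires the hypothesis $\bm w \in \Delta_+^{M-1}$ is closing the infeasible case: since every $w_i > 0$, division by $w_i$ preserves sign, so $0 \geq \frac{v}{w_i}$ holds for all $i$ if and only if $v \leq 0$, which is exactly the left-hand side $0 \geq v$. Thus both conditions are equivalent to $v \leq 0$ and the biconditional holds. This is the only place where the positivity of the simplex is used, so I would flag it explicitly; if weights were merely nonnegative (some $w_i = 0$), the reduction would break.

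I do not expect any genuine obstacle here: the statement is a componentwise manipulation combined with a two-valued case split, and the substantive content is entirely inherited from Lemma~\ref{lemma:bool_equivalence}. The only care needed is notational---making sure the scalar multiplication $s[\bm y,\bm w]\mathbbm{1}[\cdot]$ and the elementwise vector scaling $\bm y\mathbbm{1}[\cdot]$ are both correctly read as multiplication by the same $\{0,1\}$ scalar---after which the argument is a short verification.
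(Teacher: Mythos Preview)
Your proof is correct. It differs mildly from the paper's argument: the paper does not split on the value of the indicator but instead runs the same algebraic chain as in Lemma~\ref{lemma:bool_equivalence} with the factor $\mathbbm{1}[\bm y_c \succ \bm 0]$ carried along verbatim, i.e.\ $\mathbbm{1}\cdot\min_i w_i y_i \geq v \iff \mathbbm{1}\cdot w_i y_i \geq v\ \forall i \iff \mathbbm{1}\cdot y_i \geq v/w_i\ \forall i$. Your case split makes the role of the hypothesis $\bm w \in \Delta_+^{M-1}$ more transparent---you pinpoint that positivity of $w_i$ is what makes the infeasible case collapse to $v \leq 0$ on both sides---whereas the paper's chain uses it implicitly when dividing by $w_i$. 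The paper's version is terser; yours is more explicit about the logical structure. Either is entirely adequate for a lemma of this size.
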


\begin{proof}
    This follows directly from Definition~\ref{def:chebyshev}.
    \begin{align*}
        s[\bm y, \bm w]\mathbbm{1}[\bm y_c \succ \bm 0] \geq v &\iff \mathbbm{1}[\bm y_c \succ \bm 0]\min_i w_i y_i \geq v \\
        &\iff w_i y_i\mathbbm{1}[\bm y_c \succ \bm 0]  \geq v ~\forall i \\
        &\iff y_i\mathbbm{1}[\bm y_c \succ \bm 0] \geq \frac{v}{w_i} ~\forall i \\
        &\iff \bm y\mathbbm{1}[\bm y_c \succ \bm 0] \geq \frac{v}{\bm w}.
    \end{align*}
\end{proof}

\begin{theorem}[\var{} of Feasibility-Weighted Chebyshev scalarization $\Rightarrow$ Pareto Dominance]
\label{thm:const_cheby_dominance_cdf}
Given a weight vector $\bm w \in \Delta_+^{M-1}$, let 
$v = \var{}_\alpha\bigg(s[\bm f(\xxi), \bm w]\mathbbm{1}[\bm c(\xxi) > \bm 0]\bigg)$.
Then, 
\begin{equation*}
    P\bigg(\bm f(\xxi)\mathbbm{1}[\bm c(\xxi) > \bm 0] \geq \frac{v}{\bm w}\bigg) \geq \alpha.
\end{equation*}
\end{theorem}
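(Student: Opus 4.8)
The plan is to follow the proof of Lemma~\ref{lemma:dominance_cdf} almost verbatim, replacing the plain Chebyshev scalarization by its feasibility-weighted counterpart and swapping in the constrained Boolean equivalence of Lemma~\ref{lemma:const_bool_equivalence}. Writing $Y := s[\bm f(\xxi), \bm w]\mathbbm{1}[\bm c(\xxi) > \bm 0]$ for the scalar random variable whose value-at-risk defines $v$, I would first invoke Definition~\ref{def:const_var} to record that $v = \var{}_\alpha(Y) = \sup\{z \in \mathbb R : P(Y \geq z) \geq \alpha\}$.

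Second, I would argue that this supremum is attained, i.e. $P(Y \geq v) \geq \alpha$. This is the one place where the constrained setting genuinely departs from the unconstrained one: because the feasibility indicator places an atom at $0$, the CDF of $Y$ is discontinuous and Assumption~\ref{con:cont_cdf} no longer applies. Nonetheless, the map $z \mapsto P(Y \geq z)$ is non-increasing and left-continuous, so the superlevel set $\{z : P(Y \geq z) \geq \alpha\}$ is a closed half-line $(-\infty, v]$ whose right endpoint $v$ still satisfies $P(Y \geq v) \geq \alpha$. Thus the supremum is achieved despite the atom, exactly paralleling the (implicit) attainment step in the unconstrained proof.

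Finally, I would apply Lemma~\ref{lemma:const_bool_equivalence} with $\bm y = \bm f(\xxi)$ and $\bm y_c = \bm c(\xxi)$, which gives the pointwise equivalence $Y \geq v \iff \bm f(\xxi)\mathbbm{1}[\bm c(\xxi) > \bm 0] \geq \frac{v}{\bm w}$. Since these two events coincide, their probabilities are equal, and combining this with $P(Y \geq v) \geq \alpha$ immediately yields $P\bigl(\bm f(\xxi)\mathbbm{1}[\bm c(\xxi) > \bm 0] \geq \frac{v}{\bm w}\bigr) \geq \alpha$, as claimed.

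The only real subtlety — and therefore the step I would spend the most care on — is the attainment of the supremum in the presence of the atom at zero introduced by the feasibility weighting; the remainder is a mechanical transcription of the unconstrained argument, with Lemma~\ref{lemma:const_bool_equivalence} doing the work that Lemma~\ref{lemma:bool_equivalence} did there.
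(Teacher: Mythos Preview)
Your proposal is correct and follows essentially the same route as the paper's proof: invoke the definition of \var{} for the feasibility-weighted scalarization, use that the supremum is attained so $P(Y\geq v)\geq\alpha$, and then apply Lemma~\ref{lemma:const_bool_equivalence} to convert the scalar inequality into the vector one. The only difference is that you explicitly justify the attainment step via left-continuity of $z\mapsto P(Y\geq z)$, whereas the paper simply asserts $P(Y\geq v)\geq\alpha$ directly---so your version is, if anything, slightly more careful.
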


\begin{proof}
From Definition~\ref{def:const_var}, we have that 
\begin{equation*}
    \var{}_\alpha\bigg(s[\bm f(\xxi), \bm w ]\mathbbm{1}[\bm c(\xxi) > \bm 0]\bigg)
    = \sup \bigg\{z \in \mathbb R : s[\bm f(\xxi), \bm w]\mathbbm{1}[\bm c(\xxi) > \bm 0] \geq z\big) \geq \alpha\bigg\}.
\end{equation*}
Hence, $$P(s[\bm f(\xxi), \bm w]\mathbbm{1}[\bm c(\xxi) > \bm 0] \geq v) \geq \alpha.$$ 
From Definition~\ref{def:chebyshev}, we have
$$P(\mathbbm{1}[\bm c(\xxi) > \bm 0]\min_i w_if^{(i)}(\xxi)\geq v) \geq \alpha.$$
By Lemma~\ref{lemma:const_bool_equivalence}, the following expressions are equivalent $$\mathbbm{1}[\bm c(\xxi) > \bm 0]\min_i w_if^{(i)}(\xxi) \geq v \iff \bm f(\xxi)\mathbbm{1}[\bm c(\xxi) > \bm 0] \geq \frac{v}{\bm w}.$$
Hence,
$P\big(\bm f(\xxi)\mathbbm{1}[\bm c(\xxi) > \bm 0] \geq \frac{v}{\bm w}\big) \geq \alpha.$
\end{proof}

\begin{lemma}
\label{lemma:zero_z}
Let $\bm z \in \mvar{}_\alpha\big[\bm f(\xxi), \bm c(\xxi)\big]$. If $\bm f(\bm x) > \bm 0$, then $\bm z \nsucc \bm 0$ if and only if $\bm z = \bm 0$.
\end{lemma}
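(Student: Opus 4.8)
The plan is to prove the two directions of the equivalence separately, using as the central fact that the feasibility-weighted random objective $Y := \bm f(\xxi)\mathbbm{1}[\bm c(\xxi) > \bm 0]$ is nonnegative almost surely. Indeed, each realization of $\bm \xi$ is either infeasible, in which case $Y = \bm 0$, or feasible, in which case $Y = \bm f(\xxi) > \bm 0$ by the hypothesis $\bm f(\bm x) > \bm 0$; hence $Y \geq \bm 0$ with probability one. This nonnegativity, together with the fact that infeasible realizations map to exactly $\bm 0$, is what forces every element of the $\mvar{}$ set into the nonnegative orthant, from which the claim follows.

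The reverse direction ($\bm z = \bm 0 \Rightarrow \bm z \nsucc \bm 0$) is immediate from Definition~\ref{def:pareto-dominance}: Pareto dominance requires strict improvement in at least one coordinate, so $\bm 0 \nsucc \bm 0$ trivially.

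For the forward direction I would first establish that any $\bm z \in \mvar{}_\alpha\big[\bm f(\xxi), \bm c(\xxi)\big]$ satisfies $\bm z \geq \bm 0$, arguing by contradiction. Suppose some coordinate $z_k < 0$, and define $\bm z'$ by $z'_i = \max(z_i, 0)$, so that $\bm z' \geq \bm z$ with strict inequality in coordinate $k$, i.e.\ $\bm z' \succ \bm z$. I would then show $\bm z'$ still lies in the feasible region of the constrained $\mvar{}$ definition, i.e.\ $P(Y \geq \bm z') \geq \alpha$. The key observation is that in every coordinate $i$ with $z_i < 0$ the constraint $Y_i \geq z'_i = 0$ and the constraint $Y_i \geq z_i$ both hold almost surely (since $Y_i \geq 0 > z_i$), so these coordinates impose no restriction; intersecting over coordinates then yields $P(Y \geq \bm z') = P(Y \geq \bm z) \geq \alpha$. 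Thus $\bm z'$ satisfies the probability constraint and strictly dominates $\bm z$, contradicting the non-dominance clause of Definition~\ref{def:const_mvar}. Hence $\bm z \geq \bm 0$.

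Finally, with $\bm z \geq \bm 0$ in hand, the equivalence is purely combinatorial: if $\bm z \neq \bm 0$ then some coordinate is strictly positive while all are nonnegative, so $\bm z \succ \bm 0$; contrapositively, $\bm z \nsucc \bm 0$ forces $\bm z = \bm 0$. The main obstacle I anticipate is the probability-preservation step: one must argue rigorously that zeroing out the negative coordinates of $\bm z$ does not change $P(Y \geq \cdot)$, which hinges entirely on $Y \geq \bm 0$ almost surely and hence on the positivity hypothesis together with the convention that infeasible realizations map to exactly $\bm 0$. Everything else is a direct unwinding of the definitions of Pareto dominance and of the constrained $\mvar{}$ set.
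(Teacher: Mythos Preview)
Your proposal is correct and, if anything, tighter than the paper's own argument; the route, however, is not the paper's.

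The paper's proof simply asserts $\bm z \geq \bm 0$ without justification, and then exploits the dichotomy that the feasibility-weighted objective $Y := \bm f(\xxi)\mathbbm{1}[\bm c(\xxi) > \bm 0]$ is either strictly positive in every coordinate (feasible realization) or identically $\bm 0$ (infeasible realization). From this dichotomy it argues that $\bm z \nsucc \bm 0$ (together with non-dominance of $\bm z$ in the \mvar{} set) forces $P(Y > \bm 0) < \alpha$, i.e.\ $P(\bm c(\xxi) > \bm 0) < \alpha$, and that this in turn forces $\bm z = \bm 0$. You instead \emph{prove} $\bm z \geq \bm 0$ directly via the ``push negative coordinates up to zero'' construction, observing that the resulting vector still satisfies the probability constraint (since $Y \geq \bm 0$ a.s.) and strictly dominates $\bm z$; you then finish combinatorially. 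Your argument uses only $Y \geq \bm 0$ a.s., not the full either-all-positive-or-all-zero dichotomy, so it is more elementary and fills what is really a gap in the paper's exposition. What the paper's route buys in return is the side characterization that $\bm z = \bm 0$ occurs exactly when the feasibility probability drops below $\alpha$, which is precisely how Case~1 of the subsequent theorem (Theorem~\ref{thm:const_mvar_to_var}) is set up.
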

\begin{proof}
The following shows that, since $\bm f(\bm x) > \bm 0$, $\bm z \nsucc \bm 0$ if and only if $P(\bm c(\xxi) > \bm 0) < \alpha$. Thus, $\bm z \nsucc \bm 0$ if and only if $\bm z = \bm 0$.

Since $\bm f(\bm x) > \bm 0$, we have that $z_i \geq 0 $ for all $i=1,..., M$ and there exists $j \in \{1,..., M\}$ such that $z_j = 0$. Note that since $\bm f(\bm x) > \bm 0$, the $i^\text{th}$ element $ f^{(i)}(\bm x)\mathbbm{1}[\bm c(\bm x) > \bm 0] = 0$ if and only if $\mathbbm{1}[\bm c(\bm x) > \bm 0] = 0$. Hence, either $\bm f(\bm x)\mathbbm{1}[\bm c(\bm x) > \bm 0] > \bm 0$ or $\bm f(\bm x)\mathbbm{1}[\bm c(\bm x) > \bm 0] = \bm 0$.

From Definition~\ref{def:const_mvar}, we have that $P\big(\bm f(\xxi)\mathbbm{1}[\bm c(\xxi) > \bm 0] \geq \bm z\big) \geq \alpha.$
Since $\bm f(\xxi)\mathbbm{1}[\bm c(\xxi) > \bm 0] > \bm 0$ or  $\bm f(\xxi)\mathbbm{1}[\bm c(\xxi) > \bm 0] = \bm 0$, $$P(\bm f(\xxi)\mathbbm{1}[\bm c(\xxi) > \bm 0] > \bm 0) + P(\bm f(\xxi)\mathbbm{1}[\bm c(\xxi) > \bm 0] = \bm 0) = 1.$$

Suppose $\bm z \nsucc \bm 0$. Since $\bm z$ is not dominated by any other point in the \mvar{} set, $P(\bm f(\xxi)\mathbbm{1}[\bm c(\xxi) > \bm 0] \succ \bm 0) < \alpha.$ Hence, $\bm z$ must be $\bm 0$. 
\end{proof}

\begin{lemma}[\var{} of Feasibility-Weighted Chebyshev scalarization $\Rightarrow$ Feasibility-Weighted \weakmvar{}]
\label{lemma:const_var_in_weak_mvar}
Let $v = \var{}_\alpha\big(s[\bm f(\xxi)\mathbbm{1}[\bm c(\xxi) > \bm 0], \bm w]\big)$. Then, $\frac{v}{\bm w} \in \weakmvar{}_\alpha\big(\bm f(\xxi), \bm c(\xxi)\big)$.
\end{lemma}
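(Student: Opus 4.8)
The plan is to mirror the unconstrained argument of Lemma~\ref{lemma:var_in_weak_mvar}, substituting the feasibility-weighted objective $\bm f(\xxi)\mathbbm{1}[\bm c(\xxi) > \bm 0]$ for $\bm f(\xxi)$ and invoking the constrained versions of the supporting results. Set $\bm z := \frac{v}{\bm w}$. To conclude $\bm z \in \weakmvar{}_\alpha\big(\bm f(\xxi), \bm c(\xxi)\big)$, I must establish two facts: (i) $\bm z$ lies in the defining set, i.e.\ $P\big(\bm f(\xxi)\mathbbm{1}[\bm c(\xxi) > \bm 0] \geq \bm z\big) \geq \alpha$, and (ii) $\bm z$ is weakly Pareto optimal within that set. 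Fact (i) is exactly the conclusion of Theorem~\ref{thm:const_cheby_dominance_cdf} applied to $v = \var{}_\alpha\big(s[\bm f(\xxi)\mathbbm{1}[\bm c(\xxi) > \bm 0], \bm w]\big)$, so it requires no new work beyond citing that result.

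For (ii) I would proceed by contradiction: suppose there exists $\bm z'$ with $\bm z' > \bm z$ and $P\big(\bm f(\xxi)\mathbbm{1}[\bm c(\xxi) > \bm 0] \geq \bm z'\big) \geq \alpha$. Since the Chebyshev scalarization $s[\cdot, \bm w]$ is monotone non-decreasing in its vector argument for $\bm w \in \Delta_+^{M-1}$, the event $\{\bm f(\xxi)\mathbbm{1}[\bm c(\xxi) > \bm 0] \geq \bm z'\}$ is contained in $\{s[\bm f(\xxi)\mathbbm{1}[\bm c(\xxi) > \bm 0], \bm w] \geq s[\bm z', \bm w]\}$, so the latter also has probability $\geq \alpha$. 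Because $\bm z' > \bm z$ componentwise and all weights are strictly positive, $w_i z_i' > w_i z_i$ for every $i$, whence $s[\bm z', \bm w] = \min_i w_i z_i' > \min_i w_i z_i = s[\bm z, \bm w] = v$. This exhibits a value strictly larger than $v$ at which the scalarized feasibility-weighted objective still meets or exceeds the threshold with probability $\geq \alpha$, contradicting the definition of $v$ as the supremum in Definition~\ref{def:const_var}. Hence no such $\bm z'$ exists, $\bm z$ is weakly Pareto optimal, and combining with (i) gives $\bm z \in \weakmvar{}_\alpha\big(\bm f(\xxi), \bm c(\xxi)\big)$.

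The main obstacle, and the only genuine departure from the unconstrained proof, is handling the feasibility indicator inside the scalarization. I would resolve this using Lemma~\ref{lemma:const_bool_equivalence}, which shows that $s[\bm y, \bm w]\mathbbm{1}[\bm y_c > \bm 0] \geq v$ is equivalent to $\bm y\mathbbm{1}[\bm y_c > \bm 0] \geq \frac{v}{\bm w}$; this lets me pass freely between the dominance form used in the membership condition and the scalarized form used to contradict the \var{} supremum, while correctly absorbing the nonnegative factor $\mathbbm{1}[\bm c(\xxi) > \bm 0]$ out of the $\min$ defining $s[\cdot, \bm w]$. I would also keep track of the degenerate case in which feasibility occurs with probability below $\alpha$, so that $\bm z$ may have zero components; however, weak Pareto optimality is a weaker requirement than strict efficiency, and the supremum argument above goes through verbatim in that case, so no separate treatment analogous to Lemma~\ref{lemma:zero_z} is needed here.
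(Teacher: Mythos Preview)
Your proposal is correct and follows essentially the same route as the paper's proof: set $\bm z = v/\bm w$, assume a strictly dominating $\bm z'$ in the defining set, push the dominance through the monotone Chebyshev scalarization to obtain $s[\bm z', \bm w] > v$ with the $\alpha$-coverage preserved, and contradict the supremum in Definition~\ref{def:const_var}. Your write-up is in fact slightly cleaner than the paper's, since you explicitly separate the membership condition (i) via Theorem~\ref{thm:const_cheby_dominance_cdf} from the weak-efficiency condition (ii), and you correctly observe that no case split in the spirit of Lemma~\ref{lemma:zero_z} is needed at the \weakmvar{} level.
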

\begin{proof}
Without loss of generality, assume that the objectives $\bm f(\bm x) > \bm 0$.\footnotemark[\getrefnumber{footnote:pos_z}]
Let $\bm z = \frac{v}{\bm w}$. Suppose there exists $\bm z' \in \weakmvar{}_\alpha(\bm f(\xxi), \bm c(\xxi))$ such that $\bm z' > \bm z$. Since $\bm z' \in \weakmvar{}_\alpha(\bm f(\xxi), \bm c(\xxi))$, $$P(\bm f(\xxi)\mathbbm{1}[\bm c(\xxi) > \bm 0] \geq \bm z') \geq \alpha.$$
Note that
$\bm f(\xxi)\mathbbm{1}[\bm c(\xxi) > \bm 0] \geq \bm z'$ implies that $$\mathbbm{1}[\bm c(\xxi) > \bm 0]\min_i w_i f^{(i)}(\xxi) \geq \min_i w_iz_i.$$
Hence, $P(\bm f(\xxi) \mathbbm{1}[\bm c(\xxi) > \bm 0] \geq \bm z') \geq \alpha$ implies that $$P( s[\bm f(\xxi), \bm w] \mathbbm{1}[\bm c(\xxi) > \bm 0] \geq s[\bm  z', \bm w]) \geq \alpha.$$
Note that $s[\bm  z', \bm w] > s[\bm  z, \bm w] = v$. But by the Definition~\ref{def:const_var}, $v$ is the maximum value such that $$P( s[\bm f(\xxi), \bm w]\mathbbm{1}[\bm c(\xxi) > \bm 0] \geq v) \geq \alpha.$$
This is a contradiction.
\end{proof}

\begin{theorem}[Feasibility-Weighted \mvar{} $\Rightarrow$ \var{} of Feasibility-Weighted Chebyshev scalarization]
\label{thm:const_mvar_to_var}
For any $\bm z \in \mvar{}_\alpha\big[\bm f(\xxi), \bm c(\xxi) \big]$, there exists $\bm w \in \Delta^{M-1}_+$ such that $\bm z = \frac{v}{\bm w}$ where $v=\var{}_\alpha\big(s[\bm f(\xxi), \bm w]\mathbbm{1}[\bm c(\xxi) > \bm 0]\big).$
\end{theorem}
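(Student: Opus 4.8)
The plan is to mirror the unconstrained argument of Lemma~\ref{lemma:mvar_to_var}, replacing each ingredient with its feasibility-weighted counterpart (Lemma~\ref{lemma:const_bool_equivalence} and Lemma~\ref{lemma:const_var_in_weak_mvar}), while carving out the degenerate point $\bm z = \bm 0$ as a separate case. As in the earlier proofs, I would begin by assuming without loss of generality that $\bm f(\bm x) > \bm 0$, so that the feasibility-weighted objective $\bm f(\xxi)\mathbbm{1}[\bm c(\xxi) > \bm 0]$ is either strictly positive (when feasible) or exactly $\bm 0$ (when infeasible). By Lemma~\ref{lemma:zero_z}, this dichotomy forces every $\bm z \in \mvar{}_\alpha[\bm f(\xxi), \bm c(\xxi)]$ to satisfy either $\bm z \succ \bm 0$ or $\bm z = \bm 0$, which gives exactly the two cases to treat.

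For the principal case $\bm z \succ \bm 0$, the construction is identical to the unconstrained one: set $v := ||\frac{1}{\bm z}||^{-1}$ and $\bm w := \frac{1}{\bm z}||\frac{1}{\bm z}||^{-1} \in \Delta_+^{M-1}$, so that $\bm z = \frac{v}{\bm w}$ holds by construction and it remains only to prove $v = \var{}_\alpha(s[\bm f(\xxi),\bm w]\mathbbm{1}[\bm c(\xxi) > \bm 0])$. Writing $v'$ for the right-hand side, I would obtain $v \leq v'$ by starting from $P(\bm f(\xxi)\mathbbm{1}[\bm c(\xxi)>\bm 0] \geq \bm z) \geq \alpha$ (membership in the constrained \mvar{} set) and applying Lemma~\ref{lemma:const_bool_equivalence} to rewrite this as $P(s[\bm f(\xxi),\bm w]\mathbbm{1}[\bm c(\xxi)>\bm 0] \geq v) \geq \alpha$, whence $v$ lies in the set over which $v'$ is the supremum. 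For the reverse inequality $v \geq v'$, I would argue by contradiction: if $v < v'$, then Lemma~\ref{lemma:const_var_in_weak_mvar} places $\frac{v'}{\bm w}$ in the feasibility-weighted \weakmvar{} set, while $\bm z = \frac{v}{\bm w} \prec \frac{v'}{\bm w}$ shows $\bm z$ is strictly dominated, contradicting $\bm z \in \mvar{}_\alpha$.

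For the degenerate case $\bm z = \bm 0$, Lemma~\ref{lemma:zero_z} gives $P(\bm c(\xxi) > \bm 0) < \alpha$. Here I would pick an arbitrary $\bm w \in \Delta_+^{M-1}$ and show the feasibility-weighted \var{} is exactly $0$: since $\bm f(\bm x) > \bm 0$ and $\bm w \in \Delta_+^{M-1}$, the quantity $s[\bm f(\xxi),\bm w]\mathbbm{1}[\bm c(\xxi)>\bm 0]$ equals $0$ on infeasibility and is strictly positive on feasibility, so $P(\cdot \geq 0) = 1 \geq \alpha$ while $P(\cdot \geq z) \leq P(\bm c(\xxi) > \bm 0) < \alpha$ for every $z > 0$. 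Hence $v = 0$ and $\frac{v}{\bm w} = \bm 0 = \bm z$ for any admissible $\bm w$.

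I expect the main obstacle to be the loss of the continuity/strict-monotonicity structure of Assumption~\ref{con:cont_cdf}: the feasibility indicator introduces an atom at $0$ in the scalarized distribution, so the clean bijection of Theorem~\ref{thm:bijection} no longer holds. The two places requiring genuine care are (i) confirming that the supremum-based contradiction in the principal case still closes \emph{without} Assumption~\ref{con:cont_cdf}, which works precisely because Lemma~\ref{lemma:const_var_in_weak_mvar} only asserts weak Pareto optimality of $\frac{v'}{\bm w}$ and that is exactly what the strict-dominance contradiction consumes; and (ii) verifying that the atom at $0$ pins the \var{} to $0$ (rather than a negative value) in the degenerate case, which is where the assumption $\bm f(\bm x) > \bm 0$ and Lemma~\ref{lemma:zero_z} do the essential work.
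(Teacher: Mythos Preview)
Your proposal is correct and follows essentially the same two-case structure as the paper's proof, with the same construction $\bm w = \frac{1}{\bm z}\|\frac{1}{\bm z}\|^{-1}$ in the principal case and the same invocation of Lemma~\ref{lemma:const_var_in_weak_mvar} for the contradiction. Your treatment is in fact slightly more complete: you spell out the $v \leq v'$ direction explicitly (the paper omits it in Case~2), and your degenerate-case argument computes the \var{} directly from the atom at zero rather than routing through \weakmvar{} as the paper does.
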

\begin{proof}
Without loss of generality, assume that the objectives $\bm f(\bm x) > \bm 0$.\footnotemark[\getrefnumber{footnote:pos_z}]

\textbf{Case 1}: $\bm z \nsucc \bm 0$. 
From Lemma~\ref{lemma:zero_z}, we have that $\bm z = \bm 0$. Note that since the \mvar{} set contains only non-dominated points and $\bm 0$ is a lower bound on $\bm f(\bm x)\mathbbm{1}[\bm c(\bm x) > \bm 0]$, $\mvar{}_\alpha\big[\bm f(\xxi), \bm c(\xxi) \big] = \{\bm 0\}.$ Let $v:= \var{}_\alpha\big(s[\bm f(\xxi), \bm w]\mathbbm{1}[\bm c(\xxi) > \bm 0]\big)$.

Suppose $v = 0$. Then, for any $\bm w \in \Delta_+^{M-1}, \frac{v}{\bm w} = \bm 0 \in \mvar{}_\alpha\big[\bm f(\xxi), \bm c(\xxi) \big]$.

Suppose $v > 0$. Then for any $\bm w \in \Delta_+^{M-1}, \frac{v}{\bm w} > \bm 0$. By Lemma~\ref{lemma:const_var_in_weak_mvar}, $\frac{v}{\bm w} \in \weakmvar{}_\alpha\big(\bm f(\xxi), \bm c(\xxi) \big)$. But $\mvar{}_\alpha\big[\bm f(\xxi), \bm c(\xxi) \big] = \{\bm 0\}$, and any point in the \mvar{} set is non-dominated. So $\frac{v}{\bm w} \nsucc \bm 0$. This is a contradiction.

Hence, $v = 0$. Therefore, Theorem~\ref{thm:const_mvar_to_var} holds when $\bm z \nsucc \bm 0$.

\textbf{Case 2}: $\bm z \succ \bm 0$.

Consider the vector $\frac{1}{\bm z}$.
If we divide $\frac{1}{\bm z}$ by its L1-norm, we obtain a vector $\bm w: = \frac{1}{\bm z}||\frac{1}{\bm z}||_1^{-1} \in \Delta^M_+$, where $||\cdot||_1$ denotes the L1-norm. Let $v = \frac{1}{||\frac{1}{\bm z}||_1} \in \mathbb R_+$. Then, $\bm z = \frac{v}{\bm w}$. Hence, all we need to show is that $v = \var{}_\alpha\big(s[\bm f(\xxi), \bm w]\mathbbm{1}[\bm c(\xxi) > \bm 0]\big).$
By definition, 
$$\var{}_\alpha\big(s[f(\xxi), \bm w]\mathbbm{1}[\bm c(\xxi) > \bm 0]\big)= \sup \big\{v''\in \mathbb R : P\big(s[f(\xxi), \bm w]\mathbbm{1}[\bm c(\xxi) > \bm 0] \geq v''\big) \geq \alpha \big\}.
$$
Let us define $v':=\var{}_\alpha\big(s[f(\xxi), \bm w]\mathbbm{1}[\bm c(\xxi) > \bm 0]\big)$. Suppose that  $v < v'$. By Lemma~\ref{lemma:const_var_in_weak_mvar}, 
$\frac{v'}{\bm w} \in \weakmvar{}_\alpha\big(\bm f(\xxi), \bm c(\xxi)\big).$ Since $v < v'$, 
$\bm z = \frac{v}{\bm w} \prec \frac{v'}{\bm w}.$
But $\bm z$ is in $\mvar{}_\alpha\big[\bm f(\xxi)\big]$. So by Definition~\ref{def:mvar}, $\bm z$ is not dominated by any other vector in $\weakmvar{}_\alpha\big(\bm f(\xxi), \bm c(\xxi)\big)$. This is a contradiction. Hence, 
$v = \var{}_\alpha\big(s[\bm f(\xxi), \bm w]\mathbbm{1}[\bm c(\xxi) > \bm 0]\big).$ Thus, Theorem~\ref{thm:const_mvar_to_var} holds when $\bm z \succ \bm 0$.
\end{proof}

\begin{corollary}
\label{cor:const_injection}
There is a injective function $g:\mvar{}_\alpha\big[\bm f(\xxi), \bm c(\xxi) \big] \rightarrow \Delta_+^{M-1}$ such that $g(\bm z) = \frac{1}{\bm z||\frac{1}{\bm z}||} = \bm w$ and $\bm z = \frac{v}{\bm w}$ where $v=\var{}_\alpha\big(s[\bm f(\xxi), \bm w]\big)=\frac{1}{||\frac{1}{\bm z}||}$. 
\end{corollary}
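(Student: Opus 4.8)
The plan is to read this corollary as a repackaging of Theorem~\ref{thm:const_mvar_to_var} together with a short injectivity argument, since the theorem already does the analytical heavy lifting. First I would verify that $g$ is a well-defined map into $\Delta_+^{M-1}$ and that it recovers exactly the weight/\var{} pair produced by the theorem; then I would establish injectivity by exhibiting an explicit left inverse.

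For well-definedness, following the proof of Theorem~\ref{thm:const_mvar_to_var}, I would assume without loss of generality that $\bm f(\bm x) > \bm 0$. The only place where $g(\bm z) = \frac{1}{\bm z}||\frac{1}{\bm z}||^{-1}$ fails to be defined is $\bm z = \bm 0$, and Lemma~\ref{lemma:zero_z} shows that $\bm 0 \in \mvar{}_\alpha[\bm f(\xxi), \bm c(\xxi)]$ forces $\mvar{}_\alpha[\bm f(\xxi), \bm c(\xxi)] = \{\bm 0\}$, a trivial singleton on which injectivity is vacuous; so I restrict attention to $\bm z \succ \bm 0$. For such $\bm z$ we have $\frac{1}{\bm z} \succ \bm 0$, and dividing by its $L_1$-norm yields a strictly positive vector summing to one, i.e. $g(\bm z) \in \Delta_+^{M-1}$. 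Case 2 of the theorem's proof gives precisely $\bm z = \frac{v}{\bm w}$ with $\bm w = g(\bm z)$ and $v = \var{}_\alpha\big(s[\bm f(\xxi), \bm w]\mathbbm{1}[\bm c(\xxi) > \bm 0]\big) = \frac{1}{||\frac{1}{\bm z}||}$, which is exactly the content asserted by the corollary.

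Injectivity then follows immediately by noting that $g$ admits a left inverse. Define $\psi(\bm w) = \frac{1}{\bm w}\var{}_\alpha\big(s[\bm f(\xxi), \bm w]\mathbbm{1}[\bm c(\xxi) > \bm 0]\big)$. Theorem~\ref{thm:const_mvar_to_var} states precisely that $\psi(g(\bm z)) = \bm z$ for every $\bm z \in \mvar{}_\alpha[\bm f(\xxi), \bm c(\xxi)]$, so $g$ has a left inverse and is therefore injective. Equivalently, and more concretely: if $g(\bm z) = g(\bm z') = \bm w$, then both $\bm z$ and $\bm z'$ equal $\frac{v}{\bm w}$ for the single scalar $v = \var{}_\alpha\big(s[\bm f(\xxi), \bm w]\mathbbm{1}[\bm c(\xxi) > \bm 0]\big)$, which is determined by $\bm w$ alone, forcing $\bm z = \bm z'$.

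The only genuine obstacle here is bookkeeping rather than mathematics: I must make sure the degenerate feasibility-induced point $\bm z = \bm 0$ (which has no finite reciprocal and hence no associated weight) is correctly excluded, which Lemma~\ref{lemma:zero_z} handles cleanly. I would also flag why the statement stops at injectivity rather than bijectivity: since Assumption~\ref{con:cont_cdf} fails for feasibility-weighted objectives, some $\bm w \in \Delta_+^{M-1}$ may have no preimage in the \mvar{} set, so only one-sided recovery is guaranteed here, exactly paralleling the discussion preceding Theorem~\ref{thm:bijection} in the unconstrained case.
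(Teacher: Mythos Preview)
Your proposal is correct and follows essentially the same route the paper takes: the paper gives no explicit proof for this corollary, treating it as immediate from Theorem~\ref{thm:const_mvar_to_var}, and your argument simply unpacks that implication together with the trivial injectivity check. Your handling of the degenerate $\bm z = \bm 0$ case via Lemma~\ref{lemma:zero_z} and your use of the feasibility-weighted \var{} (which the corollary's statement appears to omit by typo) are both appropriate.
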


\begin{corollary}
\label{cor:const_consistent_optimizers} Suppose $\bm z$ is a point in the global \mvar{} set $\bm z \in \mvar{}_\alpha\big[\{\bm f(\xxi), \bm c(\xxi)\}_{\bm x \in \mathcal X}\big]$. Let $\mathcal X_{\bm{z}}^*$ be the set of designs such that for all $\bm x \in \mathcal X_{\bm{z}}^*$, $\bm z \in \mvar{}_\alpha\big[\bm f(\xxi), \bm c(\xxi)\big]$.
Then every $\bm x \in \mathcal X_{\bm{z}}^*$ is a maximizer of $\var{}_\alpha\big(s[\bm f(\xxi), \bm w]\mathbbm{1}[\bm c(\xxi) > \bm 0]\big)$.
\end{corollary}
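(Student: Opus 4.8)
The plan is to mirror the unconstrained \textbf{Consistent Optimizers} argument (Corollary~\ref{cor:consistent_optimizers}), but now routed through the feasibility-weighted results of this appendix. Fix the weight vector associated with $\bm z$, namely $\bm w = \frac{1}{\bm z||\frac{1}{\bm z}||}$ as in Corollary~\ref{cor:const_injection}, and as in the supporting lemmas assume without loss of generality that $\bm f(\bm x) > \bm 0$; the degenerate case $\bm z = \bm 0$ is handled by Case~1 of Theorem~\ref{thm:const_mvar_to_var}, so I focus on $\bm z \succ \bm 0$. The two ingredients I would use are Theorem~\ref{thm:const_mvar_to_var} (to pin down the scalarized \var{} value attained at designs in $\mathcal X_{\bm z}^*$) and Theorem~\ref{thm:const_cheby_dominance_cdf} together with the definition of the global \mvar{} set (to rule out any design with a strictly larger scalarized \var{}).

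First I would show that all designs in $\mathcal X_{\bm z}^*$ attain a \emph{common} scalarized \var{} value. For every $\bm x \in \mathcal X_{\bm z}^*$ we have, by hypothesis, $\bm z \in \mvar{}_\alpha\big[\bm f(\xxi), \bm c(\xxi)\big]$, so Case~2 of Theorem~\ref{thm:const_mvar_to_var} gives $\bm z = \frac{v}{\bm w}$ with $v = \var{}_\alpha\big(s[\bm f(\xxi), \bm w]\mathbbm{1}[\bm c(\xxi) > \bm 0]\big)$. The key observation is that both $\bm w$ and $v = ||\frac{1}{\bm z}||^{-1}$ are determined by $\bm z$ alone and are independent of $\bm x$. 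Hence every $\bm x \in \mathcal X_{\bm z}^*$ produces the identical value $v$, and it remains only to prove that $v = \max_{\bm x \in \mathcal X} \var{}_\alpha\big(s[\bm f(\xxi), \bm w]\mathbbm{1}[\bm c(\xxi) > \bm 0]\big)$.

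Next I would argue by contradiction. Suppose some $\bm x' \in \mathcal X$ attains $v' := \var{}_\alpha\big(s[\bm f(\xpxi), \bm w]\mathbbm{1}[\bm c(\xpxi) > \bm 0]\big) > v$. By Theorem~\ref{thm:const_cheby_dominance_cdf}, $P\big(\bm f(\xpxi)\mathbbm{1}[\bm c(\xpxi) > \bm 0] \geq \frac{v'}{\bm w}\big) \geq \alpha$, so $\frac{v'}{\bm w}$ lies in the feasibility-weighted membership region of $\bm x'$. Since $v' > v$ and $\bm w \in \Delta_+^{M-1}$ has strictly positive entries, $\frac{v'}{\bm w} > \frac{v}{\bm w} = \bm z$ componentwise, i.e.\ $\frac{v'}{\bm w} \succ \bm z$. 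Thus $\bm x'$ admits an achievable feasibility-weighted outcome strictly dominating $\bm z$, which ought to contradict $\bm z$ being non-dominated in $\mathcal{M}^{\bm c}_{\bm \xi, \mathcal X}$.

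The main obstacle is bridging weak and strict Pareto optimality: Lemma~\ref{lemma:const_var_in_weak_mvar} only places $\frac{v'}{\bm w}$ in $\weakmvar{}_\alpha\big(\bm f(\xpxi), \bm c(\xpxi)\big)$, whereas the global set $\mvar{}_\alpha\big[\{\bm f(\xxi),\bm c(\xxi)\}_{\bm x \in \mathcal X}\big]$ is the non-dominated frontier of the union of the \emph{strict} local \mvar{} sets. To close this gap I would exploit that the membership region $\{\bm y : P(\bm f(\xpxi)\mathbbm{1}[\bm c(\xpxi) > \bm 0] \geq \bm y) \geq \alpha\}$ is downward-closed, closed, and bounded above (the feasibility-weighted objectives are bounded on the compact $\mathcal X$ and lie below $\sup \bm f$), so every element of it is weakly dominated by a strict-Pareto, i.e.\ \mvar{}, point. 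This yields $\bm z'' \in \mvar{}_\alpha\big[\bm f(\xpxi), \bm c(\xpxi)\big]$ with $\bm z'' \geq \frac{v'}{\bm w} \succ \bm z$, hence $\bm z'' \succ \bm z$ and $\bm z'' \in \mathcal{M}^{\bm c}_{\bm \xi, \mathcal X}$, contradicting $\bm z \in \mvar{}_\alpha\big[\{\bm f(\xxi),\bm c(\xxi)\}_{\bm x \in \mathcal X}\big]$. Therefore $v' \leq v$, and every $\bm x \in \mathcal X_{\bm z}^*$ maximizes the feasibility-weighted scalarized \var{}, as claimed.
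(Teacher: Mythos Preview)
Your proof is correct and in fact more complete than the paper's own justification. The paper gives only a two-sentence remark: it observes (via the injective mapping of Corollary~\ref{cor:const_injection}/Theorem~\ref{thm:const_mvar_to_var}) that $\var{}_\alpha\big(s[\bm f(\xxi),\bm w]\mathbbm{1}[\bm c(\xxi)>\bm 0]\big)$ takes the same value for all $\bm x \in \mathcal X_{\bm z}^*$, and separately notes that when the local \mvar{} set is $\{\bm 0\}$ every design is infeasible and $\mathcal X_{\bm z}^* = \mathcal X$. Your first step is exactly this, and it is the full extent of what the paper argues.

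You then go further and supply the contradiction argument showing that no $\bm x' \in \mathcal X$ can achieve a strictly larger scalarized \var{}, which the paper leaves implicit. In doing so you correctly identify the weak-versus-strict Pareto subtlety---Lemma~\ref{lemma:const_var_in_weak_mvar} only places $v'/\bm w$ in \weakmvar{}, while the global set is built from strict local \mvar{} sets---and you resolve it by a closedness/boundedness argument that lifts $v'/\bm w$ to some $\bm z'' \in \mvar{}_\alpha[\bm f(\xpxi),\bm c(\xpxi)]$ with $\bm z'' \geq v'/\bm w > \bm z$, hence $\bm z'' \succ \bm z$. The paper does not address this step at all, so your treatment is the more rigorous of the two; the overall approach is otherwise identical.
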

When $\mvar{}_\alpha\big[\bm f(\xxi), \bm c(\xxi)\big] \neq \{\bm 0\}$, it follows directly from the injective mapping from $\bm z$ to $\bm w$ that $\var{}_\alpha\big(s[\bm f(\xxi), \bm w]\mathbbm{1}[\bm c(\xxi) > \bm 0]\big)$ is the same for all $\bm x \in \mathcal X_{\bm{z}}^*$. When $\mvar{}_\alpha\big[\bm f(\xxi), \bm c(\xxi)\big] = \{\bm 0\}$, then all $\bm x$ are infeasible designs and $\mathcal{X}^*_{\bm z} = \mathcal X$.

\section{\mars{} with Alternative Acquisition Functions}
\label{appdx:mars_acqfs}
In this section, we discuss using \mars{} with two alternative acquisition functions: Thompson Sampling (\textsc{TS}) and Upper Confidence Bound (\textsc{UCB}).

\subsection{\mars{} with Thompson Sampling}
As discussed in Section~\ref{sec:opt_mvar}, direct \mvar{} optimization with \qNEHVI{} requires evaluating the joint posterior over $n_{\bm{\xi}}(n+1)$ designs. The same is true when using \marsnei{}. Although low-rank Cholesky updates can significantly reduce the complexity \citep{Osborne2010BayesianGP}, further computational improvements can be obtained by using \textsc{TS} with random Fourier features (RFFs) \citep{Rahimi2008RFF}. However, RFFs are approximate GP samples and introduce approximation error (see Appendix~\ref{appdx:nehvi_rff} for further discussion).\footnote{Alternative approaches for efficient posterior sampling such as decoupled sampling \citep{wilson2020efficiently} could also be used.} We refer to this method as \marsts{}. \marsts{} naturally supports (i) parallel candidate generation by drawing a new posterior sample and new scalarization weights for each candidate; (ii) constraints, by evaluating the feasibility-weighted objectives under the posterior sample, and (iii) noisy observations.

\subsection{\mars{} with Upper Confidence Bound}
Another computationally efficient approach is to use Upper Confidence Bound (\textsc{UCB}), which does not require the expensive integration over $\bm f(\xxi)$ where $\bm x \in X_{1:n}$. We refer to this method as \marsucb{}. In what follows, we show how to extend the V-UCB algorithm of \citet{nguyen2021valueatrisk} to optimize the $\var{}$ of Chebyshev scalarizations. The result builds on the following lemma by \citet{Chowdhury2017IGP-UCB}, which holds under the assumption that the function $f^{(i)}(\cdot)$ belongs to a reproducing kernel Hilbert space (RKHS) $\mathcal{F}_{k_i}(B_i)$, whose RKHS norm is bounded by $\|f^{(i)}\|_{k_i} \leq B_i$, where 
$\bm f = [f^{(1)}, ..., f^{(M)}]$.
We use $\mu_n^{(i)}(x), \Sigma_n^{(i)}(x, x)$ to denote the posterior, conditional on observations up to iteration $n$, mean and variance of the GP surrogate corresponding to $i^{th}$ objective, and use $\sigma^2_i$ to denote the observation noise for the $i^{th}$ objective.


\begin{lemma} \label{lemma-chowdhury-ucb-lemma} \cite{Chowdhury2017IGP-UCB}. For $\delta \in (0, 1)$, $\zeta^{(i)}_{n + 1} = B_i  + \sigma_i^2 \sqrt{2 (\gamma_{n} + 1 + \log(1 / \delta))}$, the following holds for all $\bm x \in \mathcal{X}$ with probability $\geq 1 - \delta$:
\begin{equation} \label{eq-ucb-lemma-bounds}
    l^{(i)}_n(\bm x) \leq f^{(i)}(\bm x) \leq u^{(i)}_n(\bm x),
\end{equation}
where $l^{(i)}_n(\bm x) := \mu^{(i)}_n(\bm x) - \zeta^{(i)}_{n + 1} (\Sigma^{(i)}_n(\bm x, \bm x))^{1/2}$, $u^{(i)}_n(\bm x) := \mu^{(i)}_n(\bm x) + \zeta^{(i)}_{n + 1} (\Sigma^{(i)}_n(\bm x, \bm x))^{1/2}$, and $\gamma_n$ denotes the maximum information gain.
\end{lemma}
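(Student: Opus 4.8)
The plan is to recognize this as a direct restatement, applied coordinatewise to each objective $f^{(i)}$, of the self-normalized confidence bound for kernelized bandits established by \citet{Chowdhury2017IGP-UCB}, so the cleanest route is simply to invoke their Theorem~2 for each $i$ under the stated RKHS-norm bound $\|f^{(i)}\|_{k_i} \leq B_i$ and sub-Gaussian observation noise with parameter $\sigma_i$. For a self-contained sketch I would fix the objective index $i$ (and suppress it for readability), and pass to the feature-space view of the RKHS $\mathcal{F}_{k_i}(B_i)$: write $f(\bm x) = \langle \phi(\bm x), \theta \rangle$ with $\|\theta\| = \|f\|_{k_i} \leq B_i$, so that the GP posterior mean $\mu_n$ coincides with the kernel ridge-regression estimate $\hat\theta_n$ evaluated at $\bm x$, and the posterior standard deviation $(\Sigma_n(\bm x,\bm x))^{1/2}$ coincides, up to the noise scale, with the weighted norm $\|\phi(\bm x)\|_{V_n^{-1}}$, where $V_n$ is the regularized design operator $\Phi_n^\top \Phi_n + \sigma_i^2 I$ built from the feature vectors $\phi(\bm x_1),\dots,\phi(\bm x_n)$.

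The first step is to decompose the estimation error. Using the reproducing property and Cauchy--Schwarz in the $V_n$-weighted inner product,
\[
|f(\bm x) - \mu_n(\bm x)| = |\langle \phi(\bm x),\, \theta - \hat\theta_n\rangle| \leq \|\phi(\bm x)\|_{V_n^{-1}}\,\|\theta - \hat\theta_n\|_{V_n}.
\]
This isolates a deterministic geometric factor $\|\phi(\bm x)\|_{V_n^{-1}}$, which equals $(\Sigma_n(\bm x,\bm x))^{1/2}$ up to the scaling by $\sigma_i$, from the random estimation error $\|\theta - \hat\theta_n\|_{V_n}$. Splitting $\hat\theta_n - \theta$ into a regularization-bias component and a noise-driven component, the bias part is controlled by $\|\theta\| \leq B_i$ and contributes the additive $B_i$ term, while the noise part is exactly what must be bounded uniformly over all rounds.

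The key step, and the main obstacle, is the \emph{self-normalized} concentration bound on the noise term $\big\|\sum_t \phi(\bm x_t)\,\epsilon_t\big\|_{V_n^{-1}}$. Here I would construct the exponential supermartingale underlying the method of mixtures (following Abbasi-Yadkori et al., extended to infinite-dimensional feature maps as in \citet{Chowdhury2017IGP-UCB}), integrate it against a Gaussian mixing measure, and apply a Ville/optional-stopping argument to obtain, with probability $\geq 1-\delta$ and simultaneously over $n$,
\[
\|\theta - \hat\theta_n\|_{V_n} \leq \sigma_i^2\sqrt{2\big(\gamma_n + 1 + \log(1/\delta)\big)} + B_i,
\]
where the log-determinant produced by the mixture argument is rewritten in terms of the maximum information gain $\gamma_n$. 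The delicate points are ensuring the concentration genuinely holds in the infinite-dimensional RKHS and that the normalization matches the posterior-variance scaling exactly, so that the geometric factor collapses to $(\Sigma_n(\bm x,\bm x))^{1/2}$.

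Finally, combining the decomposition with this concentration bound and identifying the leading constant as $\zeta^{(i)}_{n+1} = B_i + \sigma_i^2\sqrt{2(\gamma_n + 1 + \log(1/\delta))}$ yields $|f^{(i)}(\bm x) - \mu_n^{(i)}(\bm x)| \leq \zeta^{(i)}_{n+1}\,(\Sigma_n^{(i)}(\bm x,\bm x))^{1/2}$ uniformly over $\bm x \in \mathcal{X}$, which is exactly the two-sided bound \eqref{eq-ucb-lemma-bounds} after rearranging into the definitions of $l_n^{(i)}$ and $u_n^{(i)}$. Since this is a cited result, I expect the paper to simply state it and defer to \citet{Chowdhury2017IGP-UCB} rather than reproduce the martingale argument.
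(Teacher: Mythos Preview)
Your expectation is correct: the paper does not prove this lemma at all but simply states it with the citation to \citet{Chowdhury2017IGP-UCB} and immediately uses it as a black box to derive the UCB policy for the \var{} of Chebyshev scalarizations. Your sketch of the underlying self-normalized martingale argument is reasonable and faithful to the cited source, but it goes beyond what the paper itself provides.
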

Assuming that each objective is modeled using an independent GP surrogate and considering all objectives jointly, we see that (\ref{eq-ucb-lemma-bounds}) holds jointly for all $i = 1, \ldots, m$ with probability at least $(1 - \delta') := (1 - \delta)^m$. Applying the Chebyshev scalarization,
\begin{align*}
    w_i l^{(i)}_n(\bm x) \leq w_i f^{(i)}(\bm x) \leq w_i u^{(i)}_n(\bm x), \forall i = 1, \ldots, M \\
    \min_i w_i l^{(i)}_n(\bm x) \leq \min_i w_i f^{(i)}(\bm x) \leq \min_i w_i u^{(i)}_n(\bm x) \\
    s[\bm l_n(\bm x), \bm w] \leq s[\bm f(\bm x), \bm w] \leq s[\bm u_n(\bm x), \bm w]
\end{align*}
holds with probability $\geq (1 - \delta')$ for all $\bm x \in \mathcal{X}$. Following Lemma 2 of \cite{nguyen2021valueatrisk}, we get that
\begin{equation*}
    \var{}_\alpha(s[\bm l_n(\xxi), \bm w]) \leq
    \var{}_\alpha(s[\bm f(\xxi), \bm w]) \leq
    \var{}_\alpha(s[\bm u_n(\xxi), \bm w])
\end{equation*}
holds with probability at least $(1 - \delta')$. Thus, the UCB policy for $\var{}$ of Chebyshev scalarization is defined as the policy that samples $\bm x_{n + 1} = \argmax_{\bm x} \var{}_\alpha(s[\bm u_n(\xxi), \bm w])$.

In practice, computing the $\zeta^{(i)}_{n + 1}$ given in Lemma \ref{lemma-chowdhury-ucb-lemma} is impractical, and typically leads to an acquisition function that is overly conservative. Thus, we follow \citet{nguyen2021valueatrisk} and use $\zeta^{(i)}_{n + 1} = 2 \log(n^2 \pi^2 / 0.6)$ in the experiments.

\marsucb{} can be extended to support parallel candidate generation by sampling a new scalarization for each candidate and noisy observations. The \textsc{UCB} policy derived above does not hold for feasibility-weighted objectives because Lemma \ref{lemma-chowdhury-ucb-lemma} requires that $f^{(i)}(\cdot)$ belongs to a RKHS, and this is not the case when $f^{(i)}(\cdot)$ is weighted by a feasibility indicator because it is no longer continuous \citep{Freitas2012ExponentialRB}.

\section{Gradient-based Acquisition Function Optimization}
\subsection{Approximate Gradients of $\var{}$}
\label{appdx:gradients-of-var}
One of the earliest and simplest-to-use gradient estimators for $\var$ was presented by \citet{Hong2009VaR}. Under mild regularity assumptions on the distribution of the random variable, they establish the consistency of the $\var{}$ gradient estimator, which can be seen as the sample-path gradient of the well-known estimator of $\var{}$. For this discussion, let $g(\cdot)$ be a deterministic function of its argument (e.g., a sample path of the GP), let us fix $\bm x$, and let $\bm \xi \sim P(\bm \xi)$ be a continuous random variable. Let $\bm \xi_1, \ldots, \bm \xi_k$ denote i.i.d. samples from $P(\bm \xi)$. Define the following ordering of the samples, where the subscript $(\cdot)$ denote the order statistic:
\begin{equation*}
    g(\xxi_{(1)}) \leq g(\xxi_{(2)}) \leq \ldots \leq g(\xxi_{(k)}).
\end{equation*}
The $\var{}$ at risk level $\alpha$ can be estimated by 
\begin{equation*}
    \var_{\bm \xi \sim P(\bm \xi)}(g(\xxi)) \approx g(\xxi_{(\lfloor (1 - \alpha) k \rfloor)}),
\end{equation*}
where $\lfloor \cdot \rfloor$ denotes the largest integer less than or equal to $\cdot$. It is well known (cf. \citet{Serfling1980}) that this estimator is consistent as $k \rightarrow \infty$. \citet{Hong2009VaR} extend this result to show that the corresponding gradient estimator
\begin{equation*}
    \nabla_{\bm x} \var_{\bm \xi \sim P(\bm \xi)}(g(\xxi)) \approx \nabla_{\bm x} g(\xxi_{(\lfloor (1 - \alpha) k \rfloor)})
\end{equation*}
is an asymptotically (as $k \rightarrow \infty$) unbiased estimator of the gradient of $\var{}$. The estimator is also consistent as long as $\nabla_{\bm x} g(\xxi_{(\lfloor (1 - \alpha) k \rfloor)})$ is not a function of $\xi$, otherwise, averaging of multiple sample gradients is required to obtain a consistent estimator of the gradient of $\var{}$.

In addition to the 
sample-path gradient estimator discussed above, there are other estimators of gradients of $\var{}$ that are based on, e.g., the likelihood ratio gradient estimation or on the kernel density estimators. A detailed discussion of these
can be found in \citet{Hong2014VaR-CVaR-Rev}.

\subsection{Approximate Gradients of $\mvar{}$} \label{appdx:gradients-of-mvar}
Differentiability of $\mvar{}$, more precisely the differentiability of the elements of the \mvar{} set, is a subject that has not been explored in the literature. Since the computation of the $\mvar{}$ set corresponding to a set of posterior samples is expensive enough to be the bottleneck during acquisition function optimization, it is highly desirable to avoid the finite-difference gradient estimation, which requires multiple evaluations of the objective and is in general less efficient than the sample-path gradients. Instead, it is preferable to establish a direct connection between the $\mvar{}$ set and the gradients of the samples on which the $\mvar{}$ is computed. The method we discuss below is inspired by the gradients of $\var{}$, which correspond to the gradients of the sample that is equal to $\var{}$.

The correspondence between $\nabla_{\bm x} \var_{\bm \xi \sim P(\bm \xi)}(g(\xxi))$ and $\nabla_{\bm x} g(\xxi_{(\lfloor (1 - \alpha) k \rfloor)})$ follows from the observation that, since $g(\cdot)$ is a continuous function, shifting $\bm x$ by a sufficiently small $\bm \epsilon$ should not change the ordering of $\bm \xi$'s. We should still have $\var_{\bm \xi \sim P(\bm \xi)}(g(\bm x + \bm \epsilon + \bm \xi)) \approx g(\bm x + \bm \epsilon + \bm \xi_{(\lfloor (1 - \alpha) k \rfloor)})$ with the same ordering, as long as $g(\xxi_{(i)}) \neq g(\xxi_{(j)})$ for $i \neq j$. 
The same idea extends to the $\mvar{}$. Using a finite set of samples to approximate the $\mvar{}$ set, with $\bm m (\xxi)$ denoting an arbitrary element of the $\mvar{}$ set, we have that $m^{(j)} (\xxi) = f^{(j)} (\xxi_{i^{(j)}})$ for some $i^{(j)} \in \{1, \ldots, k\}$, where
$\bm f = [f^{(1)}, ..., f^{(M)}]$ and the $i^{(j)}$ is dependent on the outcome $j$ and the particular element of the $\mvar{}$ set. This can be interpreted as saying that the elements of the $\mvar$ set are constructed by piecing together outcomes from the samples of the random variable.

Similar to what was discussed for $\var{}$, if we perturb $\bm x$ by a small $\bm \epsilon$, under the assumption that $f^{(j)}(\xxi_i) \neq f^{(j)}(\xxi_k)$ for $i \neq k$, we should get that $m^{(j)} (\bm x + \bm \epsilon + \bm \xi) = f^{(j)} (\bm x + \bm \epsilon + \bm \xi_{i^{(j)}})$ with the same $i^{(j)}$ as before the perturbation. 
This, in essence, says that we can calculate the gradients of the elements of the $\mvar$ set as $\nabla_{\bm x} m^{(j)} (\xxi) = \nabla_{\bm x} f^{(j)} (\xxi_{i^{(j)}})$. Putting all outcomes together, we get
\begin{equation*}
    \nabla_{\bm x} \bm m (\xxi) = [\nabla_{\bm x} f^{(1)} (\xxi_{i^{(1)}}), \ldots, \nabla_{\bm x} f^{(M)} (\xxi_{i^{(M)}})].
\end{equation*}

A theoretical consistency analysis of these \mvar{} gradient estimators is beyond the scope of this paper. However, we observe that they do work well in practice, enabling efficient optimization of \mvarnehvirff{} (see Appendix~\ref{appdx:direct-mvar-opt}\&\ref{appdx:additional_experiments}).


\section{Direct \mvar{} Optimization using \qNEHVI{}} \label{appdx:direct-mvar-opt}
In this section, we discuss direct optimization of \mvar{} using \NEHVI, highlight the computational challenges that come with this approach, and introduce an approximation that mitigates some of these challenges for some problems where the $\mvar{}$ set for a design is relatively small (e.g. where the number of objectives is small and $\alpha$ is large). However, we find that these approaches are typically infeasible when $M\geq3$ due to GPU memory limits (see for example Table~\ref{table:additional-runtime}).

\subsection{Direct Optimization of $\mvar{}$ with $\NEHVI$} \label{appdx:mvar-nehvi}

As described in Section~\ref{sec:opt_mvar}, the extension of \qNEHVI{} to optimize $\mvar{}$ is conceptually simple. 
\qNEHVI{} selects the next point to evaluate by maximizing the expected \HVI{} under the GP posterior.
We replace the standard \HVI{} of a new point with respect to the PF with the joint \HVI{} of the $\mvar{}$ set of a new point with respect to the $\mvar{}$ set over the previously evaluated designs:
\begin{equation*}
        \alpha_{\mvarnehvi}(\bm x) = \mathbb E_{\bm f\sim P(\bm f | \mathcal D)}
        \big[\HVI{}(\mvar{}_\alpha[\bm f(\xxi)] ~|~ \mvar{}_\alpha[\{\bm f(\xpxi)\}_{\bm x' \in X_{1:n}}]\big].
\end{equation*}

However, there are several computational issues that make this approach prohibitively expensive, except for when the objective evaluation takes multiple hours or days. In our experiments, we observe long runtimes, even when using very reasonable parameter values of $n_{\bm{\xi}} = 32$ and $\alpha = 0.9$.
We discuss the factors contributing to this in the following subsection. Note that all the issues discussed are compounded by the fact that even when using a gradient-based approach to optimize the acquisition function, the acquisition function needs to be evaluated many times.

\subsection{Complexity and Challenges}
\label{appdx:complexity}
There are three primary computational bottlenecks, corresponding to three stages of computing $\alpha_{\mvarnehvi}(\bm x)$. We discuss each stage and their complexity below.


\begin{figure*}[ht]
    \centering
    \includegraphics[width=0.3\linewidth]{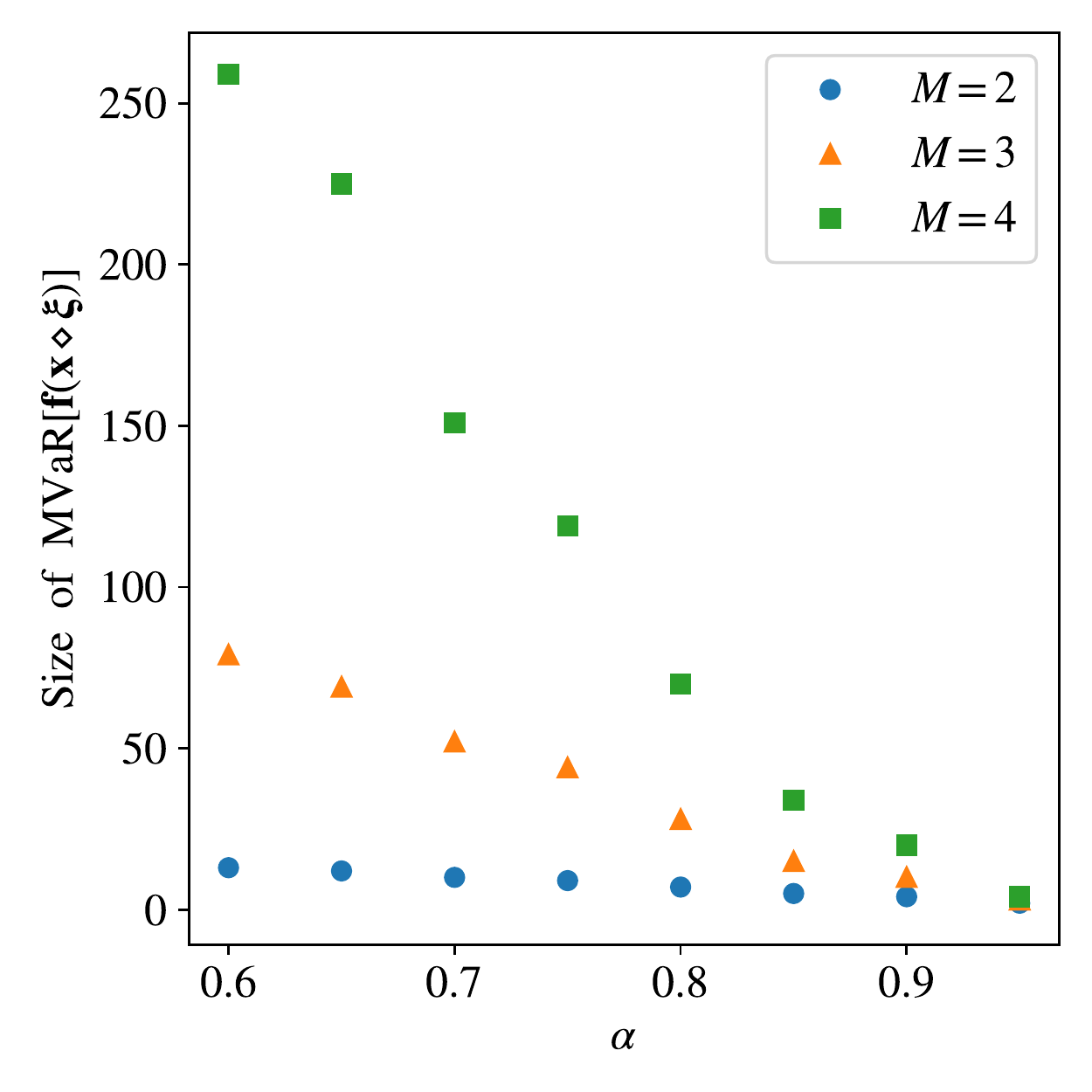}
    \caption{The maximum size of $\mvar{}_\alpha[\bm f(\xxi)]$ across the design space $\bm x \in \mathcal X$, which is an estimator of the maximum \mvar{} set size that will encountered during numerical optimization, for different $\alpha$ with $n_{\bm\xi}=32$ on the GMM problem.  The size of the \mvar{} set significantly increases as $\alpha$ decreases and as $M$ increases.}
    \label{fig:mvar_set_size}
\end{figure*}

\begin{enumerate}
    \item \textbf{Posterior Sampling}: Computing $\alpha_{\mvarnehvi}(\bm x)$ requires drawing joint posterior samples at the baseline points (points that are already evaluated) and the current candidate(s) $\bm x$ under all $n_{\bm{\xi}}$ perturbations.
    For all methods using GPs, posterior sampling at $n$ points and $n_{\bm{\xi}}$ perturbations scales as $O(n_{\bm{\xi}}^3n^3M)$. Hence, using GPs is only feasible for modest $n_{\bm{\xi}}$.
    
    \item \textbf{Computing \mvar{}}: The next stage is computing the \mvar{} corresponding to each $\bm x$ from posterior samples of $\bm f(\xxi)$. 
    Computing \mvar{} involves computing the distribution function of $\bm f(\xxi)$, \footnote{The distribution of $\bm f(\xxi)$ is unknown and the GP posterior over $\bm f(\xxi)$ is generally intractable. Hence, the distribution of $\xxi$ is often approximated with a finite set of MC samples \citep{Cakmak2020borisk} because the GP posterior can be evaluated analytically over a finite set of points.} which has a time and space complexity of $\mathcal{O}((1 - \alpha)^M n_{\bm{\xi}}^{M+1} M)$.
    The \mvar{} has to be computed for each $\bm x$ and each posterior sample, further inflating the computational effort required.
    The size of the resulting \mvar{} set is $\mathcal{O}((1 - \alpha)^M n_{\bm{\xi}}^M M)$. Figure~\ref{fig:mvar_set_size} empirically demonstrates that the size of the \mvar{} set significantly increases as $\alpha$ decreases, particularly for larger~$M$.
    
    \item \textbf{Computing joint hypervolume improvement}: Given the samples of \mvar{} corresponding to the baseline points and the candidate(s), the final step of \mvarnehvi{} is to compute the joint \HVI{} of the \mvar{} set of the candidates over the global \mvar{} set corresponding to the baseline points. 
    To our knowledge, the only existing differentiable approach for joint \HVI{} computation relies on the inclusion-exclusion principle (\iep{}, \citet{daulton2020ehvi}). 
    The time and space complexity of computing the joint \HVI{} of a set of~$q'$ points using the \iep{}
    is exponential with respect to~$q'$.
    To compute the \HVI{} of the \mvar{} for a set of $q$ candidates, we must compute the joint \HVI{} of a set of size $q' = q|\mvar{}_\alpha[\{\bm f(\bm x_i + \bm \xi)\}_{i=1}^q)]|$.
    Since the size of the \mvar{} set of a single candidate scales as $\mathcal{O}((1 - \alpha)^M n_{\bm{\xi}}^MM)$, using \iep{} quickly becomes infeasible, except for very moderate $M, n_{\bm{\xi}},$ and $\alpha$.
    As shown in Figure~\ref{fig:mvar_set_size}, even for $q=1, n_{\bm{\xi}}=32,$ and $M=3$, the size of the \mvar{} set can be quite large for smaller values of $\alpha$, which precludes the use of \iep{}.
    Although \iep{} is necessary to make the joint hypervolume improvement computation differentiable, there are non-differentiable approaches (e.g. \citet{LACOUR2017347}) that could be used instead to compute the joint hypervolume improvement. However, optimizing $\alpha_{\mvarnehvi}$ without gradients would be very slow given that \citet{daulton2020ehvi} showed that simply optimizing analytic EHVI (without \mvar{}) with CMA-ES \citep{cmaes} or L-BFGS-B \citep{Byrd1995ALM} with approximate gradients estimated via finite differences is over an order of magnitude slower than when using exact gradients.
\end{enumerate}

A final challenge in using \mvarnehvi{} is that the calculation of \mvar{} is not differentiable, and there are no known theoretically-grounded gradient estimators of \mvar{}. Therefore, we use the heuristic approach described in Appendix~\ref{appdx:gradients-of-mvar} for estimating the gradients of \mvar{}.

\subsection{Approximating $\qNEHVI$ with RFF Draws}
\label{appdx:nehvi_rff}
Random Fourier Features (RFF, \citet{Rahimi2008RFF}) offer an inexpensive and differentiable approximation of GP sample paths. \citet{daulton2021parallel} propose to combine a single RFF draw with $\NEHVI$ to obtain a cheap approximation, \TSHVI{}. They find that \TSHVI{} is competitive with $\qNEHVI$ in small dimensional search spaces, though its performance degrades as the dimensionality increases.

We follow their approach and extend \TSHVI{} to optimize $\mvar$, and name this method \mvarnehvirff{}. Using RFFs significantly reduces the computational cost of the evaluating the acquisition function because it avoids computationally expensive exact posterior sampling. In addition, since we use a single RFF draw, the $\mvar{}$ set and its \HVI{} only have to be computed once per acquisition function evaluation rather than for each posterior sample. In the end, for small problem instances, \mvarnehvirff{} ends up with a per-iteration runtime measured in seconds, which is an immense reduction from the time it takes for \mvarnehvi{} using an exact GP model. 

Note that \mvarnehvirff{} still requires the use of \iep{} for differentiable \HVI{} computations, making this approach infeasible in many settings as discussed above. 
In addition, the performance of RFFs based acquisition functions are known to degrade when the underlying function is difficult to model and variance starvation is a known issue \citep{wang2018batched, wilson2020efficiently, mutny2018, calandriello2019gaussian}. Thus, for an acquisition function that works well in all settings, we recommend using \mars{}.

\section{Pruning for Efficient Joint Posterior Sampling}
\label{appdx:pruning}
\nei{} and \qNEHVI{} both require sampling from the joint posterior over function values at the new design $\bm x$ and previously evaluated designs $X_{1:n}$: $P(\{\bm f(\bm x)\}_{\bm x \in X_{1:n} \cup \{\bm x_1, \ldots, \bm x_q\}} | \mathcal D)$. To reduce the cost of posterior sampling, we prune $X_{1:n}$ to only include the subset of points $X_\text{pruned} \subseteq X_{1:n}$ that have nonzero probability of being optimal. We estimate the probability of being optimal using MC estimation with $N_\text{prune}$ samples from the joint posterior. For \nei{}, optimality is with respect to a scalar objective and often means $|X_\text{pruned}| << n$. For \qNEHVI{}, any design that has nonzero probability of being Pareto optimal is retained; typically, this results in a much larger $X_\text{pruned}$ than we using \nei{} with a scalar objective. The typically larger size of $X_\text{pruned}$ under \qNEHVI{}-based methods has a significant effect when using \mvar{}\textsc{-}\qNEHVI{}, where sampling from the joint posterior scales as $\mathcal O(Mnn_{\bm \xi})$ and has a very significant effect on runtime. Pruning strategies that leverage techniques from pre-screening and population selection in EAs may further improve computational efficiency.

\section{Optimization of Multi-Objective Expectation Objectives}
\label{appdx:expectation-optimization}
As noted in the main text, the optimization of expectation of objectives can be achieved via rather straightforward extensions of the existing multi-objective acquisition functions. Here, we discuss the main idea, and show how to extend \qNParego{} and \qNEHVI{} \citep{ daulton2021parallel}.

\subsection{Optimization Expectation Objectives with \qNParego{}}
The acquisition function $\textsc{ParEGO}$ is an extension of the well known Expected Improvement acquisition function to the multi-objective setting via augmented Chebyshev scalarizations. \qNParego{} is an MC-based variant that uses composite objectives with the \nei{} acquisition function \citep{daulton2021parallel}.  Given a weight vector $\bm w \in \Delta_+^{M-1}$, it selects the next point to evaluate as follows:
\begin{equation} \label{eq-qparego}
    \bm x_{n+1} = \argmax_{\bm x \in \mathcal{X}} \mathbb{E}_{\bm f \sim P(\bm f | \mathcal D)} \left[ s_a[\bm f(\bm x), \bm w] - \max_{\bm x' \in X_{1:n}} s_a[\bm f(\bm x'), \bm w] \right]_+,
\end{equation}
where $X_{1:n}$ denotes the points evaluated so far, and $[\cdot]_+$ denotes $\max(\cdot, 0)$, $s_a[\bm y, \bm w] = \min w_iy_i + \beta \sum_i w_iy_i$, and $\beta$ is a small positive constant. The expectation in \eqref{eq-qparego} is not available in closed form, and is typically replaced by a (Q)MC approximation obtained by drawing samples of $\{\bm f(\bm x')\}_{\bm x' \in X_{1:n} \cup \{\bm x\}}$ from the joint GP posterior. A batch of $q$ candidates can be selected in a sequential greedy fashion where each point is selected using a different scalarization weight vector and the improvement from the batch of $q$ points replaces the improvement from a single point in \eqref{eq-qparego}.

To extend $\qNParego{}$ to the expectation objectives, we replace each occurrence of $s_a[\bm f(\bm x), \bm w]$ in (\ref{eq-qparego}) with $s_a \left[\mathbb{E}_{\bm \xi \sim P(\bm \xi)}[\bm f(\xxi)], \bm w \right]$. For implementation, we follow the same MC idea, and draw samples from the joint posterior of $\{\bm f(\xxi)\}_{\bm x \in X_{1:n} \cup \{\bm x_1, \ldots, \bm x_q\}, \bm \xi \in \Xi}$ where $\Xi$ is a set of $n_{\bm \xi}$ input noise samples, and approximate (\ref{eq-qparego}) using these samples. To improve the computational efficiency, one can also calculate the posterior distribution of $\{\mathbb{E}_{\bm \xi \sim P(\bm \xi)}[\bm f(\xxi)] \}_{\bm x \in X_{1:n} \cup \{\bm x_1, \ldots, \bm x_q\}}$ from the posterior of $[\bm f(\xxi)]_{\bm x \in X_{1:n} \cup \{\bm x_1, \ldots, \bm x_q\}, \bm \xi \in \Xi}$ via a simple matrix-matrix product, and use that to draw the posterior samples. This avoids inverting $m$ $(n + q) n_{\bm{\xi}} \times (n + q) |\Xi|$ matrices, and reduces the cost of posterior sampling from $\mathcal{O} (m (n + q)^3 n_{\bm{\xi}}^3)$ to $\mathcal{O} (m (n + q)^3)$. However, this computational technique cannot be used if there are black-box constraints. We refer to this method as \textsc{Exp-}\qNParego{}.

\subsection{Optimization Expectation Objectives with \qNEHVI{}}
The other acquisition function we consider is the \qNEHVI{}:
\begin{equation} \label{eq-qnehvi}
    \bm x_{n+1} = \argmax_{\bm x \in \mathcal{X}} \mathbb{E}_{\bm f \sim P(\bm f | \mathcal D)} \left[ \HVI{}(\bm f(\bm x) |  \mathcal P_n ) \right],
\end{equation}
where $\mathcal P_n$ is the PF over $\bm f(\bm x)_{\bm x' \in X_{1:n}}$.  The extension of $\qNEHVI$ to the expectation objectives follows a similar path to that of \qNParego{}. We replace the hypervolume improvement of $\bm f(\bm x)$ in (\ref{eq-qnehvi}) with the hypervolume improvement of $\mathbb{E}_{\bm \xi \sim P(\bm \xi)}[\bm f(\bm x)]$ with respect to the PF over $\{\mathbb{E}_{\bm \xi \sim P(\bm \xi)}[\bm f(\bm x')]\}_{\bm x' \in X_{1:n}}$. To do so, we replace the posterior samples of $\bm f(\bm x)$ and $\{\bm f(\bm x')]\}_{\bm x' \in X_{1:n}}$ that are used in \qNEHVI{} calculations with the posterior samples of the expectation over $P(\bm \xi)$, which can be obtained in the same manner described above for \qNParego{}. However, \qNEHVI{} with expectation objective is often prohibitively slow because typically relatively few points from $X_{1:n}$ can be pruned as discussed in Appendix~\ref{appdx:pruning}. Hence, we only evaluate a single sample approximation using RFFs, analogous to the RFF approximation of \mvarnehvi{}, which we refer to as \expnehvirff{}.

\subsection{Challenges of Using Expectation with Feasibility-Weighted Objectives}
Independently computing the expectation of the objectives and the feasibility and taking the product of the expectations, would ignore the fact that the objective functions and constraint functions are evaluated on the same perturbed designs. To account for the perturbed inputs jointly across in the objectives and constraints, we use feasibility weighted objectives. Feasibility weighting requires penalizing designs that are infeasible such that the feasibility-weighted objectives for an infeasible design are worse than the objectives for any feasible design.

Feasibility weighting can make the expectation sensitive to the range of the objectives.
When evaluating a solution near the border of the feasible domain, we end up with a subset of the perturbed solutions evaluating to zero due to infeasibility and others evaluating to their respective objective values. To see how this can affect the performance, consider the following examples. Suppose that half of the perturbed solutions are infeasible and the objective values are bounded in $[0, 1]$. In this case, the feasibility weighted objective take values in $[0, 0.5]$, where it will be inferior to some other solutions due to the potential for it to be infeasible. Now, suppose that the objective values are bounded in $[100, 101]$. The infeasibility in this case will bring the feasibility weighted objective to the range of $[50, 50.5]$, which is strictly worse than any solution that is more feasible, even if by only a small fraction. If we instead set the infeasible solutions to $100$ rather than zero, this would lead to the feasibility weighted objective value to $[100, 100.5]$.  Note that setting the infeasible solutions to $100$ is equivalent to normalizing the objectives to $[0, 1]$ before applying the feasibility and using zero for the infeasible objectives, which in theory should have no effect in the optimization performance. In practice, we typically do not know precise bounds on the objectives, and instead standardize / normalize the objectives during optimization using bounds derived on the go. 

The example above highlights the effect of the range of each objective. Often, an infeasibility cost  $\lambda$ is used to penalize for infeasible designs to ensure that infeasible points are worse than any feasible point (for example if the objectives can take negative values) by setting the feasibility-weighted objectives to $(\bm f(\bm x) + \lambda)\mathbbm{1}[\bm c > \bm 0] - \lambda$ for some $\lambda \geq 0$. However, the feasibility weighted expectation is typically sensitive to the infeasibility cost, and feasibility weighted objectives give higher value to conservative solutions if the Pareto front lies near the border of the feasible domain. This makes it difficult to determine apriori how conservatively expectation methods will act when using feasibility weighted objectives. In contrast, \mvar{} avoids this issue by providing high probability guarantees on the value of the feasibility-weighted objectives under input noise. For any that design is feasible with probability $\alpha$, the infeasibility cost is in the tail of the multivariate CDF and has no effect on the elements of the \mvar{} set.

\section{Experiment Details}
\label{appdx:experiments}

\subsection{Method Details}
\label{appdx:method_details}
We evaluate the following BO methods:

\textbf{Methods that optimize the nominal objectives} (see \citet{daulton2021parallel}) for details): \qNParego{}, \qNEHVI{}, and \textsc{NEHVI-RFF} (referred to as \TSHVI{} in \citet{daulton2021parallel}), which approximates the expectation in \qNEHVI{} with a single approximate GP sample using RFFs.

\textbf{Methods that optimize the expectation objectives} (Appendix~\ref{appdx:expectation-optimization}): \textsc{Exp-}\qNParego{}, and \expnehvirff{}.

\textbf{Methods that optimize \mvar{}}: \marsnei{} (Section~\ref{subsec:random_scalar}), \marsts{} (Appendix~\ref{appdx:mars_acqfs}), \marsucb{} (Appendix~\ref{appdx:mars_acqfs}), \mvarnehvi{} (Appendix~\ref{appdx:direct-mvar-opt}), and \mvarnehvirff{} (Appendix~\ref{appdx:direct-mvar-opt}).

We implemented all methods using the BoTorch library \citep{balandat2020botorch} (except for NSGA-II), leveraging the existing implementations of \nei{} and \qNEHVI{} available at \url{https://github.com/pytorch/botorch}. We used the implementation of NSGA-II in the PyMOO library \citep{pymoo}, which is available at \url{https://github.com/anyoptimization/pymoo}.

For all model-based methods, we model each objective and constraint with an independent GP with a Mat\'{e}rn-$\frac{5}{2}$ ARD kernel \citep{Rasmussen2004}.\footnote{All methods except \marsucb{} support using multi-task GPs that model for correlations between objectives  \citep{NIPS2007_66368270}.} For methods that use scalarizations, we use composite objectives \citep{astudillo2019composite}. We use maximum a posteriori estimates of the GP hyperparameters using the default priors in BoTorch. 
For all MC-based acquisition functions, we use $N_{MC} = 256$ QMC samples from the GP posterior.
We use sample-average approximation \citep{balandat2020botorch} by using fixed Quasi-MC samples from $P(\bm\xi)$ (for robust methods) and fixed Quasi-MC base samples for all methods to approximate the expectation over the GP posterior.\footnote{All fixed samples are re-sampled once per BO iteration.}\footnote{For heteroskedastic input noise processes, we fix a set base samples and use the reparameterization trick \citep{kingma2013reparam} to sample from from the heteroskedastic input noise process using the fixed based samples---rather than directly fixing the input noise samples as we do in the case of homoskedastic noise.} 
This results in an approximation of the acquisition function that is a deterministic function of the input $\bm x$. For RFF-based methods, the approximate GP sample (using 512 random features) is also a deterministic function, which, coupled with fixed samples from $P(\bm\xi)$, results in a deterministic approximation of the acquisition functions. 
The deterministic approximations of the acquisition functions enable the use of quasi-Newton methods for optimization.
We optimize all acquisition functions using multi-start optimization with L-BFGS-B \citep{Zhu1997LBFGS}.

For \mars{} and other \mvar{}-based methods, we use the known \mvar{} reference point, which would typically be supplied by the decision maker. For \qNEHVI{} and \expnehvirff{},  we use the heuristic from \citet{daulton2020ehvi} to adaptively infer the the reference point during the optimization (the \mvar{} reference point is not suitable for the nominal and expectation objectives).

For methods involving scalarizations, the objectives are normalized before applying the scalarizations. For \mars{} methods, the reference point is used as the lower bound and the ideal point (i.e. the component-wise maximum of each objective, \citet{ishibuchi18}) across the \mvar{} set over the previously evaluated designs (estimated using the posterior mean) is used as the upper bound for normalization. For \qNParego{}, we use the ideal and nadir points (i.e. the component-wise minimum objective values across the PF, \citet{ishibuchi18}) across the PF over the previously evaluated designs. Similarly, for \textsc{Exp-}\qNParego{}, we use the ideal and nadir points over the PF expectation objectives (estimated using the posterior mean) over the previously evaluated designs.

For feasibility weighting the objectives, we use a sigmoid function as a differentiable approximation of the indicator function as in \citet{balandat2020botorch}. 
The infeasibility cost set to be the minimum posterior mean minus six standard deviations.

For methods that use \nei{} and \qNEHVI{} with exact posterior sampling, we prune the previously evaluated designs using $N_\text{prune}=2048$ samples to estimate the probability that a previously evaluated design is optimal. Additionally, we cache the Cholesky decomposition of the posterior covariance matrix over $\{\bm f(\xpxi)\}_{\bm x' \in X_\text{pruned}}$ and use low-rank updates to draw joint samples over $\{\bm f(\xpxi)\}_{\bm x' \in X_\text{pruned}\cup \{\bm x\}}$ \citep{Osborne2010BayesianGP}.
    

For batch (or asynchronous) candidate generation, we use a sequential greedy approach \citep{wilson2018maxbo}, where one new candidate is optimized at time and the joint acquisition value of all candidates $\bm x_1,..., \bm x_i$ is optimize to select $\bm x_i$. For methods relying on scalarizations, a new scalarization is sampled for each new candidate.

For NSGA-II, we used the same initial sobol starting points as for the other methods. We used a population size of 10 and adjusted the number of iterations for NSGA-II according to the evaluation budget. The PyMOO default configuration was used for all other settings. In addition to the objectives, observations of the constraints are provided to NSGA-II.

\subsection{Problem Details} \label{appdx:problem_details}
In this section, we provide descriptions of the test problems. The reference points used for all problems are provided in Table~\ref{table:ref_points}. For each problem, we set the reference point to be slightly worse than the nadir point (using the heuristic from \citet{ishibuchi2011}) of the \mvar{} set evaluated over a large grid of design points. For the Penicillin problem, we set the reference point to exclude the region of the objective space with low values of the time objective (following \cite{liang2021scalable}) as those objective trade-offs are less appealing to decision makers due to providing negligible Penicillin yield.

\textbf{Toy Problem} ($d=1,~M=2,~ \alpha=0.9$): 
This is the toy problem that was used to highlight the concepts in Figures~\ref{fig:1d_toy}\&\ref{fig:bijection}. The noise model is given by $P(\bm \xi) = \mathcal{N}(\mu = \bm 0, \Sigma = 0.1 I_2)$. The first objective is a mixture linear-sinusoidal function, and the second objective is modified from the well-known Levy test function. The exact expressions are given as follows. The function is evaluated on $x \in [0, 0.7]$.
\begin{align*}
    f^{(1)}(x) &= 30 - 30 * \left( p_1(x) p_4(x) + p_2(x) (1 - p_4(x)) + p_3(x) \right) \\
    p_1(x) &= 2.4 - 10 x - 0.1 x^2 \\
    p_2(x) &= 2 x - 0.1 x^2 \\
    p_3(x) &= (x - 0.5)^2 + 0.1 \sin(30 x) \\
    p_4(x) &= 1 / \left(1 + \exp \left( (x - 0.2) / 0.005 \right) \right) \\
    f^{(2)}(x) &= p_5\left((x * 0.95 + 0.03) * 20 - 10 \right) \\
    p_5(x) &= p_6\left(1 + (x - 1) / 4\right) - 0.75 * x^2 + 9.0955 \\
    p_6(x) &= \left(\sin(\pi * x)\right)^2 + (x - 1)^2 (1 + 10 \left(\sin(\pi * x)\right)^2) 
\end{align*}

\textbf{GMM} ($d=2,~M\in\{2,3,4\},~ \alpha\in\{0.7, 0.8, 0.9\}$):
In addition to the version presented in the main text, we consider several variations of the GMM problem using different number of objectives, different noise models, and different risk levels to analyze the effects of these factors on the optimization performance of the algorithms.
For all GMM problems considered, each objective is a mixture of the probability density function of three Gaussian distributions, modified from the single objective version presented in \cite{nes}. We present the canonical formula of the objectives and the parameters corresponding to each objective below. In the formula, $\phi(\bm x; \mu, \Sigma)$ is used to denote the probability density function of the multivariate Gaussian distribution with mean $\mu$ and covariance matrix $\Sigma$. The search space for all GMM problems is $\mathcal X = [0,1]^2$.

The GMM problem used in the main text involves 2 objectives and uses $\alpha=0.9$. Additional experiments in Appendix~\ref{appdx:additional-problems} use additional independent GMMs to increase the number of objectives to $3$ and $4$, and evaluate performance with different settings of $\alpha \in \{0.7,0.8,0.9\}$. 
In all experiments with 3 and 4 objective GMM, we use additive noise, where $P(\bm \xi) = \mathcal{N}(\mu = \bm 0, \Sigma = 0.05 I_M)$. Many additional noise processes are discussed and evaluated in Appendix~\ref{appdx:gmm_noise_level}, using the same 2 objective GMM problem from the main text and $\alpha = 0.9$.

\begin{align*}
    f^{(i)}(\bm x) &= 2 \pi \sum_{j=1}^3 var^{(i)}_j cons^{(i)}_j \phi(\bm x; \mu = pos^{(i)}_j, \Sigma = var^{(i)}_j I_2) \\
    pos^{(i)}_j &= \left\{
    \begin{array}{cccc} 
        j=1 & j=2 & j=3 & \ \\
        {}[0.2, 0.2] & [0.8, 0.2] & [0.5, 0.7] & \text{if } i = 1 \\
        {}[0.07, 0.2] & [0.4, 0.8] & [0.85, 0.1] & \text{if } i = 2 \\
        {}[0.08, 0.21] & [0.45, 0.75] & [0.86, 0.1] & \text{if } i = 3 \\
        {}[0.09, 0.19] & [0.44, 0.72] & [0.89, 0.13] & \text{if } i = 4
   \end{array}
   \right. \\
    var^{(i)}_j &= \left\{
    \begin{array}{cccc} 
        j=1 & j=2 & j=3 & \ \\
        0.04 & 0.01 & 0.01 & \text{if } i = 1 \\
        0.04 & 0.01 & 0.0025 & \text{if } i = 2 \\
        0.04 & 0.01 & 0.0049 & \text{if } i = 3 \\
        0.0225 & 0.0049 & 0.0081 & \text{if } i = 4
   \end{array}
   \right. \\
    cons^{(i)}_j &= \left\{
    \begin{array}{cccc} 
        j=1 & j=2 & j=3 & \ \\
        0.5 & 0.7 & 0.7 & \text{if } i = 1 \\
        0.5 & 0.7 & 0.7 & \text{if } i = 2 \\
        0.5 & 0.7 & 0.9 & \text{if } i = 3 \\
        0.5 & 0.7 & 0.9 & \text{if } i = 4
   \end{array}
   \right.
\end{align*}

\textbf{Constrained Branin Currin}
We use the open source implementation available at \url{https://github.com/pytorch/botorch}. See \citet{daulton2020ehvi} for details.

\textbf{Disc Brake}
We use the open source implementation available at \url{https://github.com/ryojitanabe/reproblems}. See \citet{tanabe2020} for details.

\textbf{Penicillin Manufacturing Problem} 
We use the open-source implementation available at \url{https://github.com/HarryQL/TuRBO-Penicillin}. See \citet{liang2021scalable} for details. We adapt the problem by adding independent zero-mean Gaussian input noise to each parameter. The standard deviation of the input noise distribution for each parameter is listed in Table~\ref{table:penicillin_noise}.
\FloatBarrier
\begin{table*}[ht]
	\caption{Standard deviation for independent zero-mean Gaussian input noise for each parameter in the Penicillin Problem (reported as a percentage of the range of each parameter).}
	\label{table:penicillin_noise} 
	\centering
	\small{\tabcolsep=0.11cm
    \begin{tabular}{lc}
        \toprule
    	Parameter            & Noise Level          \\
    	\midrule
    	Culture Volume & 3\% \\ 
    	Biomass Concentration & 3\%\\
    	Temperature & 0.5\% \\
    	Glucose Concentration & 2\%\\
    	Substrate Feed Rate & 1\%\\
    	Substrate Feed Concentration &1\%\\
    	$\text{H}^+$ Concentration &1\%\\
        \bottomrule
    \end{tabular}
    }
\end{table*}
\FloatBarrier

\begin{table*}[ht]
	\caption{Reference points for negative versions (i.e. multiplying the objectives by -1 to make the goal maximization of all objectives) of all problems (except the GMM and Toy problems, which are designed for maximization).}
	\label{table:ref_points} 
	\centering
	\small{\tabcolsep=0.11cm
    \begin{tabular}{lc}
        \toprule
    	Problem             & Reference Point          \\
    	\midrule
    	Toy Problem & [-14.1951,  -3.1887]\\
    	Disc Brake & [-5.89, -3.27]\\
    	Constrained Branin Currin (heteroskedastic noise) & [-194.9376,  -12.2969]\\
    	Constrained Branin Currin (homoskedastic noise) & [-195.4667,  -12.4984]\\
    	Penicillin & [5.657, -64.1, -340.0] \\
    	GMM ($M=2, \alpha=0.9$, multiplicative noise) & [0.3752, 0.3548] \\
    	GMM ($M=2, \alpha=0.9$, correlated noise) & [0.2727, 0.2583] \\
    	GMM ($M=2, \alpha=0.8$, heteroskedastic noise) & [0.3465, 0.3036]\\
    	GMM ($M=2, \alpha=0.9$, homoskedastic noise, $\sigma=0.05$) & [0.2756, 0.2368]\\
    	GMM ($M=2, \alpha=0.9$, homoskedastic noise, $\sigma=0.1$) & [0.1047, 0.1112]\\
    	GMM ($M=2, \alpha=0.9$, homoskedastic noise, $\sigma=0.2$) & [0.0160, 0.0131]\\
    	GMM ($M=3, \alpha=0.9$, homoskedastic noise, $\sigma=0.05$) & [0.2733, 0.0051, 0.1538]\\
    	GMM ($M=3, \alpha=0.9$, homoskedastic noise, $\sigma=0.05$) & [0.2733, 0.0051, 0.1538]\\
    	GMM ($M=3, \alpha=0.8$, homoskedastic noise, $\sigma=0.05$) & [0.0420, 0.0180, 0.1952]\\
    	GMM ($M=3, \alpha=0.7$, homoskedastic noise, $\sigma=0.05$) & [0.0537, -0.0517, -0.0021]\\
    	GMM ($M=4, \alpha=0.9$, homoskedastic noise, $\sigma=0.05$) & [0.0264, -0.0396, 0.0619, 0.1689]\\
    	GMM ($M=4, \alpha=0.8$, homoskedastic noise, $\sigma=0.05$) & [0.0322, -0.0398, 0.1168, -0.0023]\\
    	
        \bottomrule
    \end{tabular}
    }
\end{table*}

\subsection{Evaluation Details}
\label{appdx:evaluation_details}
The global \mvar{} set is unknown and is approximated by taking the union of the \mvar{} sets of all designs evaluated across all methods and all replications. We take this approach because even using an evolutionary algorithm to optimize \mvar{} is nontrivial, since \mvar{} maps a single design to a set of points and is relatively computationally intensive to evaluate.
To evaluate the performance of a given method, we use $n_{\bm{\xi}}=512$ (except for 4 objective GMM, where we use $n_{\bm{\xi}}=256$) to compute a high-fidelity estimate of the \mvar{} set across the designs selected during optimization by the method. We similarly use the same $n_{\bm{\xi}}=512$ samples to estimate the true \mvar{} set (by considering all designs evaluated across all methods and all replications).

\section{Wall Times}
In Table~\ref{table:wall-clock-time}, we present the time it takes to run a single BO iteration using all algorithms we considered in this paper. We include the runtimes for the four problems from the main text. Wall times for additional problems including several problems with 3 and 4 objectives are provided in Table~\ref{table:additional-runtime} (these problems are described in Appendix~\ref{appdx:additional-problems}). As we discuss in Appendix~\ref{appdx:direct-mvar-opt}, \mvarnehvi{}, when not computationally infeasible, is quite expensive to run, making it an impractical method for most problems. Although \mvarnehvirff{} provides a much cheaper and highly performant approximation, we see that it also runs into computational limitations as the size of the \mvar{} set grows (e.g. for Penicillin with 3 objectives); this is also pronounced in Table~\ref{table:additional-runtime} on the problems with 3 and 4 objectives. For the \mars{} family of methods, we see overall quite reasonable runtimes, with the most expensive one, \marsnei{}, taking on average $41.4$ seconds on the most expensive problem instance we considered. \marsts{} offers a cheaper alternative to \marsnei{}, with its average runtime remaining below $10$ seconds on all experiments. As we show later in the Appendix, the performance of \marsts{} typically trails closely behind \marsnei{}, making it a strong alternative when the algorithm runtime is of the essence.

\FloatBarrier
\begin{table*}[ht]
	\newcommand{\bt}[1]{\bm{\textcolor{OliveGreen}{#1}}}
	\newcommand{\sn}[1]{\bm{\textcolor{RedOrange}{#1}}}
	\newcommand{\td}[1]{\bm{\textcolor{YellowOrange}{#1}}}
	\caption{The wall time (in seconds) per BO iteration. The experiments were timed on a shared cluster using 4 CPU cores, 1 GPU, and 16 GB of RAM. We report the mean and 2 standard errors over 20 trials. An N/A entry denotes that we did not attempt to run a particular experiment (e.g., because the method does not support the problem setting), whereas an OOM entry denotes that we attempted but the experiment did not run due to scalability limitations. The three top-performing algorithms, with respect to the final average \mvar{} \HV{} regret in each experiment are highlighted using $\bt{best}, \sn{second}, \td{third}$, respectively.}
	\label{table:wall-clock-time} 
	\centering
	\small{\tabcolsep=0.11cm
    \begin{tabular}{ccccc}
        \toprule
    	Algorithm              & GMM                       & Constrained BC        & Disc Brake            & Penicillin           \\
        ($d, M, V, \alpha$)    & ($2, 2, 0, 0.9$)          & ($2, 2, 1, 0.7$)      & ($4, 2, 4, 0.95$)     & ($7, 3, 0, 0.8$)      \\
    	\midrule
    	Sobol                  & $0.4 ~(\pm 0.6)$          & $0.9 ~(\pm 1.0)$      & $0.9 ~(\pm 1.0)$      & $2.9 ~(\pm 1.5)$      \\
    	\qNParego{}            & $2.5 ~(\pm 2.3)$          & $5.6 ~(\pm 4.4)$      & $7.9 ~(\pm 15.1)$     & $21.2 ~(\pm 25.6)$    \\
    	\qNEHVI{}              & $2.5 ~(\pm 2.0)$          & $9.8 ~(\pm 8.9)$      & $16.9 ~(\pm 10.3)$    & $23.6 ~(\pm 41.0)$    \\
    	\qNEHVI{}-RFF          & $0.8 ~(\pm 0.7)$          & $2.1 ~(\pm 3.2)$      & $2.7 ~(\pm 5.4)$      & \sn{$5.0 ~(\pm 3.9)$} \\
    	Exp-\qNParego{}        & $3.3 ~(\pm 2.4)$          & $10.7 ~(\pm 11.1)$    & $23.3 ~(\pm 166.3)$   & $121.9 ~(\pm 119.0)$  \\
    	\expnehvirff{}         & $0.9 ~(\pm 1.0)$          & $7.2 ~(\pm 10.8)$     & \bt{$3.3 ~(\pm 7.3)$} & \td{$5.1 ~(\pm 3.3)$} \\
    	\marsnei{}             & $3.9 ~(\pm 3.0)$          & \sn{$8.4 ~(\pm 5.3)$} & $10.3 ~(\pm 45.4)$    & \bt{$41.4 ~(\pm 60.1)$}\\
    	\marsts{}              & \td{$3.3 ~(\pm 3.6)$}     & \td{$3.3 ~(\pm 3.7)$} & $3.1 ~(\pm 6.5)$      & $6.7 ~(\pm 5.8)$    \\
    	\marsucb{}             & $8.2 ~(\pm 11.0)$         & N/A                   & N/A                   & $12.3 ~(\pm 14.1)$   \\
    	\mvarnehvi{}           & \bt{$145.6 ~(\pm 130.7)$} & N/A                   & \sn{$243.2 ~(\pm 154.)$}& N/A                \\
    	\mvarnehvirff{}        & \sn{$1.8 ~(\pm 1.7)$}     & \bt{$10.0 ~(\pm 11.9)$}& \td{$2.9 ~(\pm 3.1)$}& OOM                \\
        \bottomrule
    \end{tabular}
    }
\end{table*}
\FloatBarrier

\section{Additional Experiments}
\label{appdx:additional_experiments}
\subsection{Additional Test Problems}\label{appdx:additional-problems}
In addition to the problems presented in the main text, we studied the performance of the algorithms on the Toy problem used for illustrations in Figures~\ref{fig:1d_toy} and \ref{fig:bijection}, and the 3 and 4 objective variations of the GMM problem. These problems are described in detail in Appendix~\ref{appdx:problem_details}. 
In addition to the acquisition functions presented in the main text, we ran 
\expnehvirff{}, \mvarnehvirff{}, \marsucb{}, and \marsts{}.
The results of these experiments are presented in Figure~\ref{fig:toy-results} for the Toy problem and Figure~\ref{fig:gmm-results_3obj_4obj} for the GMM problems, and the runtimes of the algorithms are reported in Table~\ref{table:additional-runtime}. 
We see that \marsnei{} is overall the best performing method, with \marsts{} typically following closely. \marsucb{} appears to be less reliable, demonstrating significantly worse performance in most experiments. In addition, the \mvarnehvirff{} is missing from all but two of the experiments, which is due to the method running into the scalability limitations discussed in Appendix \ref{appdx:direct-mvar-opt}.

\begin{figure*}[ht]
    \centering
    \includegraphics[width=0.5\linewidth]{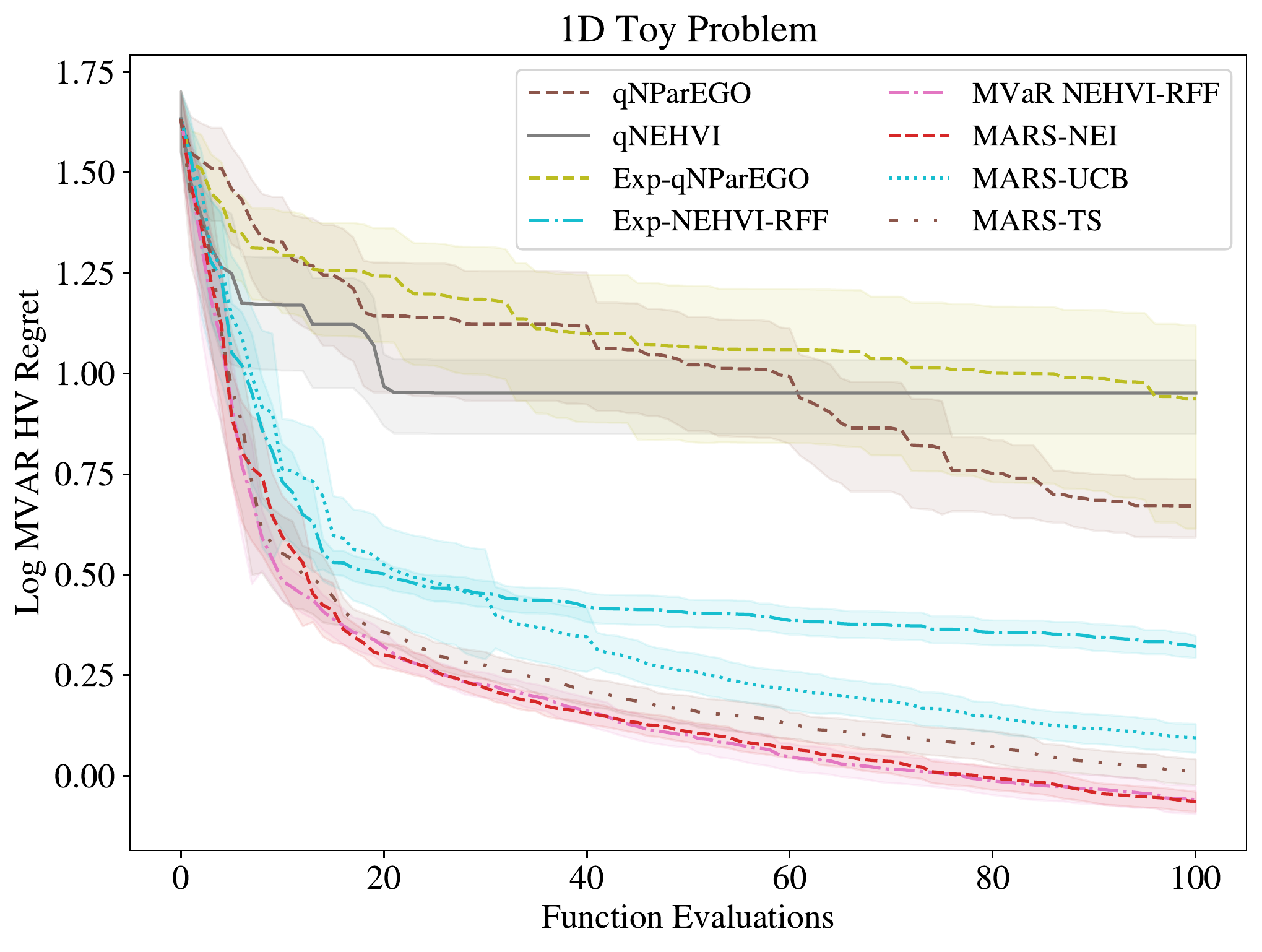}
    \caption{The log \mvar{} hypervolume regret on the Toy problem. We plot means and 2 standard errors across 20 trials.}
    \label{fig:toy-results}
\end{figure*}

\begin{figure*}[ht]
    \centering
    \includegraphics[width=\linewidth]{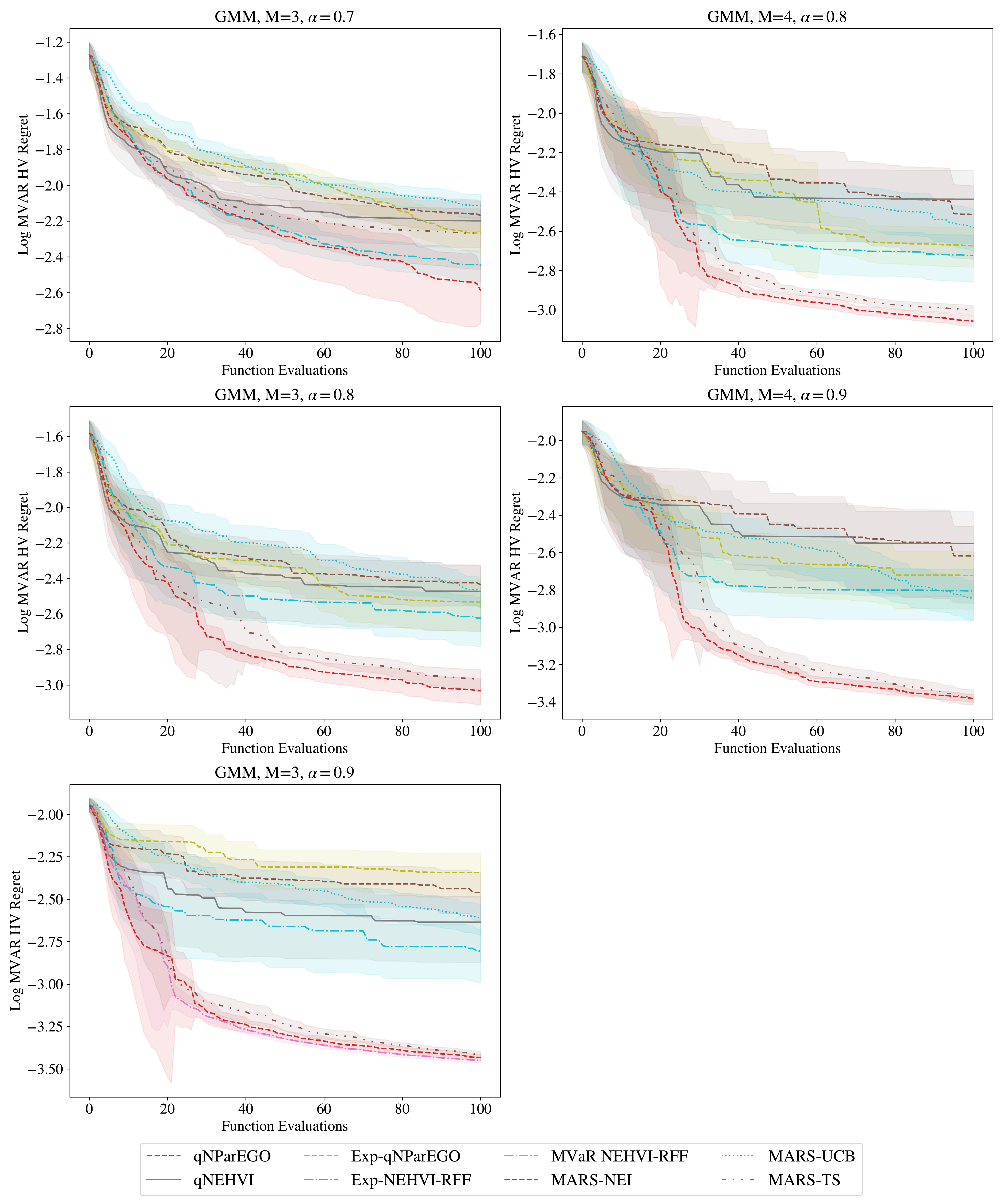}
    \caption{The log \mvar{} hypervolume regret on 3 and 4-Objective GMM problems. We plot means and 2 standard errors across 20 trials.}
    \label{fig:gmm-results_3obj_4obj}
\end{figure*}

\FloatBarrier
\begin{table*}[ht]
	\newcommand{\bt}[1]{\bm{\textcolor{OliveGreen}{#1}}}
	\newcommand{\sn}[1]{\bm{\textcolor{RedOrange}{#1}}}
	\newcommand{\td}[1]{\bm{\textcolor{YellowOrange}{#1}}}
	\caption{The wall time (in seconds) per BO iteration for the additional problems. The experiments were timed on a shared cluster using 4 CPU cores, 1 GPU, and 16 GB of RAM. We report the mean and 2 standard errors over 20 trials. An OOM entry denotes that the experiment did not run due to scalability limitations. The three top-performing algorithms, with respect to the final average \mvar{} \HV{} regret in each experiment are highlighted using $\bt{best}, \sn{second}, \td{third}$, respectively.}
	\label{table:additional-runtime} 
	\centering
	\small{\tabcolsep=0.11cm
    \begin{tabular}{ccccccc}
        \toprule
    	Algorithm          & Toy Problem            & \multicolumn{3}{c}{GMM, M=3}                                            & \multicolumn{2}{c}{GMM, M=4}              \\
        ($d, M, \alpha$)   & ($1, 2, 0.9$)          & ($2, 3, 0.7$)          & ($2, 3, 0.8$)          & ($2, 3, 0.9$)         & ($2, 4, 0.8$)           & ($2, 4, 0.9$)      \\
    	\midrule
    	Sobol              & $0.4 ~(\pm 0.3)$       & $1.9 ~(\pm 2.2)$       & $1.2 ~(\pm 1.4)$       & $0.5 ~(\pm 0.3)$      & $3.3 ~(\pm 3.3)$        & $0.8 ~(\pm 0.2)$ \\
    	\qNParego{}        & $2.2 ~(\pm 2.0)$       & $4.5 ~(\pm 2.6)$       & $3.4 ~(\pm 2.6)$       & $1.4 ~(\pm 1.5)$      & $8.9 ~(\pm 10.2)$       & $4.5 ~(\pm 12.9)$ \\
    	\qNEHVI{}          & $3.0 ~(\pm 2.5)$       & $23.3 ~(\pm 26.4)$     & $20.2 ~(\pm 22.8)$     & $23.8 ~(\pm 43.8)$    & $57.3 ~(\pm 75.4)$      & $48.0 ~(\pm 55.2)$ \\
    	\qNEHVI{}-RFF      & $0.4 ~(\pm 0.3)$       & \td{$3.1 ~(\pm 5.8)$}  & $1.7 ~(\pm 1.0)$       & $0.7 ~(\pm 0.5)$      & $10.0 ~(\pm 91.8)$      & $2.1 ~(\pm 4.0)$ \\
    	Exp-\qNParego{}    & $13.1 ~(\pm 23.5)$     & $8.0 ~(\pm 12.8)$      & $6.2 ~(\pm 6.4)$       & $13.4 ~(\pm 54.2)$    & $10.6 ~(\pm 10.6)$      & $8.1 ~(\pm 18.3)$ \\
    	\expnehvirff{}     & $0.4 ~(\pm 0.3)$       & \sn{$4.0 ~(\pm 10.3)$} & \td{$2.1 ~(\pm 6.3)$}  & $0.8 ~(\pm 0.8)$      & \td{$5.3 ~(\pm 4.0)$}   & $1.9 ~(\pm 1.5)$ \\
    	\marsnei{}         & \bt{$10.4 ~(\pm 21.1)$}& \bt{$11.1 ~(\pm 14.4)$}& \bt{$10.7 ~(\pm 23.0)$}& \sn{$8.8 ~(\pm 13.2)$}& \bt{$27.5 ~(\pm 221.1)$}& \bt{$13.5 ~(\pm 15.9)$} \\
    	\marsts{}          & \td{$0.9 ~(\pm 0.7)$}  & $5.7 ~(\pm 8.9)$       & \sn{$3.8 ~(\pm 2.9)$}  & \td{$1.8 ~(\pm 2.5)$} & \sn{$9.9 ~(\pm 27.1)$}  & \sn{$4.2 ~(\pm 4.2)$} \\
    	\marsucb{}         & $6.7 ~(\pm 7.4)$       & $14.4 ~(\pm 22.5)$     & $10.4 ~(\pm 11.1)$     & $10.3 ~(\pm 12.3)$    & $17.6 ~(\pm 19.1)$      & \td{$12.1 ~(\pm 11.9)$} \\
    	\mvarnehvirff{}    & \sn{$2.6 ~(\pm 2.2)$}  & OOM                    & OOM                    & \bt{$3.0 ~(\pm 2.1)$} & OOM                     & OOM \\
        \bottomrule
    \end{tabular}
    }
\end{table*}
\FloatBarrier

\subsection{Comparison with \qNEHVI{} Based Methods}

\FloatBarrier
\begin{table*}[ht]
	\newcommand{\bt}[1]{\bm{\textcolor{OliveGreen}{#1}}}
	\newcommand{\sn}[1]{\bm{\textcolor{RedOrange}{#1}}}
	\newcommand{\td}[1]{\bm{\textcolor{YellowOrange}{#1}}}
	\caption{The final \mvar{} HV regret obtained using each method. We report the mean and 2 standard errors over 20 trials. An N/A entry denotes that we did not attempt to run a particular experiment (e.g., because the method does not support the problem setting), whereas an OOM entry denotes that we attempted but the experiment did not run due to scalability limitations. The three top-performing algorithms, with respect to the final average \mvar{} \HV{} regret in each experiment are highlighted using $\bt{best}, \sn{second}, \td{third}$, respectively.}
	\label{table:final_regret} 
	\centering
	\small{\tabcolsep=0.11cm
    \begin{tabular}{ccccccc}
        \toprule
    	Algorithm              & GMM                    & Constrained BC         & Disc Brake             & Penicillin             & Toy Problem            & GMM, M=4               \\
        ($d, M, V, \alpha$)    & ($2, 2, 0, 0.9$)       & ($2, 2, 1, 0.7$)       & ($4, 2, 4, 0.95$)      & ($7, 3, 0, 0.8$)       & ($1, 2, 0, 0.9$)       & ($2, 4, 0, 0.8$)       \\
        Scale                  & $1 \times 10^{-4}$     & $1 \times 10^{1}$      & $1 \times 10^{0}$      & $1 \times 10^{4}$      & $1 \times 10^{0}$      & $1 \times 10^{-3}$     \\
    	\midrule
    	Sobol                  & $65.37 ~(\pm 10.23)$   & $3.12 ~(\pm 0.36)$     & $19.39 ~(\pm 1.35)$    & $1.68 ~(\pm 0.05)$     & $2.38 ~(\pm 0.27)$     & $7.39 ~(\pm 1.35)$     \\
    	\qNParego{}            & $21.54 ~(\pm 7.68)$    & $3.28 ~(\pm 0.41)$     & $19.10 ~(\pm 1.83)$    & $1.73 ~(\pm 0.09)$     & $4.69 ~(\pm 0.78)$     & $3.04 ~(\pm 1.24)$     \\
    	\qNEHVI{}              & $31.21 ~(\pm 14.34)$   & $5.16 ~(\pm 0.76)$     & $12.33 ~(\pm 1.77)$    & $1.65 ~(\pm 0.12)$     & $8.95 ~(\pm 1.87)$     & $3.66 ~(\pm 1.47)$     \\
    	\qNEHVI{}-RFF          & $30.54 ~(\pm 12.13)$   & $5.60 ~(\pm 0.81)$     & $17.11 ~(\pm 1.97)$    & \sn{$1.38 ~(\pm 0.09)$}& $5.97 ~(\pm 0.77)$     & $2.18 ~(\pm 0.62)$     \\
    	Exp-\qNParego{}        & $10.29 ~(\pm 2.92)$    & $2.73 ~(\pm 0.77)$     & $3.43 ~(\pm 0.57)$     & $1.52 ~(\pm 0.10)$     & $8.65 ~(\pm 4.53)$     & $2.12 ~(\pm 0.47)$     \\
    	\expnehvirff{}         & $13.83 ~(\pm 5.95)$    & $1.23 ~(\pm 0.05)$     & \bt{$1.02 ~(\pm 0.05)$}& \td{$1.46 ~(\pm 0.06)$}& $2.09 ~(\pm 0.13)$     & \td{$1.90 ~(\pm 0.50)$}\\
    	\marsnei{}             & $1.45 ~(\pm 0.12)$     & \sn{$0.70 ~(\pm 0.04)$}& $2.96 ~(\pm 0.13)$     & \bt{$1.06 ~(\pm 0.09)$}& \bt{$0.86 ~(\pm 0.05)$}& \bt{$0.88 ~(\pm 0.05)$}\\
    	\marsts{}              & \td{$0.92 ~(\pm 0.07)$}& \td{$0.85 ~(\pm 0.05)$}& $3.20 ~(\pm 0.21)$     & $1.49 ~(\pm 0.05)$     & \td{$1.02 ~(\pm 0.08)$}& \sn{$1.00 ~(\pm 0.06)$}\\
    	\marsucb{}             & $26.69 ~(\pm 3.28)$    & N/A                    & N/A                    & $1.77 ~(\pm 0.01)$     & $1.24 ~(\pm 0.10)$     & $2.61 ~(\pm 0.64)$     \\
    	\mvarnehvi{}           & \bt{$0.57 ~(\pm 0.01)$}& N/A                    & \sn{$1.02 ~(\pm 0.04)$}& N/A                    & $1.76 ~(\pm 0.15)$     & N/A                    \\
    	\mvarnehvirff{}        & \sn{$0.64 ~(\pm 0.03)$}& \bt{$0.57 ~(\pm 0.02)$}& \td{$1.92 ~(\pm 0.06)$}& OOM                    & \sn{$0.87 ~(\pm 0.07)$}& OOM                    \\
        \bottomrule
    \end{tabular}
    }
\end{table*}
\FloatBarrier

As discussed in Section \ref{sec:opt_mvar} and detailed in Appendix \ref{appdx:direct-mvar-opt}, \mvar{} can be optimized using \qNEHVI{} based methods. However, this comes with serious computational challenges, some of which are eased if we use \qNEHVI{} with RFF draws. 
In this section, we present results comparing \marsnei{} with
\expnehvirff{} and \mvarnehvirff{} using the test problems from the main text. \mvarnehvi{} was also ran on two of the problems to provide a point of reference for its performance.
In addition, we present both the standard \qNEHVI{}, which uses GPs, as well as its RFF counterpart as a reference point on 
how the use of RFFs affects the performance of the methods on a given problem.

\begin{figure*}[ht]
    \centering
    \includegraphics[width=\linewidth]{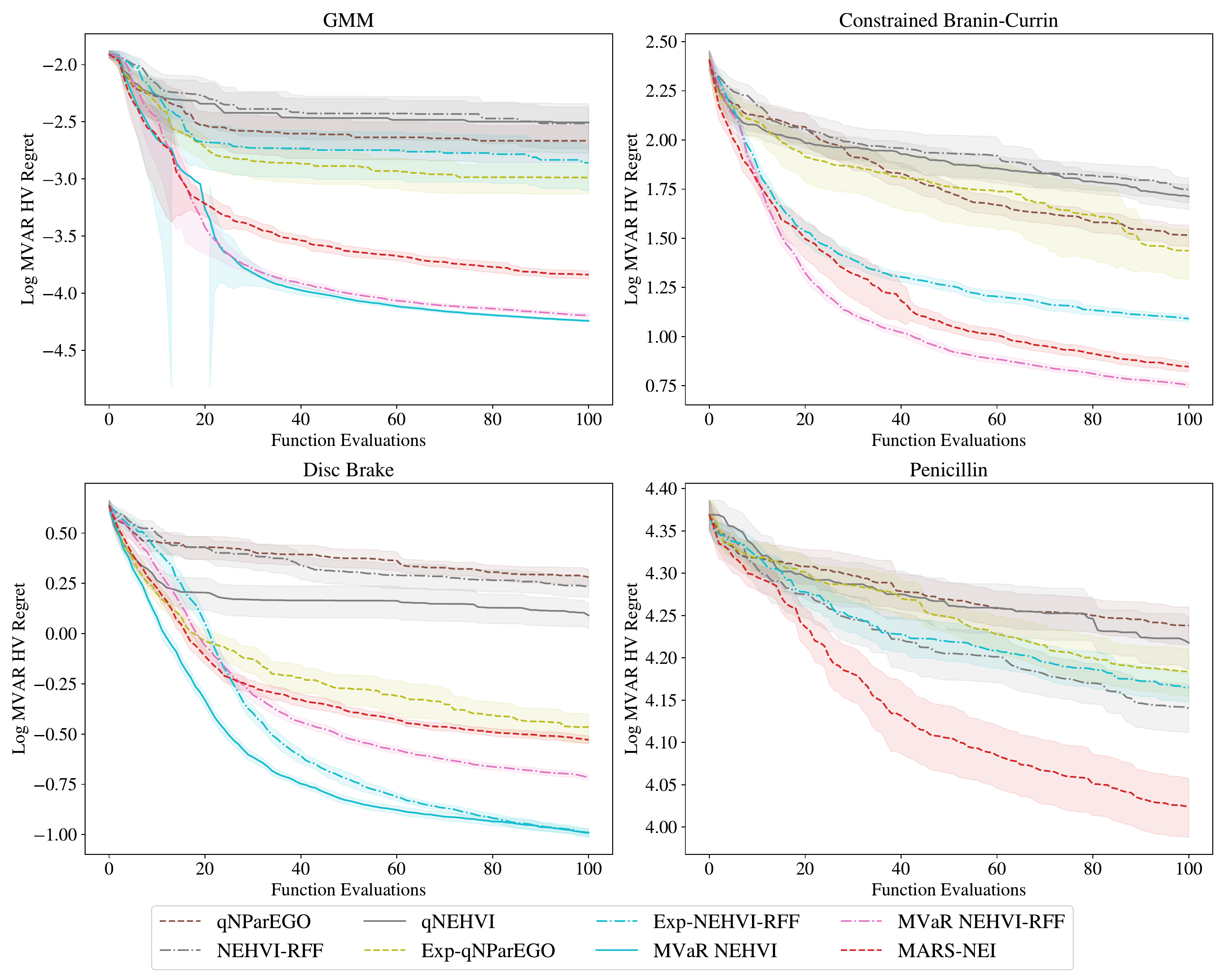}
    \caption{A comparison of the methods from evaluated in the main text with additional \qNEHVI{}-based methods: \textsc{NEHVI-RFF}, \expnehvirff{}, \mvarnehvi{}, and \mvarnehvirff{}.}
    \label{fig:nehvi-results}
\end{figure*}

Figure \ref{fig:nehvi-results} and Table~\ref{table:final_regret} show that although there are instances where \qNEHVI{} methods outperform \marsnei{}, \marsnei{} remains competitive throughout. We see that \qNEHVI{} and its RFF counterpart are competitive, without a clear winner across problems. The expectation \qNEHVI{} outperforms the expectation \qNParego{} in all but one problem, highlighting the benefit of using a method that aims to directly maximize the hypervolume in an MO setting. Lastly, it is worth highlighting that the \mvarnehvirff{} is not reported for the Penicillin problem, which is due to the scalability limitations of IEP (discussed in Appendix~\ref{appdx:direct-mvar-opt}), preventing the hypervolume improvement computations from running with the available GPU memory. 
See Appendix~\ref{appdx:additional-problems} for additional experiments comparing \qNEHVI{} based methods.

\subsection{Comparison of Methods Optimizing \mvar{}}
In Figure~\ref{fig:mvar-results} and Table~\ref{table:final_regret}, we present results from the test problems from the main text showing the performance of all acquisition functions that we proposed for optimizing \mvar{}. Results for additional problems comparing are presented in Appendix~\ref{appdx:additional-problems}. We observe that \mvarnehvirff{} is a reasonably cheap method (see Table~\ref{table:wall-clock-time} for runtimes) that performs quite well on smaller problem instances. However, in larger problem instances where the size of the \mvar{} set is large ($>10$) it starts running into scalability limitations and no-longer works. Among the class of \mars{} methods, we find that \marsnei{} consistently performs quite well, with \marsts{} being a close second and a slightly cheaper alternative. On the other hand, \marsucb{} proves to be not as reliable, performing worse than non-robust methods in some problems. We attribute this to its fundamental reliance on the parameter $\zeta^{(i)}_{n+1}$, which would have to be tuned on a problem by problem basis to optimize its performance.  In light of all the results, we recommend \marsnei{} as a broadly applicable and high performing method for optimizing \mvar{}, and recommend \mvarnehvirff{} as an alternative when the size of the \mvar{} set is small.

\begin{figure*}[ht]
    \centering
    \includegraphics[width=\linewidth]{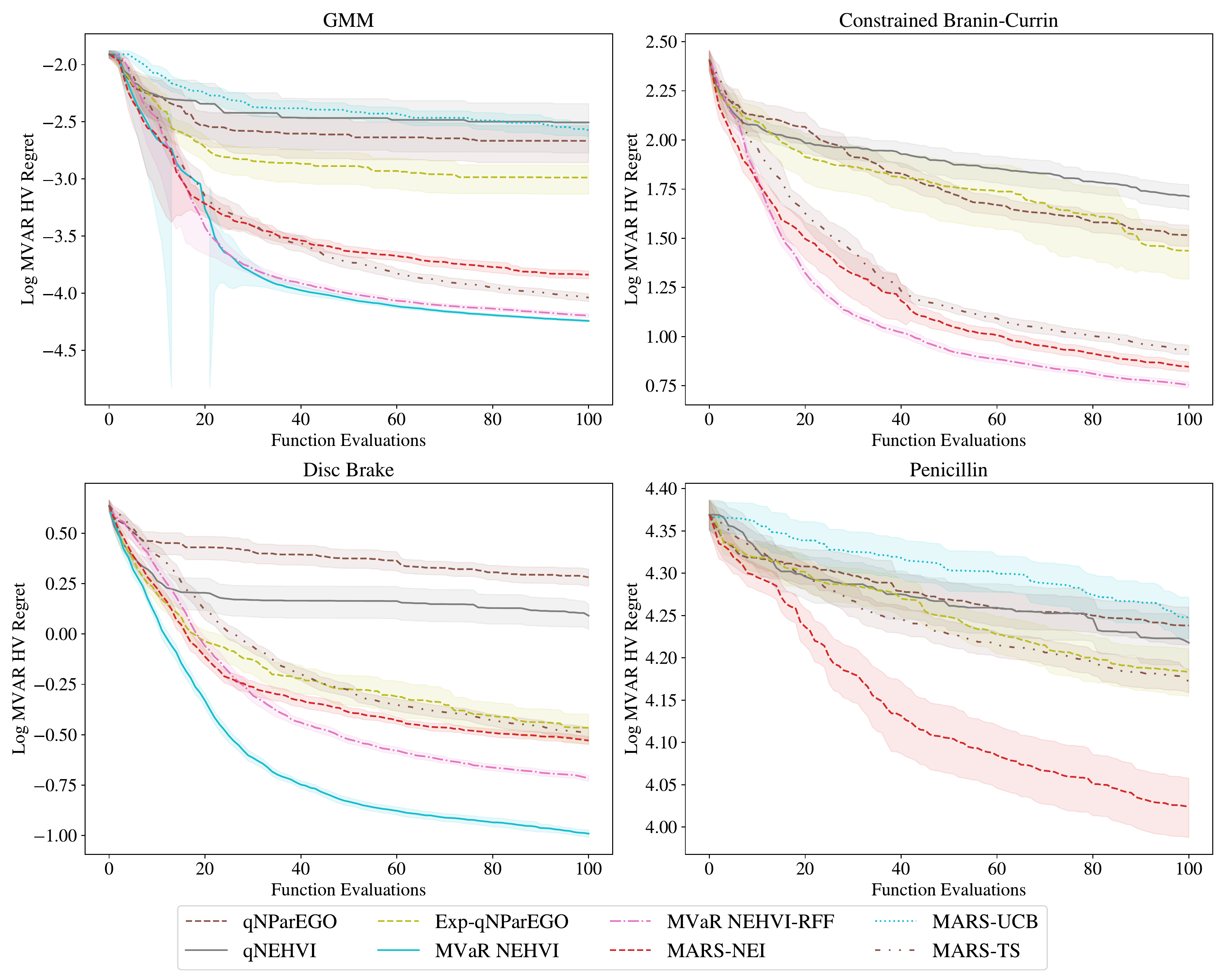}
    \caption{A comparison of the methods from evaluated in the main text with additional methods for optimizing \mvar{}: \marsts{}, \marsucb{}, \mvarnehvirff{}, and \mvarnehvi{}.}
    \label{fig:mvar-results}
\end{figure*}

\subsection{Parallel Evaluations}
\label{appdx:batch_results}
In Table \ref{table:batch-size-results}, we present results demonstrating the effect of varying batch size on the optimization performance of \marsnei{} and \mvarnehvirff{}. As expected, the results show that the performance of both algorithms degrade as the batch size increases. Interestingly, the degradation is rather minimal for \marsnei{}, while the effects of increasing batch size seem to be rather significant for \mvarnehvirff{}. We see that \mvarnehvirff{} underperforms all versions of \marsnei{} even with a batch size of $2$. In addition, there are fewer results presented for \mvarnehvirff{}, which is due to the hypervolume improvement computations using \iep{} running into scalability limits (recall that \iep{} scales exponentially in the size of the joint \mvar{} set of the current batch of candidates). These results make a strong case for using \marsnei{} whenever one wished to evaluate candidates in parallel.


\begin{table*}[ht]
	\caption{Effect of the batch size on optimization performance. We report the final \mvar{} HV regret and 2 standard errors from 20 trials. OOM denotes that the method ran into scalability issues and did not run.}
	\label{table:batch-size-results} 
	\centering
    \begin{tabular}{ccc}
        \toprule
    	\multirow{2}{*}{Algorithm}      & 1D Toy Problem             & Constrained Branin-Currin     \\
    	                                & ($d=1, M=2$)               & ($d=2, M=2, V=1$)             \\
    	\midrule
    	\marsnei{}, $q=1$               & $0.86 ~(\pm 0.05)$         & $5.43 ~(\pm 0.27)$ \\
    	\marsnei{}, $q=2$               & $0.92 ~(\pm 0.04)$         & $5.20 ~(\pm 0.21)$ \\
    	\marsnei{}, $q=4$               & $0.92 ~(\pm 0.07)$         & $5.58 ~(\pm 0.38)$ \\
    	\marsnei{}, $q=8$               & $0.93 ~(\pm 0.08)$         & $5.59 ~(\pm 0.41)$ \\
    	\mvar{} \qNEHVI{} RFF, $q=1$    & $0.87 ~(\pm 0.07)$         & $4.36 ~(\pm 0.13)$ \\
    	\mvar{} \qNEHVI{} RFF, $q=2$    & $1.03 ~(\pm 0.05)$         & $6.16 ~(\pm 0.36)$ \\
    	\mvar{} \qNEHVI{} RFF, $q=4$    & $1.21 ~(\pm 0.07)$         & OOM \\
    	\mvar{} \qNEHVI{} RFF, $q=8$    & OOM                        & OOM \\
        \bottomrule
    \end{tabular}
\end{table*}

\subsection{Effect of Noise Level} \label{appdx:gmm_noise_level}
The location of robust designs on a problem depends on many factors, including the magnitude of the input noise. In the edge case where there is no input noise present, the robust designs will be the same as the nominal Pareto optimal designs. As the magnitude of the input noise increases, the robust designs may start to deviate from the nominally optimal designs, with the exact behavior typically being unpredictable and heavily dependent on the problem and noise structure. In Figure \ref{fig:noise-level-results}, we present results on the GMM problem from the main text under various noise models, demonstrating how the performance of the algorithms change in response to changes in the noise model. The list of noise models considered in this study are as follows:
\begin{itemize}[noitemsep]
    \item Homoscedastic normal noise, std = 0.05: $P(\bm \xi) = \mathcal{N}(\mu = \bm 0, \Sigma = 0.05 I_2)$.
    \item Homoscedastic normal noise, std = 0.10: $P(\bm \xi) = \mathcal{N}(\mu = \bm 0, \Sigma = 0.10 I_2)$.
    \item Homoscedastic normal noise, std = 0.20: $P(\bm \xi) = \mathcal{N}(\mu = \bm 0, \Sigma = 0.20 I_2)$.
    \item Heteroscedastic normal noise, std = 0.2X: $P(\bm \xi; \bm x) = \mathcal{N}(\mu = \bm 0, \Sigma = 0.2 S)$ with $S = [x_1, 0; 0, x_2]$ is the $2 \times 2$ matrix with the given entries.
    \item Correlated normal noise: $P(\bm \xi) = \mathcal{N}(\mu = \bm 0, \Sigma = 0.001 S)$ with $S = [2.5, -2; -2, 2.5]$.
    \item Multiplicative noise model from the main text: $\xxi := \bm x \bm \xi'$, where $\bm \xi' \sim \mathcal{N}(\mu=\bm 1, \Sigma = 0.07 I_2)$.
\end{itemize}

We see that \marsnei{} consistently outperforms the alternatives, except for the under homoskedastic noise with a large standard deviation of $0.2$. In this setting, the \textsc{Exp-}\qNParego{} is slightly ahead, which is not too surprising since the expectation and \mvar{} optimal designs happen to be in the same part of the solution space under this noise model.

\begin{figure*}[ht]
    \centering
    \includegraphics[width=\linewidth]{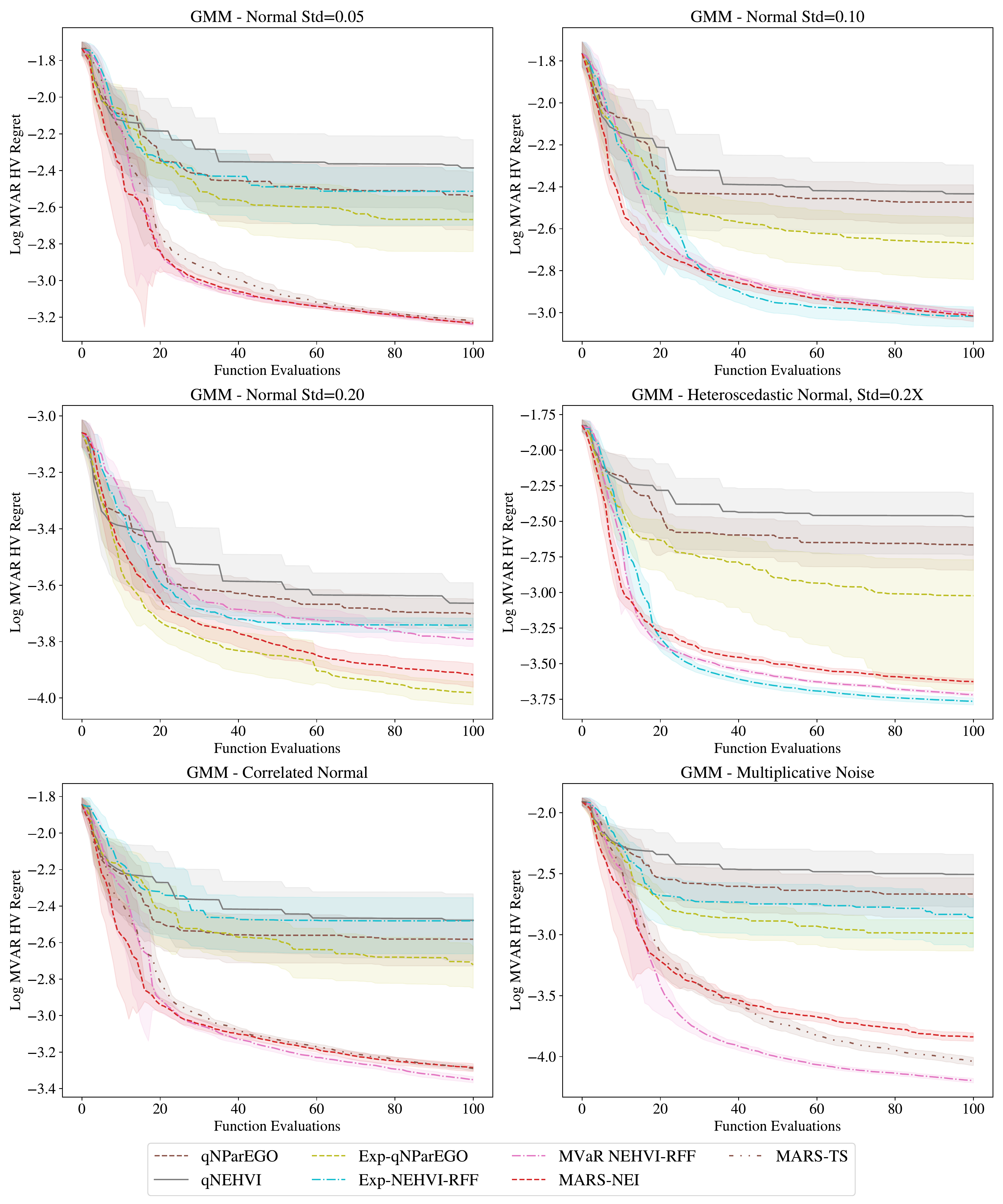}
    \caption{The effect of different noise models on the GMM problem.}
    \label{fig:noise-level-results}
\end{figure*}

In addition to the GMM problem, the Constrained Branin Currin problem was also ran under multiple noise models. Along with the heteroscedastic noise model used in the main text, we also studied it using a simple homoscedastic noise model: $P(\bm \xi) = \mathcal{N}(\mu = \bm 0, \Sigma = 0.05 I_2)$. The plots for these are shown in Figure~\ref{fig:bc-noise-level-results}, demonstrating that \mars{} performs consistently under both noise models.

\begin{figure*}[ht]
    \centering
    \includegraphics[width=\linewidth]{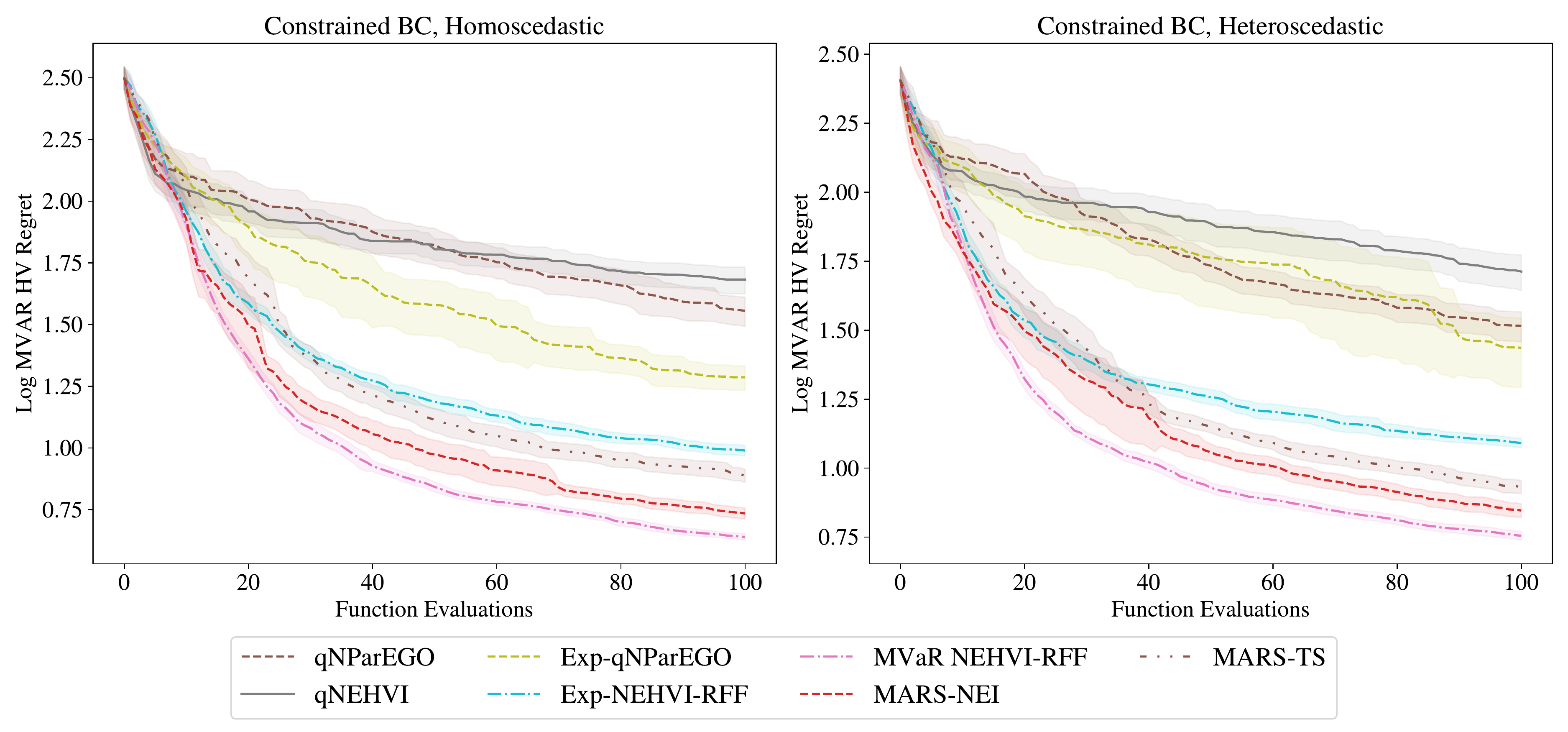}
    \caption{The effect of different noise models on the Constrained Branin Currin problem.}
    \label{fig:bc-noise-level-results}
\end{figure*}

\subsection{Effect of $n_{\bm{\xi}}$ on Optimization Performance}
In a final side study, we analyze the effect of varying $n_{\bm\xi}$ on the optimization performance of the acquisition functions optimizing expectation and \mvar{}. We attempted to run the Constrained Branin Currin and Disc Brake experiments with $n_{\bm\xi} \in \{8, 16, 32, 64, 96, 128\}$ and included the results of the algorithms that successfully completed without running into scalability issues, such as getting an out-of-memory error.

\begin{figure}[ht]
\def\nxiplotwidth{0.4\linewidth}
    \centering
    \includegraphics[width=\nxiplotwidth]{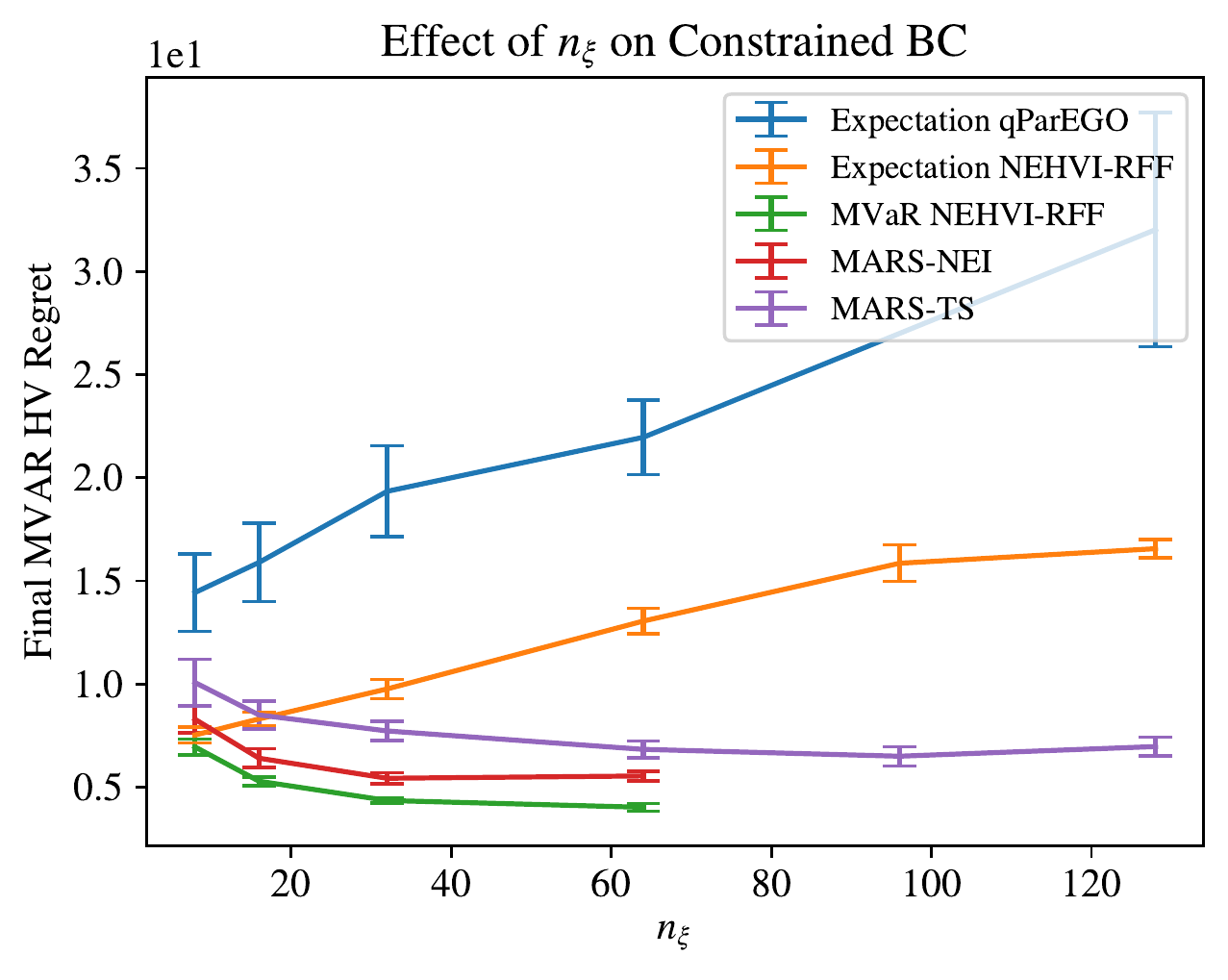}
    \includegraphics[width=\nxiplotwidth]{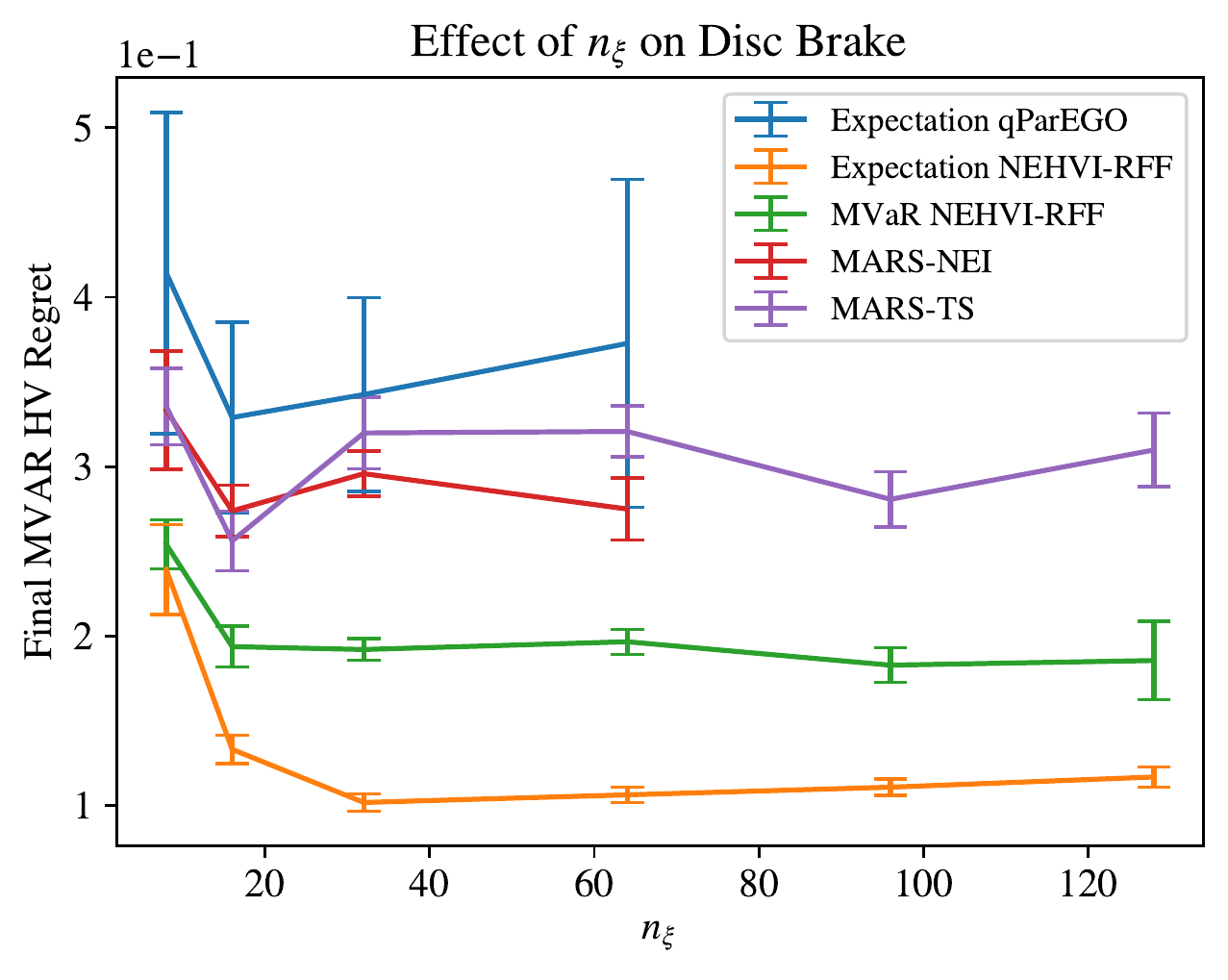}
    \caption{Final \mvar{} hypervolume regret obtained using different $n_{\bm \xi}$ on Constrained Branin-Currin and Disc Brake.}
    \label{fig:n_xi_plots}
\end{figure}

The results are shown in Figure \ref{fig:n_xi_plots}.
We see that the methods using GPs (\textsc{Exp-}\qNParego{} and \marsnei{}), do not scale beyond $n_{\bm\xi} = 64$ due to the cubic  complexity (with respect to the number of points and $n_{\bm\xi}$) of posterior sampling (see Appendix~\ref{appdx:complexity}, the remaining methods avoid this issue since the RFF draws are deterministic functions). 
Overall, the results for \mvar{} methods show that too small of an $n_{\bm{\xi}}$ leads to poor performance, while increasing it much beyond our default value of $n_{\bm{\xi}}=32$ does not yield any a significant benefit, at least in these problems. On Constrained Branin-Currin, we observe that the performance of the expectation methods degrade as $n_{\bm\xi}$ increases, which we attribute to these methods becoming more proficient at differentiating the expectation and \mvar{} optimal regions (which are not co-located), thus focusing their sampling away from the \mvar{} optimal region.

\section{Efficient Methods for Computing
\mvar{}}\label{appdx:efficient_mvar}
Before going into the discussion, we note that the this section presents the \mvar{} computation for a random variable to be minimized. This simplifies the discussion by enabling the use of common terms such as CDF and quantile, since the \mvar{}, as originally defined in \citet{Prekopa2012MVaR}, corresponds to the $\alpha$ quantile of a random variable to be minimized. In the maximization setting studied in this paper, the \mvar{}, as defined in Definition~\ref{def:mvar}, can be computed by first computing the \mvar{} of the negative of the random variable, as discussed here, then negating the result.

The existing algorithms for computing the \mvar{} (e.g., those presented by \citet{Prekopa2012MVaR}) presume the availability of a cheap to evaluate CDF of the random variable of interest. In the general setting we consider, with $\bm f$ being an arbitrary function, such as a sample path of the GP, the random variable $\bm f (\xxi)$ (induced by $\bm \xi \sim P(\bm \xi)$) does not admit a known CDF. 
Thus, to compute a QMC estimate of \mvar{}, we first need to compute the empirical CDF corresponding to the QMC samples of $\bm f (\xxi)$.

Computing the empirical CDF of a random variable is a conceptually simple operation. All we need to do is to count the number of samples that are dominated by a given point and divide that by the total number of samples. Keeping all other objectives constant, the empirical CDF can be seen as a step function over the domain of the given objective that changes its value only at the points that correspond to one of the sample values of that objective. Since there are $n_{\bm{\xi}}$ samples, ignoring the possibility that some samples may have equal value for some objectives, this defines an $M$-dimensional grid with $n_{\bm{\xi}}$ points on each dimension, on which the empirical CDF can change its value. Thus, to fully compute the empirical CDF, we need to compute the number of samples that dominate a grid of $n_{\bm{\xi}}^M$ points, at a cost of $\mathcal{O}(M)$ comparison per point, leading to a total $\mathcal{O}(n_{\bm{\xi}}^M M)$ cost for computing the empirical CDF.

Once the empirical CDF is computed, the \mvar{} set can easily be computed by taking the Pareto set of the points on the grid with a CDF greater than or equal to $\alpha$. This part of the computation, fortunately, has a lower complexity than the CDF computations.

A careful reader might have noticed that the \mvar{} (in the minimization setting) is bounded from below by the independent \var{} of each objective and bounded from above by the maximum value observed for that given objective. We can leverage this fact to lower the cost of computing \mvar{} significantly. Instead of considering the full grid of $n_{\bm{\xi}}^M$ points, we can only compute the empirical CDF for the grid formed by the objective values that exceed the independent \var{} of each objective, of which there are $(1-\alpha) n_{\bm{\xi}}$, reducing the complexity of \mvar{} computations to $\mathcal{O}((1-\alpha)^M n_{\bm{\xi}}^M M)$. 

Within our code base, we provide two implementations for computing \mvar{}. One implementation is geared towards batched calculations with small to moderate $n_{\bm{\xi}}$, and the other is geared towards less memory intensive calculations with large $n_{\bm{\xi}}$. Both utilize efficient vectorized computations to exploit modern computing hardware. However, even with these highly optimized implementations, computing \mvar{} can easily become a bottleneck in a BO method, since \mvar{} has to be computed many times during acquisition optimization.
Thus, the methods for direct optimization of \mvar{} that we present can still be prohibitively expensive unless the objective evaluations are significantly expensive.

\section{Multivariate Extensions of \cvar{}} \label{appdx:multivariate-cvar}
\cvar{} is another popular risk measure that is commonly used with univariate random variables \citep{ROCKAFELLAR2002CVaR}. Similar to \var{}, it has also been extended to the multi-variate case by various authors (e.g., \citet{Cousin2014MCVaR} and \citet{MERAKLI2018300}). 
However, these extensions in general do not admit a natural interpretation, whereas \mvar{} provides interpretable objective specifications that the objectives (under input noise) for a given design will meet with high-probability. 
\end{document}